\documentclass{article}

\usepackage{fullpage}

\usepackage[utf8]{inputenc}
\usepackage[T1]{fontenc}

\usepackage{booktabs}
\usepackage{array}
\usepackage{amsfonts}
\usepackage{nicefrac}
\usepackage{microtype}
\usepackage{xcolor}
\usepackage[ruled]{algorithm2e}
\usepackage{tikz}
\usepackage{amsmath}
\usepackage{amsthm}
\usepackage{amssymb,mathtools}
\usepackage{enumitem}
\usepackage{natbib}

\usepackage[hypertexnames=false,bookmarks=false]{hyperref}
\usepackage{cleveref}
\usepackage{autonum}

\newtheorem{theorem}{Theorem}
\newtheorem{corollary}{Corollary}
\newtheorem{lemma}{Lemma}
\newtheorem{proposition}{Proposition}
\newtheorem{definition}{Definition}
\newtheorem{remark}{Remark}
\newtheorem{example}{Example}

\newcommand{\Cal}{\operatorname{CalErr}}
\newcommand{\MCerr}{\operatorname{MCerr}}

\newcommand{\cA}{\mathcal{A}}
\newcommand{\cD}{\mathcal{D}}
\newcommand{\cE}{\mathcal{E}}
\newcommand{\cF}{\mathcal{F}}
\newcommand{\cG}{\mathcal{G}}
\newcommand{\cH}{\mathcal{H}}
\newcommand{\cJ}{\mathcal{J}}
\newcommand{\cP}{\mathcal{P}}
\newcommand{\cV}{\mathcal{V}}

\title{Instance-Adaptive Online Multicalibration}
\date{May 22, 2026}

\newcommand{\affmark}[1]{\textsuperscript{#1}}

\author{%
  \parbox{\textwidth}{\centering
    Zhiming Huang\affmark{1},
    Jamie Morgenstern\affmark{1,3},
    Aaron Roth\affmark{2},
    and Claire Jie Zhang\affmark{1}\\[0.6em]
    \affmark{1}Paul G. Allen School of Computer Science and Engineering,
    University of Washington\\[0.3em]
    \affmark{2}Department of Computer and Information Sciences,
    University of Pennsylvania\\[0.3em]
    \affmark{3}Amazon
  }
}

\begin{document}

\maketitle

\begin{abstract}
We study online multicalibration beyond the worst-case. We give a single, efficient
algorithm that dynamically interpolates between benign and worst-case sequences by adaptively
refining a dyadic grid of prediction values. Its error is controlled by the growth of the
refinement tree. Our analysis recovers the known $\widetilde O(T^{2/3})$ worst-case-optimal rate
for online multicalibration, while automatically adapting to easier instances: in the marginal
stochastic setting it obtains a rate of $\widetilde O(\sqrt T)$, and for piecewise-stationary means
with $J$ segments its rate is $\widetilde O(\sqrt{JT})$. More generally, the rate depends on how
well the predictable means can be approximated by simple contextual scores whose thresholds can
be represented by the group family.  We also show that the threshold-complexity dependence is tight up to
logarithmic factors.
\end{abstract}

\section{Introduction}\label{sec:intro}
Calibration is a basic statistical property that we want for probabilistic predictions to be ``trustworthy''. At a surface level, calibration asks that predictions ``mean what they say'' in the sense that they should be unbiased predictors of the outcome, even conditional on the value of the prediction itself.  For example, a calibrated weather forecaster should have the property that, amongst all days the forecaster predicts a 20\% chance of rain, 20\% of those days actually experience rain~\citep{dawid1982well}.
Calibrated predictions are viewed as ``trustworthy'' because of how they mediate the interface between prediction and decision making: selecting an action that is a best response to calibrated forecasts is an optimal decision policy among all policies that map forecasts to actions, simultaneously for all sets of actions and utility functions \citep{FV98,kleinberg2023u,noarov2023high,roth2022uncertain,kiyani2025robust}. 

Remarkably, as first shown by \citet{foster1998asymptotic}, it is possible to produce forecasts that are asymptotically calibrated even in sequential \emph{adversarial} prediction settings, in which there is no underlying distribution, and outcomes can be chosen adaptively by an adversary. The rate at which algorithms can achieve calibration depends on the environment. In the standard batch/statistical learning setting, calibration can be obtained via marginal mean estimation: given $T$ samples, simple algorithms can produce predictors whose empirical calibration error, without the conventional $1/T$ normalization, scales as $\Theta(\sqrt{T})$ \citep{shabat2020sample, gupta2021distribution}. Calibration in the online adversarial setting, on the other hand, is a harder problem with the best known lower bounds of $\Omega(T^{0.543})$ separating it from the stochastic setting 
\citep{qiao2021stronger,dagan2025breaking}.

Regardless of the rate at which one achieves it,
calibration alone guarantees surprisingly little: it is a marginal property, demanding unbiased predictions only on average across the full sequence. We instead consider guarantees which ensure predictions are calibrated simultaneously on each of many subsequences that can be defined by the data \citep{dawid1985calibration,lehrer2001any,sandroni2003calibration}. The modern formulation of this idea---multicalibration \citep{hebert2018multicalibration}---is achievable at a rate of $\tilde O(T^{2/3})$ in online adversarial settings \citep{noarov2023high,ghuge2025improved}, and this rate is tight in both the sequential adversarial and batch settings \citep{collina2026batch,collina2026optimal}. The online lower bounds, however---$\Omega(T^{0.543})$ for marginal calibration \citep{dagan2025breaking} and $\tilde \Omega(T^{2/3})$ for multicalibration \citep{collina2026optimal}---hold only in the worst case, leaving open whether typical instances admit substantially better rates. We ask:
\begin{quote}
  \emph{Are there sequential multicalibration algorithms that go beyond the worst case, achieving instance-dependent bounds that interpolate between $O(\sqrt{T})$ rates on easy instances and the $O(T^{2/3})$ rates that are optimal in the worst case?}
\end{quote}

We show that the answer is \emph{yes}. There is a single, efficient algorithm that obtains 
(multi)calibration rates adaptive to the instance. It obtains $\tilde O(\sqrt{T})$ calibration rates on stochastic instances without groups (which is impossible for worst case adversaries), recovers the minimax-optimal $\widetilde O(T^{2/3})$ worst-case rate for polynomial-sized group families, and more generally gives a continuum of bounds depending on how well the predictable means can be approximated by simple structured scores. For example, without groups, if the sequence of outcomes is produced by a piecewise-stationary environment in which the outcome $y_t$ is drawn from a distribution with mean $q_t$ which changes at most $J-1$ times across the $T$ rounds, we show that our algorithm obtains calibration error scaling as $\tilde O(\sqrt{JT})$ --- recovering the claimed stochastic bound as the special case in which $J = 1$. We also obtain nontrivial rates when the means are not exactly stationary on any long segment: if $C_{\rm stat}(q)=\inf_{c\in[0,1]}\sum_t|q_t-c|$, then the same  algorithm obtains $\widetilde O(\sqrt T+(TC_{\rm stat}(q))^{1/3})$.

These marginal calibration guarantees are corollaries of our more general theorem for multicalibration over groups. The relevant difficulty of a multicalibration instance is governed by how well one can predict label means from the context in a way that the group family can represent. Concretely, we compare the predictable means to simple contextual scores. A score is useful to us when it takes only a few values and its thresholds can be represented cheaply using the group functions. The final rate also pays for the residual error $\sum_{t\in S}|\mu_t-f_S(x_t)|$ left after approximating the predictable means by such scores on contiguous blocks. Thus label means may change at every round and still define an easy instance, provided those changes are predictable from context and their thresholds are represented by the group family.

Like many online calibration algorithms, our algorithm maintains a discrete grid of prediction values and computes the minimax optimal strategy in a game defined by a weighting of the worst-case next-round bias. Unlike many standard algorithms, ours does not maintain a fixed set of prediction values. Instead, it adaptively refines the grid over time, refining each interval
once the algorithm has played it many times.
A run of the algorithm produces an adaptively ``grown'' tree of prediction values. We first bound calibration error by accounting for the intervals that become active at each depth of this tree. The rest of the analysis upper bounds the number of active intervals at each depth in terms of the structure of the predictable means: intervals far from a good contextual approximation create detectable bias and therefore cannot be played often.

\subsection{Related Work}\label{appsec:related_work}

There is a large literature on online multicalibration and closely related  guarantees. The modern formulation of multicalibration is due to \cite{hebert2018multicalibration}, but the first online multicalibration algorithms predate this formulation \citep{lehrer2001any,sandroni2003calibration,foster2006calibration,foster2011complexity}. 
\citet{gupta2022online} gave the first efficient online multicalibration algorithm with rates scaling logarithmically with the number of groups, as well as generalizations to related guarantees (roughly speaking multicalibration for quantiles and moments).  \citet{lee2022online} formulated
online minimax multiobjective optimization and applied it to a variety of problems including 
multicalibration-style objectives, following the game theoretic tradition of deriving online calibration algorithms \citep{hart2025calibrated,fudenberg1999easier,foster1999proof,foster2018smooth}. \cite{haghtalab2023unifying} give an alternative unifying treatment of multicalibration algorithms through the lens of solving zero sum games with online learning algorithms.  \citet{farina2026efficient} interpret the kind of minimax step used in these papers as solving an expected variational inequality, and bring fast solvers to bear on it. \citet{garg2024oracle,ghuge2025improved,hu2025efficient,farina2026efficient} give oracle-efficient online
multicalibration guarantees for rich benchmark classes. Their emphasis is
oracle efficiency for large/continuous classes, whereas our algorithm is efficient only for explicitly represented
finite group families and gives instance-adaptive rates. More recently, \citet{hu2025efficient}
studied efficient swap multicalibration for elicitable properties, giving online multicalibration algorithms for properties beyond means (cf. \cite{noarov2023statistical}). The minimax sample complexity of online multicalibration was recently settled by \cite{collina2026batch,collina2026optimal} in both the batch and online settings, showing that the worst-case upper bound given by \cite{noarov2023high} was tight.

Bandit learning is another setting where there is a gap between stochastic and adversarial worst-case rates. There is a similarly motivated line of work that gives single algorithms obtaining the ``best of both worlds'' guarantees --- i.e. optimal stochastic rates on stochastic instances without giving up on worst case rates on adversarial instances \citep{bubeck2012best, de2014follow}. Similarly a branch of the online convex optimization literature studies more fine-grained regret bounds that adapt to properties of the realized loss functions, that can give better-than-worst-case $o(\sqrt{T})$ regret bounds on well behaved instances (see e.g. \citep{chiang2012online, chen2021impossible}). 

The idea of parameterizing the complexity of an adversary by how many times it changes strategies also appears in the literature on ``tracking the best expert'' \citep{herbster1998tracking,bousquet2002tracking} and adaptive regret \citep{hazan2007adaptive}, which give regret bounds not just to the best expert in hindsight, but to the best sequence of experts that change a small number of times, or that have a small smoother measure of drift. 

Our adaptive refinement of prediction values is reminiscent of ``zooming algorithms'' for the Lipschitz bandits problem in which algorithms iteratively refine attention to more promising regions of action space \citep{kleinberg2008multi,slivkins2011contextual,podimata2021adaptive,krishnamurthy2020contextual}. From this literature, \citet{podimata2021adaptive,krishnamurthy2020contextual} are the most closely related in that they give instance-dependent bounds. 

Recently, \citet{hu2026near} gave a swap regret algorithm for Lipschitz convex losses (where the action space is continuous) by using multiple scales of action discretizations. This is conceptually related to our approach; both approaches are designed to circumvent bottlenecks that result from algorithms tied to fixed discretizations of continuous action spaces. Despite the conceptual similarity, the techniques and results differ significantly; their algorithm simultaneously uses multiple discretization scales, and they prove worst-case regret bounds for different swap-regret objectives. Our algorithm adaptively refines predictions in a tree, and we prove beyond-worst-case-analysis style calibration bounds that take advantage of trajectory-specific structure. 

Concurrently, \citet{liu2026adaptive} give an alternative algorithm for obtaining instance-adaptive marginal calibration bounds. The first versions of our papers gave incomparable results in the marginal calibration setting. In the first version of our paper, our most interpretable corollary in the marginal calibration setting was the bound we state as Corollary \ref{cor:marginal-l1} in the current version, which obtains calibration at the rate of $\min(T^{2/3}, \sqrt{JT})$ for stochastic instances whose means change J times, whereas the main result of \citet{liu2026adaptive} was a bound of $\sqrt{T} + (TC_{\rm stat}(q))^{1/3}$, where $C_{\rm stat}(q)$ is a measure of the average distance of the label mean from its median across time. Neither bound implies the other. However after seeing each other's papers, we realized that both methods could be used to give both bounds --- we now also state the $\sqrt{T} + (TC_{\rm stat}(q))^{1/3}$ bound as corollary \ref{cor:marginal-special}, matching a theorem first given by \cite{liu2026adaptive}. Beyond the overlapping results in the marginal calibration setting, the two papers generalize in different directions. The marginal calibration results we state are corollaries of the more general adaptive multicalibration bound we give for our algorithm, but we restrict our analysis to the ECE metric. \cite{liu2026adaptive} give an algorithm only for marginal calibration, but give results for a broader collection of calibration metrics.

\section{Online Multicalibration and Dynamic Bins}
\label{sec:setup}

\subsection{Protocol and Error Notion}
We consider the online multicalibration problem where in each round $t \in [T]$, the learner observes a context $x_t \in \mathcal{X}$, and chooses a mixed forecast $\pi_t$ (a distribution over finitely many candidate prediction values in $[0,1]$), and then privately samples a realized prediction $p_t$ from $\pi_t$, and then observes an outcome $y_t \in [0, 1]$ revealed by nature.

Formally, at round $t\in[T]$:
\begin{enumerate}[leftmargin=18pt]
\item the learner observes a context $x_t\in\mathcal X$;
\item the learner outputs a mixed forecast $\pi_t$, i.e. a distribution over finitely many
prediction values in $[0,1]$;
\item a realized prediction $p_t\in[0,1]$ is sampled from $\pi_t$ using fresh private randomization
that is not revealed to Nature before the outcome is generated;
\item Nature reveals an outcome $y_t\in[0,1]$.
\end{enumerate}

We assume that $y_t$ may depend on the past, on the current context $x_t$, and on the mixed
forecast $\pi_t$, but not on the fresh randomization used to sample $p_t$. That is, Nature may react to $\pi_t$, but not to
the private coin flip that turns $\pi_t$ into the realized prediction $p_t$.

We work with the following standard two-stage filtration. The sigma-field $\cF_{t-1}$ contains the
realized transcript through the end of round $t-1$. Define the pre-outcome sigma-field
\[
\cH_{t-1}:=\sigma(\cF_{t-1},x_t,\pi_t),
\]
and the end-of-round history
\[
\cF_t:=\sigma(\cH_{t-1},p_t,y_t).
\]
Thus $\cH_{t-1}$ contains the information available before the outcome is generated, while
$\cF_t$ contains the full transcript through round $t$. Conditional on $\cH_{t-1}$, the fresh
randomization used to sample $p_t$ has law $\pi_t$ and is independent of $y_t$. Define the
next pre-outcome sigma-field, for $t<T$, by
\[
\cH_t:=\sigma(\cF_t,x_{t+1},\pi_{t+1}),
\]
so that
\[
\cH_{t-1}\subseteq \cF_t\subseteq \cH_t
\]
for every $t<T$. For notational convenience set $\cH_T:=\cF_T$, so that the same interleaving
convention holds for all $t\in[T]$. Define the
predictable conditional mean
$\mu_t:=\mathbb E[y_t\mid \cH_{t-1}] \in [0,1]$.
This is just notation for the conditional mean of the realized outcome sequence. In particular, if
Nature chooses $y_t$ deterministically from $\cH_{t-1}$, then $\mu_t=y_t$.



Let $\cG$ be a finite family of binary groups $g:\mathcal X\to\{0,1\}$. If the all-ones
group does not belong to the class we add it and write $\bar{\cG}:=\cG\cup\{\mathbf 1\}$.

\begin{definition}[Calibration error]
\label{def:empirical-mcerr}
For a group collection $\cG$ and realized transcript,
the  multicalibration error is
\[
\MCerr_{\cG}(T)
:=
\max_{g\in\cG}
\sum_{v\in\{p_1,\dots,p_T\}}
\left|
\sum_{t:p_t=v} g(x_t)(y_t-v)
\right|.
\]
The marginal calibration error is:
\[
\Cal(T)
:=
\sum_{v\in\{p_1,\dots,p_T\}}
\left|
\sum_{t:p_t=v}(y_t-v)
\right|.
\]
\end{definition}

\subsection{Dynamic bins}
Now we introduce the core structure of our algorithm: a dynamic, multiscale partition of the prediction space $[0, 1]$.
Let $d_{\max}:=\left\lfloor \tfrac12\log_2 T \right\rfloor+1$. For each depth $d=0,\dots,d_{\max}$, define the dyadic grid
\[
\cD_d
:=
\Bigl\{[k2^{-d},(k+1)2^{-d}) : k=0,\dots,2^d-2\Bigr\}
\cup
\Bigl\{[1-2^{-d},1]\Bigr\}.
\]
Use $\mathcal D$ to denote the union of intervals\footnote{We use intervals and bins interchangeably throughout the paper.} at all depths, $\mathcal D:=\bigcup_{d=0}^{d_{\max}} \mathcal D_d$. For each interval $I\in\cD$, let $r_I$ denote its midpoint
and let $w_I = \sup_{x, y \in I}|x - y|$ denote its width. For intervals at depth $d$, we sometimes denote their width as $w_d$. The learner maintains an active dyadic partition $B_t\subseteq \cD$ of $[0,1]$, starting from
$B_1=\{[0,1]\}$. We call an interval $I$ ``active" 
for rounds $t$ where $I\in B_t$. Let
$\beta(I):=\min\{t\in[T]: I\in B_t\}$, $\delta(I):=\max\{t\in[T]: I\in B_t\}$.
We refer to $\cV_T:=\bigcup_{t=1}^T B_t$
as the set of intervals that are ever active. We denote by $\cV_d$  the set of depth-$d$ intervals that are ever active, $\cV_d=\cV_T\cap\cD_d$, and $m_d$ is the cardinality of $\cV_d$.
In round $t$, we denote the  mixed forecast as
$\pi_t\in\Delta(B_t)$, where $\pi_{t,I}$ denotes the probability assigned to the midpoint $r_I$ of interval $I$.

For a set of rounds $S\subseteq[T]$, define the (expected) total play of interval $I$ on $S$ by $N_S(I):=\sum_{t\in S}\pi_{t,I}$,
with the convention $\pi_{t,I}=0$ when $I\notin B_t$. 
We denote the full-horizon total play as $N(I):=N_{[T]}(I)$. Let $L:=\log(eT|\bar{\cG}|)$. Each interval $I$ of depth strictly smaller than $d_{\max}$ is split at the end of the first round $t$ where its running total play satisfies 
$N_{[t]}(I)\ge \frac{L}{w_I^2}$.
When $I$ splits, it is permanently removed from active partition $B_t$, meaning its final total play $N(I)$ remains fixed at $N_{[t]}(I)$. Its two dyadic children are added to the active partition in the next rounds. Each child interval is of width $w_I/2$ and initialized with total play value $0$.
Our algorithm has contiguous active lifetimes for every interval,
which we denote as $A(I):=[\beta(I),\delta(I)]$, and $\pi_{t,I}=0$ for all $t\notin A(I)$.

\begin{figure}[t]
\centering
\begin{tikzpicture}[
    x=1cm,
    y=1cm,
    activeleaf/.style={draw=blue!60!black, fill=blue!14, thick},
    splitnode/.style={circle, draw=black, fill=white, inner sep=1.6pt},
    leafnode/.style={circle, draw=blue!60!black, fill=blue!14, inner sep=1.6pt},
    ticklab/.style={font=\scriptsize},
    panellab/.style={font=\small\bfseries}
]
\node[panellab] at (3.0,3.0) {Active partition $B_t$};
\foreach \a/\b in {0/1.5,1.5/3,3/4.5,4.5/5.25,5.25/6} {
  \filldraw[activeleaf] (\a,2.15) rectangle (\b,2.7);
}
\draw[thick] (0,2.15) rectangle (6,2.7);
\foreach \x/\lbl in {0/$0$,1.5/$\frac14$,3/$\frac12$,4.5/$\frac34$,5.25/$\frac78$,6/$1$} {
  \draw[gray!70] (\x,2.08)--(\x,2.78);
  \node[ticklab, anchor=north] at (\x,2.05) {\lbl};
}
\foreach \x in {0.75,2.25,3.75,4.875,5.625} {
  \fill[blue!70!black] (\x,2.425) circle (1.2pt);
}
\node[ticklab, align=center] at (3.0,1.02) {the learner predicts a mixture over\\the leaf midpoints $\{r_I:I\in B_t\}$};

\node[font=\Large] at (7.1,2.42) {$\Longleftrightarrow$};

\node[panellab] at (10.7,3.0) {Corresponding dyadic tree};
\coordinate (root) at (10.7,2.45);
\coordinate (L) at (9.2,1.55);
\coordinate (R) at (12.2,1.55);
\coordinate (LL) at (8.45,0.7);
\coordinate (LR) at (9.95,0.7);
\coordinate (RL) at (11.45,0.7);
\coordinate (RR) at (12.95,0.7);
\coordinate (RRL) at (12.55,-0.15);
\coordinate (RRR) at (13.35,-0.15);
\draw[thick] (root)--(L) (root)--(R);
\draw[thick] (L)--(LL) (L)--(LR);
\draw[thick] (R)--(RL) (R)--(RR);
\draw[thick] (RR)--(RRL) (RR)--(RRR);
\node[splitnode] at (root) {};
\node[splitnode] at (L) {};
\node[splitnode] at (R) {};
\node[splitnode] at (RR) {};
\node[leafnode] at (LL) {};
\node[leafnode] at (LR) {};
\node[leafnode] at (RL) {};
\node[leafnode] at (RRL) {};
\node[leafnode] at (RRR) {};
\node[ticklab, anchor=north] at (LL) {$[0,\frac14)$};
\node[ticklab, anchor=north] at (LR) {$[\frac14,\frac12)$};
\node[ticklab, anchor=north] at (RL) {$[\frac12,\frac34)$};
\node[ticklab, anchor=north] at (RRL) {$[\frac34,\frac78)$};
\node[ticklab, anchor=north] at (RRR) {$[\frac78,1]$};
\node[ticklab, align=center] at (10.7,-0.98) {internal nodes have already split;\\leaves are the current active bins};
\end{tikzpicture}
\caption{A typical state of the dynamic-bin data structure. The learner predicts using the midpoints
of the current leaves, and an interval is refined only after the total mass assigned to it reaches the threshold
$L/w_I^2$.}
\label{fig:partition-tree}
\end{figure}

\paragraph{Why this split threshold?}
The threshold $N(I)\ge L/w_I^2$ is the scale at which further refinement becomes worthwhile. An interval of width $w_I$ contributes deterministic discretization error on the order of $N(I)\cdot w_I$,
while the corresponding online-learning term scales like $\sqrt{N(I)L}$. The crossover point $N(I)\asymp \frac{L}{w_I^2}$
is where these two terms are equal. The algorithm therefore splits an interval when the discretization error has become the dominant source of
bias and not before.

\subsection{The general algorithm}


For every group $g\in\bar{\cG}$, interval $I$, and sign $\sigma\in\{+,-\}$, define the 
group-weighted signed bias terms
\begin{align}
\phi^+_{g,I}(\pi_t,y_t)&:=g(x_t)\pi_{t,I}(y_t-r_I-w_I),
&
\phi^-_{g,I}(\pi_t,y_t)&:=g(x_t)\pi_{t,I}(r_I-y_t-w_I).
\end{align}
The extra slack term
$w_I$  is a discretization buffer: deviations of size at most $w_I$ are below the resolution
of interval $I$, so they should not count as evidence that $r_I$ is too large or too small.

We combine these  weighted bias terms using a  sleeping-experts algorithm, instantiated with 
confidence-rated AdaNormalHedge~\citep{luo2015achieving}.
Each interval $I$ has many experts corresponding to it, indexed by a start time $s$, group $g$, and sign $\sigma$. A particular expert $(s, g, I, \sigma)$ is awake in round $t$ if and only if $s \leq t$ (the expert's start time $s$ has passed) and $I \in B_t$ (the corresponding interval is active).
We include start times $s$ so that we can control weighted bias on every contiguous sub-interval of rounds, not just marginally.

\paragraph{The sleeping-experts interface.}
At round $t$, the wrapper maintains a nonnegative weight $\omega_{t,e}$ for each expert $e$ that
is awake in that round. These weights are normalized so that
$\sum_{e:\text{ awake at }t}\omega_{t,e}=1$.
For each group $g$ and interval $I\in B_t$, define the aggregate sign weights assigned to positive and negative copies of $(g, I)$:
\[
\lambda^+_{t,g,I}
:=
\sum_{s\le t}\omega_{t,(s,g,I,+)},
\qquad
\lambda^-_{t,g,I}
:=
\sum_{s\le t}\omega_{t,(s,g,I,-)}.
\]
We then track interval-level aggregations of these weights
\[
a_{t,I}
:=
\sum_{g\in\bar{\cG}} g(x_t)\bigl(\lambda^+_{t,g,I}-\lambda^-_{t,g,I}\bigr),
\qquad
b_{t,I}
:=
\sum_{g\in\bar{\cG}} g(x_t)\bigl(\lambda^+_{t,g,I}+\lambda^-_{t,g,I}\bigr).
\]
In words, 
 $a_{t,I}$ records the net signed weight favoring outcomes above rather
than below $r_I$, while $b_{t,I}$ records the total weight assigned to both of the corresponding
one-sided bias terms for $I$. By construction, $|a_{t,I}|\le b_{t,I}$.

The wrapper's aggregate one-step bias is the weighted average of these expert-level biases:
\[
\widehat \phi_t
:=
\sum_{I\in B_t}\sum_{g\in\bar{\cG}}
\Bigl(
\lambda^+_{t,g,I}\phi^+_{g,I}(\pi_t,y_t)
\;+\;
\lambda^-_{t,g,I}\phi^-_{g,I}(\pi_t,y_t)
\Bigr),
\]
and regrouping by interval gives the identity
$
\widehat \phi_t
=
\sum_{I\in B_t}
\pi_{t,I}\bigl(a_{t,I}(y_t-r_I)-b_{t,I}w_I\bigr).$

A forecast $\pi_t$ is then chosen so that no convex combination of the currently active
one-sided weighted bias terms has positive aggregate one-step bias, regardless of which outcome $y \in[0,1]$ Nature
reveals. In particular, the learner chooses $\pi_t\in\Delta(B_t)$ satisfying
\begin{equation}\label{eq:weighted-feas}
\sum_{I\in B_t} \pi_{t,I}\bigl(a_{t,I}(y-r_I)-b_{t,I}w_I\bigr) \le 0
\qquad
\text{for every }y\in[0,1].
\end{equation}
\begin{lemma}[Weighted feasibility]\label{lem:weighted-feasibility}
Let $B$ be any finite partition of $[0,1]$ into intervals $I$ with midpoints $r_I$ and widths $w_I$. Let
$\{a_I,b_I\}_{I\in B}$ satisfy $|a_I|\le b_I$ for every $I\in B$. Then there exists
$\pi\in\Delta(B)$ such that
\begin{equation}
\sum_{I\in B}\pi_I(a_I(y-r_I)-b_Iw_I)\le 0
\qquad
\text{for every }y\in[0,1].    
\end{equation}
\end{lemma}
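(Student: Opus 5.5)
We prove the lemma by a direct, Foster--Vohra-style construction that exploits the fact that each term is affine in $y$. It needs no minimax theorem. Write $h_I(y):=a_I(y-r_I)-b_Iw_I$. Because $h_I$ is affine in $y$, the whole left-hand side $\sum_I\pi_I h_I(y)$ is affine in $y$. Hence it suffices to find $\pi$ for which this affine function is nonpositive at the two endpoints $y=0$ and $y=1$. A cleaner route is to make each $h_I$ individually small near $r_I$, and then pick $\pi$ supported on at most two adjacent intervals.

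\emph{Step 1 (single-interval cases).} If some $I$ has $a_I=0$, put all mass on $I$. Then $h_I(y)=-b_Iw_I\le 0$ for every $y$, and we are done. So assume $a_I\neq 0$ for all $I$. Order the intervals of $B$ left to right as $I_1,\dots,I_n$. Consider the sign of $a_I$. If $a_{I_1}<0$, the prediction sits at the left edge and the bias pushes downward. Using $r_{I_1}\le w_{I_1}/2$, we get $h_{I_1}(y)=a_{I_1}(y-r_{I_1})-b_{I_1}w_{I_1}\le |a_{I_1}|r_{I_1}-b_{I_1}w_{I_1}\le 0$ for all $y\ge 0$, since $|a|\le b$. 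Symmetrically, if $a_{I_n}>0$, then $1-r_{I_n}\le w_{I_n}/2$ and $h_{I_n}(y)\le a_{I_n}(1-r_{I_n})-b_{I_n}w_{I_n}\le 0$. Otherwise $a_{I_1}>0$ and $a_{I_n}<0$. Then there is an index $k$ with $a_{I_k}>0$ and $a_{I_{k+1}}<0$. Denote these adjacent intervals by $I=I_k$ and $J=I_{k+1}$, with $r_I<r_J$ and $r_J-r_I=(w_I+w_J)/2$.

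\emph{Step 2 (two-interval mixture).} Choose $\pi_I\propto |a_J|$ and $\pi_J\propto a_I$. Both are positive, and we normalize them to sum to one. The coefficient of $y$ is then $\pi_I a_I+\pi_J a_J\propto |a_J|a_I - a_I|a_J| = 0$, so the combination is constant in $y$. Evaluating up to the positive normalization gives
\[
|a_J|a_I(r_J-r_I)\cdot\tfrac{1}{1}\;-\;|a_J|b_Iw_I-a_Ib_Jw_J
\;\le\;|a_J|a_I\tfrac{w_I+w_J}{2}-|a_J|a_Iw_I-a_I|a_J|w_J\le 0 .
\]
To see the first term, note that the constant equals $\pi_I a_I(-r_I)+\pi_J a_J(-r_J)$ minus the slack terms, and this is $\propto a_I|a_J|(r_J-r_I)$. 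The inequality uses $b\ge|a|$.

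The only delicate point is the bookkeeping in Step 2. We must verify that the cross term equals $a_I|a_J|(r_J-r_I)$ with the correct sign. We must also verify that the midpoint gap of adjacent dyadic intervals is exactly half the sum of their widths, which holds for any partition into contiguous intervals. With the factor $1/2$ there is slack to spare, so the inequality holds comfortably. The endpoint cases in Step 1 follow from the closed last interval $[1-2^{-d},1]$ and from $r_{I_1}=w_{I_1}/2$.
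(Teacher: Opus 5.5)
Your proof is correct, and it takes a different route from the paper's.

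The paper's argument is nonconstructive. For each fixed $y$, it looks at the cell $I_y$ containing $y$ and notes that
\[
a_{I_y}(y-r_{I_y})-b_{I_y}w_{I_y}\le -\tfrac12 b_{I_y}w_{I_y}\le 0 .
\]
Hence $\max_y\min_{\pi}F(\pi,y)\le 0$. Sion's minimax theorem, which applies because the payoff is bilinear and the strategy sets are compact and convex, then swaps the order and yields a single $\pi$ that works for every $y$.

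You instead exhibit $\pi$ explicitly. You use a pure forecast when some $a_I=0$, or when the sign of $a$ at a boundary cell already points inward. Otherwise you take the largest index with $a>0$, which gives an adjacent pair with $a_I>0>a_J$. On that pair you use the equalizing mixture $\pi_I\propto|a_J|$, $\pi_J\propto a_I$. This mixture makes the slope in $y$ vanish and leaves the unnormalized constant
\[
a_I|a_J|(r_J-r_I)-|a_J|b_Iw_I-a_Ib_Jw_J\le -\tfrac12 a_I|a_J|(w_I+w_J)\le 0 .
\]
The sign bookkeeping, the identity $r_J-r_I=(w_I+w_J)/2$ for adjacent cells, and the exhaustiveness of the case split all check out. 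The stray $\cdot\tfrac11$ in your display is harmless.

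Your route has three advantages.
\begin{enumerate}
\item It is elementary and needs no minimax theorem.
\item It replaces the two-constraint linear program mentioned after the lemma with an $O(|B|)$ linear scan and a closed-form $\pi_t$.
\item It shows a feasible forecast always exists that is supported on at most two adjacent bins, located where the sign of $a_{t,I}$ changes.
\end{enumerate}

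The paper's route has a different advantage. It is shorter, and it uses only the pointwise fact that for every outcome some pure forecast has nonpositive payoff. It therefore does not depend on the one-dimensional ordering of the cells, which your sign-change argument needs. This is why the minimax template is the standard one for multi-objective calibration arguments where no such ordering exists.
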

The feasibility of this set of constraints, and the existence of such a mixed forecast, follow from a weighted feasibility argument based on Sion's minimax theorem (see Appendix~\ref{pf:lem:weighted}). Because the expression $\sum_{I \in B_t} \pi_{t, I}\left(a_{t, I}\left(y-r_I\right)-b_{t, I} w_I\right)$ is affine in $y$, Equation~\eqref{eq:weighted-feas} holds for all $y \in[0,1]$ if and only if it holds at the endpoints $y=0$ and $y=1$. Consequently, finding the mixed forecast $\pi_t$ computationally reduces to solving a standard linear program over the simplex with exactly two constraints.


The learner then samples $p_t\sim \pi_t$, observes $y_t$,  provides the normalized losses derived from $\phi_{g,I}^{\pm}(\pi_t,y_t)$ to the sleeping-experts wrapper,  and splits any interval
satisfying the threshold $L/w_I^2$. The complete round-by-round procedure is summarized in
Algorithm~\ref{alg:mc-dynamic-bins}.

\begin{algorithm}[H]
\SetAlgoNoLine
\KwIn{horizon $T$, group family $\cG$}
Initialize $B_1=\{[0,1]\}$ and $N(I)=0$ for the root interval\;
Instantiate a sleeping-experts wrapper using confidence-rated AdaNormalHedge on the tuples $(s,g,(I,\pm))$\;
\For{$t=1,\dots,T$}{
  Observe the context $x_t$\;
  Aggregate the weights of the awake experts into coefficients $a_{t,I},b_{t,I}$\;
  Find $\pi_t\in\Delta(B_t)$ satisfying (\ref{eq:weighted-feas})\;
  Sample a realized prediction $p_t=r_I$ with probability $\pi_{t,I}$\;
  Observe the outcome $y_t\in[0,1]$\;
  Update the losses of the awake experts using the group-weighted bias terms $\phi^\pm_{g,I}(\pi_t,y_t)$\;
  Update $N(I)\leftarrow N(I)+\pi_{t,I}$ for every $I\in B_t$\;
  Split every $I\in B_t$ of depth $d<d_{\max}$ with $N_{[t]}(I)\ge L/w_I^2$ into its two children, each initialized with counter value $0$\;
}
\caption{Dynamic bins for online multicalibration}
\label{alg:mc-dynamic-bins}
\end{algorithm}

\section{Why the Algorithm Works: Marginal Calibration Intuition}
\label{sec:intuition}

Before defining the instance-complexity measures formally, we explain the proof idea in the
simplest special case: the context space is a singleton and the only group is the all-ones group.
In this case multicalibration is just marginal calibration. The full proof in
Section~\ref{sec:proofs} follows the same template but in greater generality.

\paragraph{Step 1: Regret controls per-interval bias.}
Fix an active interval $I$ with midpoint $r_I$ and width $w_I$. The sleeping-experts
wrapper tracks whether predictions from this interval are accumulating positive or
negative bias. The weighted feasibility step chooses $\pi_t$ so that the aggregate
one-step bias is nonpositive for every possible next outcome. As a result, over any
contiguous block $S$ inside the lifetime of $I$,
\[
\left|\sum_{t\in S}\pi_{t,I}(y_t-r_I)\right|
\lesssim
N_S(I)w_I+\sqrt{N_S(I)\log T}+\log T.
\]
The first term is deterministic discretization error: an interval of width $w_I$
cannot distinguish outcomes that differ by less than its own resolution. The
remaining terms are the regret cost of controlling one-sided bias on that interval.
This also explains the split rule. The algorithm refines $I$ when
$N(I)\asymp \log T/w_I^2$, the point where the discretization term $N(I)w_I$ and
the regret term $\sqrt{N(I)\log T}$ are of the same order.

\paragraph{Step 2: Far away intervals accumulate bias quickly}  Write $q_t:=\mu_t$
for the scalar predictable mean at time $t$. 
Now suppose that on a time block $S$, the mean sequence $q_t$ is well approximated by a
constant $c_S$. If a depth-$d$ interval has midpoint $r_I$ far above $c_S$, then
putting probability on that interval tends to produce negative bias; if $r_I$ is
far below $c_S$, it tends to produce positive bias. Step 1 says such one-sided bias
cannot persist, so far-away intervals cannot receive too much mass.

If $q_t$ is only approximately constant, this argument weakens exactly on rounds
where $q_t$ is not close to $c_S$. Errors smaller than the bin width $w_d$ are below
the resolution of a depth-$d$ interval, so they are free in our accounting. The relevant residual on
$S$ at depth $d$ is therefore
\[
\mathsf R_{S,d}(c_S)
:=
\sum_{t\in S}(|q_t-c_S|-w_d)_+.
\]
How frequently a far interval can be played can be bounded by two terms: a square-root deviation
term from regret and concentration, and a residual term measuring how much of
$\mathsf R_{S,d}(c_S)$ is assigned to that interval.

\paragraph{Step 3: Residual error controls how many intervals split.}
A depth-$d$ interval splits only after accumulating total probability mass on the order of
$\log T/w_d^2$. Around the local constant $c_S$, there may be a band of nearby
intervals for which the bias argument is too weak. Farther away, Step 2 gives a
bound on how much probability mass an interval can receive.

The analysis balances these two effects. If we declare a wider band around $c_S$
to be ``near,'' we pay for more nearby intervals. If we declare a narrower band to
be near, we must charge more of the far-away play to residual error. Optimizing this
tradeoff gives a bound on the number of splits at depth $d$ on the order of:
\[
1+\sqrt{\mathsf R_{S,d}(c_S)w_d/\log T}
\]
for a single block $S$. If the block is exactly stationary at scale $w_d$, then
the residual is zero and the block contributes only a constant number of splits at
that depth. Summing these scale-by-scale split bounds is what ultimately produces
the cube-root residual term in the main theorem.

\paragraph{Step 4: Depth-wise active intervals control calibration error.}
The tree contains intervals at many resolutions. A coarse interval has large
discretization error but can only appear in small numbers; a fine interval has small
width but can be expensive to control if many such intervals are active. The proof
therefore accounts for calibration separately at each depth. If $m_d$ depth-$d$
intervals ever become active, their total contribution is bounded by
\[
\min\{\sqrt{T\,m_d\log T},\,m_d\log T/w_d\}.
\]
The first term is the cost of summing square-root deviation terms over the
active intervals at that depth; the second uses the fact that an interval is
refined once its counter reaches the split threshold.
Combining this depth-wise accounting with the split bound from Step 3 yields the
per-segment rates in the main theorem.

\paragraph{Extension to multicalibration.}
For multicalibration, the local constant $c_S$ is replaced by a contextual score
$f_S(x)$ mapping context to label means. The role of ``$r_I$ is above or below $c_S$'' is played by the threshold
comparison $f_S(x_t)\ge r_I$. If these threshold comparisons can be represented as
low-weight linear combinations of the groups, then the groupwise bias control from
Step 1 applies to this more general case as well, where the bias control is worse by a factor of the weight of the linear representation.

\section{A Hierarchy of Complexity Measures}
\label{sec:complexity}


We now define the complexity measures appearing in our bounds, which track various ways in which the sequence of label means can be (un)predictable. In the marginal case, where the context space is a singleton, the predictable mean process reduces to a scalar sequence $q_t:=\mu_t$. The simplest notion of simplicity is piecewise stationarity of the mean --- that it changes only a small number of times:


\begin{definition}[Piecewise-constant predictable means]
\label{def:predmean}
We say the scalar predictable mean sequence changes $J-1$ times if there exist
$
1=\tau_0<\tau_1<\cdots<\tau_J=T+1
$
and values $q^{(1)},\dots,q^{(J)}\in[0,1]$ such that
$q_t=q^{(j)}$ for all $t\in \{\tau_{j-1},\dots,\tau_j-1\}$.
\end{definition}

This benchmark is simple but it is overly conservative. In what follows we generalize it in two ways. First, we should consider a sequence to be simple even if it changes at every round, so long as those changes are only small deviations from some (piecewise) common center. Second, in the multicalibration setting in which there are contexts, there should be ``simple'' sequences that change dramatically at every round so long as those changes are predictable from the contexts in a way that is ``visible'' to the groups with respect to which the algorithm is multicalibrating.

We first introduce the appropriate generalization for multicalibration, which lets the mean change dramatically at every round so long as the changes are predictable from the group functions.  Multicalibration algorithms control bias within every group. Thus what matters is whether
the group family can represent, for each bin midpoint $r_I$, which side of $r_I$ the relevant
conditional means lie on. A sequence where $\mu_t$ changes every round can still be easy if those
changes are predictable from context and their thresholds can be represented using $\bar{\cG}$.

Just as Definition \ref{def:predmean}  required only that the mean be piecewise constant, and allowed it to change with time, we will similarly allow thresholds of label means to be \emph{differently} predictable on different contiguous blocks of time $S$. 
At scale $w_d=2^{-d}$, the useful object on a block $S$ is a score $f$ that approximately predicts
$\mu_t$, takes only $K$ distinct values on the realized contexts, and has thresholds
$\{f(x_t)\ge r\}$ that admit $\bar{\cG}$-representations with $\ell_1$ cost at most $B$. The
product $BK$ controls the number of depth-$d$ splits that can be forced within block $S$:
Lemma~\ref{lm:amortizedsplit} shows that this number is $O(BK)$ when the approximation error is
below scale $w_d$.
A score is threshold representable if each of its level sets has a bounded-cost
linear representation over the group family:

\begin{definition}[Threshold-representable score]
\label{def:threshold-rep}
Fix $S\subseteq[T]$. A score $f:\mathcal X\to[0,1]$ is $(K,B,\eta)$-threshold
representable on $S$ if $|\{f(x_t):t\in S\}|\le K$ and for every $r\in[0,1]$ there exist coefficients
$(\alpha_{r,g})_{g\in\bar{\cG}}$ with
$\sum\limits_{g\in\bar{\cG}}|\alpha_{r,g}|\le B$
such that the function
$h_r(x):=\sum\limits_{g\in\bar{\cG}}\alpha_{r,g}g(x)$
satisfies, for all $t\in S$,
\begin{equation}\label{eq:defthre}
\begin{aligned}
h_r(x_t) &\ge 1-\eta && \text{whenever } f(x_t)\ge r,\\
h_r(x_t) &\le -(1-\eta) && \text{whenever } f(x_t)< r.
\end{aligned}
\end{equation}
\end{definition}
We fix the margin parameter $\eta = 1/4$ throughout to ensure the representation $h_r$ separates by a margin strictly bounded away from zero. Any constant $\eta \in (0, 1)$ would suffice for the subsequent analysis as the specific choice only affects the absolute constants hidden by the Big-O notation.
\begin{remark}\label{rem:threshold-cost-lower-bound}
With $\eta=1/4$, every nonempty block satisfying the threshold-representable condition has
$B\ge 3/4$. Indeed, choose $t_0\in S$ and set $r=f(x_{t_0})$. Then $f(x_{t_0})\ge r$, so
the representation guarantee gives $h_r(x_{t_0})\ge 3/4$. Since every group in $\bar{\cG}$ is
binary-valued,
\[
|h_r(x_{t_0})|\le \sum_g |\alpha_g|\le B.
\]
\end{remark}
The threshold-representable score condition is reminiscent of margin-based complexity notions such as fat-shattering, but it is instance-specific: rather than requiring a hypothesis class to realize arbitrary dichotomies on $S$, we only require low-cost representations of the nested thresholds induced by a chosen low-cardinality score $f$. 
As a sanity check, in the singleton-context case with $\bar{\cG}=\{\mathbf 1\}$, a constant score
$f\equiv c$ is $(1,1,1/4)$-threshold representable.
\begin{example}[Ordinal risk strata]\label{appsec:example}
Suppose that on a block $S$, the predictable mean is well approximated by an ordinal $K$-level
score
\[
f(x)=a_{c(x)},
\qquad
a_1<\cdots<a_K,\quad c(x)\in\{1,\dots,K\}.
\]
Think of $c(x)$ as a coarse risk assessment such as routine, elevated, high, or critical. Assume the
group family contains the all-ones group and the cumulative risk indicators
\[
g_j(x)=\mathbf 1\{c(x)\ge j\},
\qquad
j=2,\dots,K.
\]
Then every threshold of $f$ has constant representation cost. Fix any
$r\in[0,1]$. If $r\le a_1$, then $f(x)\ge r$ for all realized contexts, and we
take $h_r(x)= 1$.
If $r>a_K$, then $f(x)<r$ for all realized contexts, and we take $h_r(x)=-1$.
Otherwise, there is an index $j\in\{2,\dots,K\}$ such that $a_{j-1}<r\le a_j$.
In this case,
\[
f(x)\ge r
\quad\Longleftrightarrow\quad
c(x)\ge j,
\]
so we take
\[
h_r(x)=2g_j(x)- 1.
\]
Thus, on the realized contexts,
\[
\begin{cases}
h_r(x_t)=1, & f(x_t)\ge r,\\
h_r(x_t)=-1, & f(x_t)<r.
\end{cases}
\]
Moreover,
the sum of absolute coefficients is at most $3$, since $h_r$ is either $1$,
$-1$, or $2g_j-1$.
Therefore the ordinal score is $(K,3,0)$-threshold representable on $S$.
In particular, it is also $(K,3,\eta)$-threshold representable for any
$\eta\in[0,1]$.
\end{example}

Next, we introduce the generalization that allows the predictable mean sequence $\mu_t$ to vary from its score $f(x_t)$ (or, in the marginal case, to vary from a constant within a block) so long as those variations are small in aggregate. 
Approximation errors below the resolution
$w_d$ of a depth-$d$ interval should be free, because the algorithm's analysis already pays for error at this scale. This motivates the following residual, which ignores such small errors.

\begin{definition}[Residual at scale $d$ and threshold-simple block]
\label{def:scale-residual}
For a contiguous block $S\subseteq[T]$, a local score $f:\mathcal X\to[0,1]$, and depth $d$, the truncated residual is defined as
\[
\mathsf R_{S,d}(f)
:=
\sum_{t\in S}\bigl(|\mu_t-f(x_t)|-w_d\bigr)_+,
\qquad
(a)_+:=\max\{a,0\}.
\]
Furthermore, we call $S$ $(K,B,d)$-threshold simple if there exists a score
$f_S$ that is $(K,B,\frac14)$-threshold representable on $S$ and satisfies
$\mathsf R_{S,d}(f_S)=0$.
\end{definition}

We now define the complexity measure that our main theorem depends on. For a partition of time into blocks $S$, the measure depends on the best threshold-representable score function $f_S$ that can be defined on each block $S$, and scales with the parameters $K_S$ and $B_S$ of the score function along with its residuals $\mathsf R_{S,d}(f_S)$. It then sums a complexity term depending on these parameters across all blocks $S$ in the partition. Since the algorithm will obtain bounds scaling with the \emph{best} such choice of partition and score functions, the complexity measure takes the infimum across all choices of partition, score functions, and valid values of the complexity parameters. 

\begin{definition}[Depth-wise residual threshold cost]
\label{def:residual-profile}
For each depth $d$, define
\[
\Psi_d^{\rm res}(\mu_{1:T};\cG)
:=
\inf_{\cP_d,\{(f_S,K_S,B_S)\}_{S\in\cP_d}}
\sum_{S\in\cP_d}
\left[
B_SK_S
+
\sqrt{\frac{B_SK_S\,\mathsf R_{S,d}(f_S)\,w_d}{L}}
\right],
\]
where the infimum ranges over partitions $\cP_d$ of $[T]$ into nonempty
contiguous blocks and, for each block $S\in\cP_d$, triples $(f_S,K_S,B_S)$
such that $f_S$ is $(K_S,B_S,\tfrac14)$-threshold representable on $S$ with
respect to $\bar\cG$.
If no admissible representation exists for a block, its cost is $+\infty$.
\end{definition}

If we ask for perfect approximation within scale $d$ (i.e. with zero truncated residual) then the measure simplifies: 

\begin{definition}[Exact multiscale threshold complexity]
\label{def:exact-threshold-complexity}
For a fixed depth $d$, define
\[
M_d^{\rm thr}(\mu_{1:T};\cG)
:=
\inf_{\cP_d,\{(f_S,K_S,B_S)\}_{S\in\cP_d}}
\sum_{S\in\cP_d} B_SK_S,
\]
where the infimum ranges over partitions $\cP_d$ of $[T]$ into nonempty
contiguous blocks and triples $(f_S,K_S,B_S)$ such that $f_S$ is
$(K_S,B_S,\tfrac14)$-threshold representable on $S$ and satisfies
\[
\mathsf R_{S,d}(f_S)=0
\qquad\text{for every }S\in\cP_d.
\]
The exact multiscale threshold complexity is
\[
\Gamma_T^{\rm thr}(\mu_{1:T};\cG)
:=
\sum_{d=0}^{d_{\max}} M_d^{\rm thr}(\mu_{1:T};\cG).
\]
\end{definition}
For a fixed partition and collection of score functions, it is useful to have notation for the relevant complexity terms:
\begin{definition}[Segmented threshold approximation cost]
\label{def:segmented-residual}
For a contiguous partition $\cP$ of $[T]$ and triples
$(f,K,B)=\{(f_S,K_S,B_S):S\in\cP\}$ such that each $f_S$ is
$(K_S,B_S,\tfrac14)$-threshold representable on $S$, define
\[
\mathsf R_S(f_S):=\sum_{t\in S}|\mu_t-f_S(x_t)|,
\]
and
\[
\mathsf M(\cP,f):=1+\sum_{S\in\cP}B_SK_S,
\qquad
\mathsf A(\cP,f):=\sum_{S\in\cP}\sqrt{B_SK_S\,\mathsf R_S(f_S)}.
\]
\end{definition}

\section{Main Theorem and Marginal Corollaries}
\label{sec:results}

We now state the main guarantee. The theorem has two forms. The first is a depth-by-depth bound in
terms of the depth-$d$ costs $\Psi_d^{\rm res}$, which is our tightest bound and closest to the proof. The second
is a cleaner consequence stated in terms of one contiguous partition and one threshold-representable
score per segment. Table~\ref{tab:regimes} summarizes representative consequences.

\begin{table}[t]
\centering
\footnotesize
\setlength{\tabcolsep}{4pt}
\renewcommand{\arraystretch}{1.25}
\begin{tabular}{@{}
>{\raggedright\arraybackslash}p{0.23\textwidth}
>{\raggedright\arraybackslash}p{0.43\textwidth}
>{\raggedright\arraybackslash}p{0.28\textwidth}
@{}}
\toprule
\textbf{Setting} & \textbf{Complexity control} & \textbf{Calibration error} \\
\midrule
Worst-case adversarial &
No structure assumed&
$\widetilde O(T^{2/3})$ \\
\addlinespace[2pt]
Exact threshold score \newline
{\scriptsize ($K$ levels, cost $B$)} &
\(\begin{aligned}[t]
\mathsf M(\cP,f)&\lesssim 1+BK,\\
\mathsf A(\cP,f)&=0
\end{aligned}\) &
$\widetilde O(\sqrt{BKT})$ \\
\addlinespace[2pt]
Approximate threshold score \newline
{\scriptsize ($K$ levels, cost $B$)} &
\(\begin{aligned}[t]
\mathsf M(\cP,f)&\lesssim 1+BK,\\
\mathsf A(\cP,f)&\lesssim\sqrt{BKR},\\
R&=\sum_t|\mu_t-f(x_t)|
\end{aligned}\) &
\(\widetilde O(\sqrt{BKT}+(TBKR)^{1/3})\) \\
\addlinespace[2pt]
Segmentwise approximate threshold score&
\(\begin{aligned}[t]
\mathsf M(\cP,f)&=1+\sum_S B_SK_S,\\
\mathsf A(\cP,f)&=\sum_S\sqrt{B_SK_S\mathsf R_S(f_S)}
\end{aligned}\) &
\(\begin{aligned}[t]
\widetilde O(&\sqrt{T\mathsf M(\cP,f)}\\
&{}+T^{1/3}\mathsf A(\cP,f)^{2/3})
\end{aligned}\) \\
\addlinespace[2pt]
$J$-piecewise stationary marginal &
\(\begin{aligned}[t]
\mathsf M(\cP,f)&=1+J,\\
\mathsf A(\cP,f)&=0
\end{aligned}\) &
$\widetilde O(\sqrt{JT})$ \\
\addlinespace[2pt]
Approximately stationary marginal &
\(\begin{aligned}[t]
\mathsf M(\cP,f)&=O(1),\\
\mathsf A(\cP,f)&=\sqrt{C_{\rm stat}(q)}
\end{aligned}\) &
$\widetilde O(\sqrt T+(TC_{\rm stat}(q))^{1/3})$ \\
\bottomrule
\end{tabular}
\caption{Representative regimes covered by the dynamic-bin algorithm. The worst-case row recovers
the known $\widetilde O(T^{2/3})$ online multicalibration rate achieved by prior
algorithms~\citep{noarov2023high,ghuge2025improved}. In the last row,
$C_{\rm stat}(q):=\inf_{c\in[0,1]}\sum_t |q_t-c|$. The algorithm does not need to know which regime it is running in; all of these bounds are obtained by the same algorithm.}
\label{tab:regimes}
\end{table}

\begin{theorem}[Instance-adaptive online multicalibration]
\label{thm:main-mc}
For $d=0,\dots,d_{\max}$, define
\[
\Theta_0:=1,
\qquad
\Theta_d
:=
\min\left\{
2^d,\ \frac{T w_d^2}{L},\ \Psi_{d-1}^{\rm res}(\mu_{1:T};\cG)
\right\}
\quad(d\ge 1).
\]
With probability at least $1-O(1/T)$, the dynamic-bin algorithm satisfies
\[
\MCerr_{\cG}(T)
\le
C
\min\left\{
T^{2/3}L^{1/3},
\sum_{d=0}^{d_{\max}}
\min\left\{
\sqrt{TL\Theta_d},\frac{L\Theta_d}{w_d}
\right\}
\right\}.
\]
Moreover, on the same high-probability event, for every contiguous partition $\cP$
of $[T]$ and every choice of triples
$(f,K,B)=\{(f_S,K_S,B_S):S\in\cP\}$ such that each $f_S$ is
$(K_S,B_S,\tfrac14)$-threshold representable on $S$,
\[
\MCerr_{\cG}(T)
\le
C\min\left\{
T^{2/3}L^{1/3},
\sqrt{TL\,\mathsf M(\cP,f)}
+
T^{1/3}L^{1/3}\mathsf A(\cP,f)^{2/3}
\right\}.
\]
\end{theorem}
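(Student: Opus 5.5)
We follow the four-step template of Section~\ref{sec:intuition}, now with groups.

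\textbf{Step 1: groupwise bias control on contiguous blocks.} By Lemma~\ref{lem:weighted-feasibility}, the forecast satisfying \eqref{eq:weighted-feas} makes the wrapper's aggregate one-step loss $\widehat\phi_t$ nonpositive in conditional expectation. Combined with the AdaNormalHedge sleeping-experts regret bound for each expert $(s,g,I,\sigma)$, this gives, for every contiguous $S\subseteq A(I)$ and every $g\in\bar\cG$,
\[
\sum_{t\in S} g(x_t)\pi_{t,I}(y_t-r_I)\le N_S(I)w_I+O\bigl(\sqrt{N_S(I)L}+L\bigr),
\]
and the same bound holds with the sign flipped. The regret term depends on the expert's cumulative absolute loss, which is at most $N_S(I)$. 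A Freedman inequality then passes between $\pi_{t,I}$-weighted sums and the realized sums $\sum_{t:p_t=r_I}$. It also passes between $y_t$ and $\mu_t$. A union bound over $T\cdot|\bar\cG|\cdot|\cD|$ triples gives failure probability $O(1/T)$.

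\textbf{Step 2: depth-wise accounting of $\MCerr$.} Each realized value $v$ equals $r_I$ for at most one interval $I\in\cV_T$. For that interval we apply Step 1 on $A(I)$. Since $N(I)\lesssim L/w_I^2$ (the counter increases by at most $1$ per round), its contribution is $O(\sqrt{N(I)L}+L)$; the term $N(I)w_I$ is of the same order. Summing over $I\in\cV_d$ with $\sum_I N(I)\le T$, Cauchy--Schwarz gives $\sqrt{TLm_d}$. The alternative bound uses $N(I)\le L/w_d^2+1$ to give $m_dL/w_d$. Hence the per-depth cost is $\min\{\sqrt{TLm_d},\,Lm_d/w_d\}$, up to an additive $m_dL$ that is dominated.

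It remains to show $m_d\lesssim\Theta_d$. The bound $m_d\le 2^d$ is trivial. For $d\ge1$, every active depth-$d$ interval is a child of a split depth-$(d-1)$ interval. Each such parent absorbed mass $L/w_{d-1}^2$, so $m_d\le 2Tw_{d-1}^2/L\lesssim Tw_d^2/L$.

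The $T^{2/3}L^{1/3}$ branch comes from the generic bound $\Theta_d\le\min\{2^d,Tw_d^2/L\}$. Summing $\min\{\sqrt{TL2^d},\,L2^{d}\cdot2^d\}$ is maximized near $2^d\approx(T/L)^{1/3}$, which gives $T^{2/3}L^{1/3}$.

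\textbf{Step 3 (main obstacle): $m_d\lesssim\Psi_{d-1}^{\rm res}$.} Fix an admissible partition $\cP$ and scores $f_S$ at depth $d-1$. We count depth-$(d-1)$ splits. A split can be charged to a block $S$ by intersecting $A(I)$ with the blocks. Boundary-crossing lifetimes cost at most one extra per block, which is absorbed since $B_SK_S\ge3/4$ by Remark~\ref{rem:threshold-cost-lower-bound}.

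Within $S$, consider an interval $I$ and the set $\{t:f_S(x_t)\ge r_I\}$. Use the linear representation $h_{r_I}=\sum_g\alpha_g g$. Summing Step 1 over $g$ with weights $\alpha_g$ bounds the test quantity $\sum_t h_{r_I}(x_t)\pi_{t,I}(\mu_t-r_I)$ by $B_S\bigl(N w+\sqrt{NL}+L\bigr)$. Because of the margin $3/4$, this test quantity is at least
\[
\tfrac34\sum_t\pi_{t,I}\bigl(|f_S(x_t)-r_I|-|\mu_t-f_S(x_t)|\bigr).
\]
Intervals whose midpoint is farther than $\approx2w$ from every one of the $K_S$ score levels therefore have their mass bounded by $B_S\sqrt{NL}/(\mathrm{dist}) $ plus their share of the residual $\mathsf R_{S,d-1}$. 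Only $O(K_S)$ intervals are ``near'' a level.

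For the far intervals, we order them by distance $k w$ from the nearest level, for $k=1,2,\dots$. Each needs mass $L/w^2$ to split, but only gets $\lesssim B_S^2L/(k w)^2$ plus its residual share. This yields $O(B_SK_S)$ splits from the deviation term. The residual share adds splits: a split at distance $\ge kw$ requires residual of order $kw\cdot L/w^2$, and summing $k$ up to $n$ gives $n^2L/w\le \mathsf R$. So the residual adds $\sqrt{B_SK_S\mathsf R_{S,d}w_d/L}$, with the $B_SK_S$ factor coming from each of $K_S$ levels and the cost-$B_S$ rescaling. Together these give $\Psi^{\rm res}_{d-1}$. Getting exactly this form, with the correct $B$-dependence and handling the truncation $(\cdot-w)_+$, is the delicate part.

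\textbf{Step 4: the segmented form.} We bound $\Theta_d\le \mathsf M+\sqrt{\mathsf M_S\mathsf R_S w_d/L}$ by using $\mathsf R_{S,d}\le\mathsf R_S$. For the $\mathsf M$ part, $\sum_d\sqrt{TL\mathsf M}$ over $O(\log T)$ depths gives $\sqrt{TL\mathsf M}$ up to logs. For the residual part, we take the minimum of $\sqrt{TL\Theta}$ and $L\Theta/w_d$. With $\Theta\approx\mathsf A\sqrt{w_d/L}$ this is $\min\{\sqrt{T}L^{1/4}\mathsf A^{1/2}w_d^{1/4},\,\sqrt L\mathsf A w_d^{-1/2}\}$. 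Balancing the two terms at $w_d\approx(L^{1/3}\mathsf A^{2/3}/T^{2/3})$ gives $T^{1/3}L^{1/3}\mathsf A^{2/3}$, and the geometric sums on either side converge.
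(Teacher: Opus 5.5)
Your outline follows the paper's architecture: regret-based bias control, depth-wise accounting, blockwise split counts from threshold representations, then dyadic summation. However, the step you call the main obstacle is not carried out, and what is written there fails.

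First, the pointwise lower bound is wrong. Since $|h_{r_I}(x_t)|\le B_S$, the correct bound is
\[
h_{r_I}(x_t)(\mu_t-r_I)\ \ge\ \tfrac34\delta_I-B_Sw-B_S\bigl(|\mu_t-f_S(x_t)|-w\bigr)_+ ,
\]
so the residual carries coefficient $B_S$, not $\tfrac34$. Your version fails e.g.\ for $h_{r_I}(x_t)=1$, $f_S(x_t)=r_I$, $\mu_t<f_S(x_t)$.

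Second, your own upper bound contains $\sum_g|\alpha_g|N^g_S(I)w\le B_SN_S(I)w$, and the truncation adds another $B_SwN_S(I)$. The signal $\tfrac34\delta_IN_S(I)$ beats these only when $\delta_I\ge C_0B_Sw$. So the near region must have radius $\Theta(B_Sw)$ per level, i.e.\ $O(B_SK_S)$ intervals, not $O(K_S)$ intervals within $2w$. This is the source of the $B_SK_S$ term. It is also what lets the truncated $\mathsf R_{S,d}$, rather than $\mathsf R_S$, appear, which the $\Psi^{\rm res}$ form of the theorem requires. With the coefficient $B_S$ in place, no ad hoc rescaling is needed. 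Pushing an interval at distance $\ell w$ to the split threshold costs residual $\asymp \ell L/(B_Sw)$. Your greedy filling, or the paper's choice $Q=C_1B_S+\sqrt{B_S\mathsf R_{S,d}w/(K_SL)}$, then yields $\sqrt{B_SK_S\mathsf R_{S,d}w/L}$.

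Third, the charging of splits to blocks is unsound. The claim that boundary-crossing lifetimes ``cost at most one extra per block'' is false. Every depth-$(d-1)$ interval active at a block boundary straddles it, and there can be many. An interval can also reach $L/w_{d-1}^2$ with only a small share of its mass in each block. The paper charges fractionally: each split $I$ satisfies $1\le\sum_{S\in\cP}\min\{1,N_S(I)w_{d-1}^2/L\}$. Summing over $I$ and exchanging sums bounds $|\cA_{d-1}|$ by $\sum_S\Xi_{d-1}(S)$, with no boundary cost.

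The summation also falls short of the statement. Summing $\sqrt{TL\mathsf M}$ over all $d_{\max}+1$ depths costs an extra $\log T$, while the theorem asserts $C\sqrt{TL\mathsf M}$ with an absolute constant. The paper avoids this as follows:
\begin{itemize}
\item at coarse depths it uses $Lm_d/w_d\le CL\mathsf M/w_d$;
\item at fine depths it uses the mass cap, giving $\min\{\sqrt{TLm_d},Lm_d/w_d\}\le CTw_d$;
\item both are geometric around $w_*=\sqrt{L\mathsf M/T}$, so the sum is $O(\sqrt{TL\mathsf M})$;
\item subadditivity of $x\mapsto\min\{\sqrt{TLx},Lx/w_d\}$ separates off the $\mathsf A$ part.
\end{itemize}

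Your worst-case sum pairs the wrong caps. The expression $\min\{\sqrt{TL2^d},L4^d\}$ increases in $d$ on both sides of the crossover, so it sums to order $T^{3/4}\sqrt L$. You need $\min\{\sqrt{TL/w_d},Tw_d\}$, using $\Theta_d\le Tw_d^2/L$ in the linear branch.

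A smaller point in Step 1: feasibility gives $\widehat\phi_t\le0$ for every $y_t$, not merely in expectation. What is bounded by $N^g_S(I)/2$ is the positive relative loss $\sum_{t\in S}[\ell_{t,e}-\widehat\ell_t]_+$, via the first-order conversion. The cumulative absolute regret can be of order $|S|$, because $\widehat\phi_t$ need not vanish when $g(x_t)\pi_{t,I}=0$.
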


Since $\MCerr_{\cG}(T)\le T$ deterministically, the same bounds also hold in
expectation up to an additive $O(1)$ term.

The main theorem gives a versatile bound, but can be confusing because of its generality. We now state a couple of more easily interpretable corollaries.  For example, if the entire predictable mean process is exactly described by a single score function with low multiscale threshold complexity then we get the following corollary:

\begin{corollary}[Exact threshold-simple multicalibration]
\label{cor:thresholdsimple}
If the predictable mean process has exact multiscale threshold complexity
$\Gamma_T^{\rm thr}(\mu_{1:T};\cG)$ as in
Definition~\ref{def:exact-threshold-complexity}, then, with probability at least
$1-O(1/T)$,
\[
\MCerr_{\cG}(T)
\le
\widetilde O\left(
\min\left\{
T^{2/3},
\sqrt{T\left(1+\Gamma_T^{\rm thr}(\mu_{1:T};\cG)\right)}
\right\}
\right).
\]
In particular, if for every depth $d$ the whole horizon admits a
$(K_d,B_d,d)$-threshold-simple score with $B_d\le B$ and $K_d\le K$, then
\[
\MCerr_{\cG}(T)
\le
\widetilde O\left(\min\left\{T^{2/3},\sqrt{BKT}\right\}\right).
\]
\end{corollary}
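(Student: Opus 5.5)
This corollary follows from the first (depth-wise) form of Theorem~\ref{thm:main-mc}. We work on the theorem's high-probability event. The plan is to show that an exact threshold-simple structure at depth $d-1$ makes $\Psi^{\rm res}_{d-1}$ at most $M^{\rm thr}_{d-1}$, and then sum over depths.

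\textbf{Step 1: reduce $\Psi^{\rm res}$ to $M^{\rm thr}$.} Fix $d\ge 1$. Take any partition $\cP_{d-1}$ and triples $(f_S,K_S,B_S)$ admissible for $M^{\rm thr}_{d-1}$, so that $\mathsf R_{S,d-1}(f_S)=0$ for every block. The same choice is admissible in Definition~\ref{def:residual-profile}. Its square-root residual terms all vanish, so its cost there is exactly $\sum_S B_SK_S$. Taking the infimum gives $\Psi^{\rm res}_{d-1}\le M^{\rm thr}_{d-1}$, and hence $\Theta_d\le M^{\rm thr}_{d-1}$. Together with $\Theta_0=1$, this yields
\[
\sum_{d=0}^{d_{\max}}\Theta_d\le 1+\sum_{d=0}^{d_{\max}-1}M^{\rm thr}_d\le 1+\Gamma_T^{\rm thr}.
\]
If some $M^{\rm thr}_{d-1}$ is infinite, $\Gamma_T^{\rm thr}$ is infinite as well, and the claim follows from the $T^{2/3}L^{1/3}$ branch of the theorem.

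\textbf{Step 2: sum over depths.} In the theorem's sum, bound each term by its first option, $\sqrt{TL\Theta_d}$. Cauchy--Schwarz over the $d_{\max}+1=O(\log T)$ depths gives
\[
\sum_d\sqrt{TL\Theta_d}\le\sqrt{(d_{\max}+1)\,TL\sum_d\Theta_d}\le\sqrt{(d_{\max}+1)\,TL\bigl(1+\Gamma_T^{\rm thr}\bigr)}.
\]
This is $\widetilde O\bigl(\sqrt{T(1+\Gamma^{\rm thr}_T)}\bigr)$, since $L=\log(eT|\bar\cG|)$ is absorbed into the $\widetilde O$. Taking the minimum with the $T^{2/3}L^{1/3}$ branch gives the first display.

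\textbf{Step 3: the uniform special case.} Suppose that for each depth $d$ the whole horizon is $(K_d,B_d,d)$-threshold simple. Using the trivial one-block partition $\{[T]\}$ gives $M^{\rm thr}_d\le B_dK_d\le BK$. Therefore $\Gamma_T^{\rm thr}\le (d_{\max}+1)BK$. By Remark~\ref{rem:threshold-cost-lower-bound}, $B\ge 3/4$, and $K\ge 1$, so $1+\Gamma_T^{\rm thr}=O(BK\log T)$. Substituting into Step 2 gives $\widetilde O(\min\{T^{2/3},\sqrt{BKT}\})$.

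\textbf{Main obstacle.} The only delicate point is the index shift: $\Theta_d$ uses $\Psi^{\rm res}_{d-1}$, not $\Psi^{\rm res}_d$. This is handled because $\Gamma^{\rm thr}_T$ sums $M^{\rm thr}$ over all depths, and in the special case the hypothesis is assumed at every depth. The remaining care is confirming that the extra logarithmic factor from Cauchy--Schwarz is absorbed into the $\widetilde O$.
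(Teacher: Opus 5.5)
Your proposal is correct and follows the paper's proof essentially step for step. Zero-residual witnesses for $M^{\rm thr}_{d-1}$ make the square-root terms vanish, so $\Psi^{\rm res}_{d-1}\le M^{\rm thr}_{d-1}$ and hence $\Theta_d\le M^{\rm thr}_{d-1}$; then Cauchy--Schwarz over the $O(\log T)$ depths gives the first bound, and the one-block partition at each depth gives $\Gamma_T^{\rm thr}=O(BK\log T)$ for the special case.
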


Similarly, suppose there is no group structure (so we are in the marginal calibration special case) and the sequence of label means is piecewise constant on $J$ pieces. Then we get the following: 

\begin{corollary}[Piecewise-stationary marginal calibration]
\label{cor:marginal-l1}
In the context-free marginal calibration setting ($\mathcal{X}$ is a singleton and
$\mathcal{G}=\{1\}$), suppose the predictable mean $q_t:=\mu_t$ is constant on
$J$ contiguous segments. Then, with probability at least $1-O(1/T)$, the dynamic-bin algorithm satisfies
\[
\Cal(T)\le \widetilde O\!\left(\min\{T^{2/3},\sqrt{JT}\}\right).
\]
\end{corollary}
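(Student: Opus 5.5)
This corollary follows from the second form of Theorem~\ref{thm:main-mc}: we choose a partition and a family of scores for which $\mathsf A(\cP,f)=0$ and $\mathsf M(\cP,f)=1+J$.

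Take $\cP$ to be the $J$ contiguous segments $S_j=\{\tau_{j-1},\dots,\tau_j-1\}$ of Definition~\ref{def:predmean}. On $S_j$ take the constant score $f_{S_j}\equiv q^{(j)}$. The context space is a singleton and $\bar{\cG}=\{\mathbf 1\}$, so by the sanity check after Definition~\ref{def:threshold-rep} this constant score is $(1,1,\tfrac14)$-threshold representable. Explicitly, for any $r$ we use $h_r\equiv 1$ if $q^{(j)}\ge r$ and $h_r\equiv -1$ otherwise. Each has $\ell_1$ cost $1$ and margin $1\ge 1-\eta$. Hence $K_{S_j}=B_{S_j}=1$.

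Because $\mu_t=q^{(j)}=f_{S_j}(x_t)$ for every $t\in S_j$, we get $\mathsf R_{S_j}(f_{S_j})=0$ for each $j$. Therefore $\mathsf A(\cP,f)=0$ and $\mathsf M(\cP,f)=1+\sum_j B_{S_j}K_{S_j}=1+J$.

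On the high-probability event of Theorem~\ref{thm:main-mc}, we then obtain
\[
\MCerr_{\cG}(T)\le C\min\left\{T^{2/3}L^{1/3},\ \sqrt{TL(1+J)}\right\}.
\]
With $\cG=\{\mathbf 1\}$ and $g\equiv 1$, $\MCerr_{\cG}(T)$ is exactly $\Cal(T)$. Also $L=\log(eT|\bar{\cG}|)=O(\log T)$, and $1+J\le 2J$. Absorbing the logarithmic factors into $\widetilde O$ gives $\Cal(T)\le\widetilde O(\min\{T^{2/3},\sqrt{JT}\})$.

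There is no real obstacle here; the argument is a direct instantiation. The only point needing care is that Definition~\ref{def:threshold-rep} must hold for every $r\in[0,1]$, including thresholds at the segment value itself; the choice of $h_r$ above covers all $r$.
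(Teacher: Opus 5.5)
Your proposal is correct and follows the paper's own proof: you partition time into the $J$ stationary segments, take constant scores $f_{S_j}\equiv q^{(j)}$ (which are $(1,1,\tfrac14)$-threshold representable with zero residual), obtain $\mathsf M(\cP,f)=1+J$ and $\mathsf A(\cP,f)=0$, and then apply the segmented branch of Theorem~\ref{thm:main-mc}. Your explicit choice $h_r\equiv\pm1$ simply spells out the sanity check that the paper cites.
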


Similarly, suppose the label mean may change at every round, but on average stays close to its median. Then we get the following corollary. 

\begin{corollary}[Slowly drifting marginal calibration]
\label{cor:marginal-special}
In the context-free marginal calibration setting ($\mathcal{X}$ is a singleton and
$\mathcal{G}=\{1\}$), let
$C_{\rm stat}(q)=\inf_{c\in[0,1]}\sum_{t=1}^T |q_t-c|$ be the best $L_1$ deviation from a
constant mean. Then, with probability at least $1-O(1/T)$,
\[
\Cal(T)
\le
\widetilde O\!\left(
\sqrt T+(T C_{\rm stat}(q))^{1/3}
\right).
\]
\end{corollary}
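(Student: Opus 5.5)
We will obtain Corollary~\ref{cor:marginal-special} directly from the second (partition-based) form of Theorem~\ref{thm:main-mc}, using the trivial partition $\cP=\{[T]\}$ and a constant score. In the marginal setting $\mathcal X$ is a singleton, so $\bar\cG=\{\mathbf 1\}$ and $|\bar\cG|=1$. Hence $L=\log(eT)=O(\log T)$, and $\MCerr_{\cG}(T)$ coincides with $\Cal(T)$, since the only group is the all-ones group.

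First, we fix a minimizer $c^\star\in[0,1]$ of $c\mapsto\sum_t|q_t-c|$. The function is continuous on a compact set, so a minimizer exists; concretely one can take a median of $q_1,\dots,q_T$. We then set $f_{[T]}\equiv c^\star$.

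Second, we check that this constant score is $(1,1,\tfrac14)$-threshold representable on $[T]$, as in the sanity check after Remark~\ref{rem:threshold-cost-lower-bound}. It takes a single value, so $K=1$. For each $r$ we take $h_r\equiv 1$ if $c^\star\ge r$ and $h_r\equiv -1$ otherwise. The coefficient cost is $1$, so $B=1$, and $h_r$ satisfies \eqref{eq:defthre} with margin $1\ge 3/4$.

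Third, we evaluate the quantities in Definition~\ref{def:segmented-residual}:
\[
\mathsf M(\cP,f)=1+1\cdot 1=2,
\qquad
\mathsf R_{[T]}(f)=\sum_t|q_t-c^\star|=C_{\rm stat}(q),
\qquad
\mathsf A(\cP,f)=\sqrt{C_{\rm stat}(q)}.
\]
Substituting into the second bound of Theorem~\ref{thm:main-mc} gives, on the same event of probability $1-O(1/T)$,
\[
\Cal(T)\le C\bigl(\sqrt{2TL}+T^{1/3}L^{1/3}C_{\rm stat}(q)^{1/3}\bigr)=\widetilde O\bigl(\sqrt T+(TC_{\rm stat}(q))^{1/3}\bigr).
\]
If one also keeps the $\min$ with $T^{2/3}L^{1/3}$ from the theorem, the bound only gets stronger.

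There is no real obstacle here; the substantive work is entirely inside Theorem~\ref{thm:main-mc}. The only points that need care are two bookkeeping checks. One is that adding the all-ones group does not enlarge the family, so $L$ stays logarithmic in $T$. The other is that the untruncated residual $\mathsf R_S$ of Definition~\ref{def:segmented-residual}, not the truncated $\mathsf R_{S,d}$, is the one that enters $\mathsf A$; that is exactly why it equals $C_{\rm stat}(q)$.
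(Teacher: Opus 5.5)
Your proposal is correct and follows the same route as the paper: you take the one-block partition $\cP=\{[T]\}$ with the constant score $f\equiv c^\star$, where $c^\star$ minimizes $\sum_t|q_t-c|$, and this score is $(1,1,\tfrac14)$-threshold representable. That gives $\mathsf M(\cP,f)=2$ and $\mathsf A(\cP,f)=\sqrt{C_{\rm stat}(q)}$, and plugging these into the segmented part of Theorem~\ref{thm:main-mc} yields the bound.
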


We note that it is possible to mix and match these bounds --- for example, we can read off from the main theorem a bound for the marginal calibration case in which there are $J$ blocks on each of which the label mean stays close to its median, but for which the median is very different on each of the $J$ blocks. This emphasizes that the algorithm is the same for each of these corollaries --- they are just different specializations of the main theorem --- and so we always get the best of any of the stated bounds. 

\section{Proof of the Main Theorem}
\label{sec:proofs}
\paragraph{Proof Overview.}
The proof has six steps. First, we establish that the sleeping-experts wrapper controls one-sided weighted bias for
every group, interval, and contiguous sub-block. Second, we show that if a depth-$d$ interval midpoint is far
from all values of a threshold-representable score on a block, then playing that interval grows 
signed bias at a linear rate; when $\mu_t$ is not exactly the score, the extra cost is the residual approximation
error on the rounds when that interval is played. Together these facts limit how frequently an interval far from a score value can be played. Third, we use this play bound to control how many
depth-$d$ intervals can be close to splitting during a block $S$. Formally, we bound
\[
\Xi_d(S):=\sum_{I\in\cD_d}\min\left\{1,\frac{N_S(I)w_d^2}{L}\right\},
\]
where an interval contributes $1$ if it receives enough mass on $S$ to reach the depth-$d$ split
threshold, and contributes proportionally otherwise. Grouping intervals by distance from the score
values gives
\[
\Xi_d(S)\lesssim BK+\sqrt{BK\,\mathsf R_{S,d}(f)w_d/L}.
\]
Fourth, we show these blockwise bounds control the number of intervals that ever become active at each depth.
Fifth, calibration is accounted for by separately bounding the contribution of predictions made using intervals at each depth.  Finally, a dyadic
summation lemma converts the depth-wise bounds into the clean segmented approximation rate in
Theorem~\ref{thm:main-mc}.

To separate the probabilistic terms from the later split-count analysis, we define
a global concentration event
$\mathcal{E}_{\text{global}} := \mathcal{E}_{\text{samp}} \cap \mathcal{E}_{\text{conc}}$,
and we further define $N_S^g(I):=\sum_{t\in S} g(x_t)\pi_{t,I}$. These events
separate the stochasticity into two sources:
\begin{itemize}
    \item \textbf{Sampling Error ($\mathcal{E}_{\text{samp}}$):} This event accounts for the variance introduced by the learner's randomized point predictions $p_t \sim \pi_t$. It ensures that for any group $g$ and interval $I$, the realized weighted outcomes stay close to their expected values under the mixed forecast:
    \begin{equation}
        \mathcal{E}_{\text{samp}} : \max_{g \in \bar{\mathcal{G}}, I \in \mathcal{D}} \left| \sum_{t=1}^T (\mathbf{1}[p_t = r_I] - \pi_{t,I}) g(x_t)(y_t - r_I) \right| \le C_{\text{samp}} \left( \sqrt{N_T^g(I) L} + L \right),
    \end{equation}
    where $N_T^g(I) = N_{[T]}^g(I)$.
    \item \textbf{Martingale Concentration ($\mathcal{E}_{\text{conc}}$):} This event bounds the outcome noise relative to the predictable means $\mu_t$. It ensures that on any contiguous block $S$, the realized outcomes do not drift too far from their conditional means:
    \begin{equation}
    \mathcal{E}_{\text{conc}} : \max_{\substack{g \in \bar{\mathcal{G}},\, I \in \mathcal{D}\\ S \subseteq [T]\ \mathrm{contiguous}}} \left| \sum_{t \in S} g(x_t) \pi_{t,I} (y_t - \mu_t) \right| \le C_{\text{conc}} \left( \sqrt{N_S^g(I) L} + L \right)
    \end{equation}
\end{itemize}
By applying Freedman's inequality (see Appendix~\ref{app:concentration}), each event fails with probability at most $1/4T$, ensuring $\Pr(\mathcal{E}_{\text{global}}) \ge 1 - \frac{1}{2T}$. Our analysis will condition on this event.


\subsection{From Bias Control via Regret to Depth-Wise Calibration}
To bound the global multicalibration error, we first 
control cumulative bias
within each activated interval. This follows from the regret-minimization properties
of our wrapper algorithm.
\begin{lemma}[Bias control via regret]\label{lem:bias-control-via-regret}
For every group $g\in\bar{\cG}$, every dyadic interval $I$, and every contiguous block
$S\subseteq A(I)$,
\begin{equation}
\left|\sum_{t\in S} g(x_t)\pi_{t,I}(y_t-r_I)\right|\le N_S^g(I)w_I + C_{\rm loc}\left(\sqrt{N_S^g(I)L}+L\right).
\end{equation}
\end{lemma}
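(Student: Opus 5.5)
The plan is to combine three ingredients: the feasibility inequality \eqref{eq:weighted-feas}, the sleeping-experts regret bound of confidence-rated AdaNormalHedge, and the event $\mathcal{E}_{\rm conc}$ (we work on $\mathcal{E}_{\rm global}$ throughout).

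\textbf{Step 1: the wrapper's cumulative gain is nonpositive.} The feasibility condition holds for every $y\in[0,1]$, so in particular it holds at $y=y_t$. Hence the wrapper's aggregate gain satisfies $\widehat\phi_t\le 0$ deterministically in every round. We feed the experts the normalized losses $\ell_{t,e}=-\phi^\sigma_{g,I}(\pi_t,y_t)/2$, which lie in $[-1,1]$ since $|y-r_I|+w_I\le 2$.

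\textbf{Step 2: regret against a fixed expert.} Fix $g$, $I$, a sign $\sigma$, and a contiguous block $S=[s,e]\subseteq A(I)$. Consider the expert $(s,g,I,\sigma)$; it is awake on all of $S$, because $I\in B_t$ for $t\in A(I)$ and $s\le t$. A caveat: this expert is also awake after time $e$, while we only want to compare on $S$. We handle this by applying the AdaNormalHedge sleeping-experts guarantee over $[s,e]$ only. That guarantee is anytime and holds for every prefix, so we use the regret at time $e$. AdaNormalHedge gives a bound of the form
\[
\sum_{t\le e}\mathbf 1[e\ \text{awake}]\bigl(\langle\omega_t,\ell_t\rangle-\ell_{t,e}\bigr)\lesssim \sqrt{\textstyle\sum_t |\langle\omega_t,\ell_t\rangle-\ell_{t,e}|\cdot \log(\#\text{experts}\cdot T)}+\log(\cdot).
\]
The number of experts is at most $T\cdot|\bar\cG|\cdot|\cD|\cdot 2$, which is polynomial in $T|\bar\cG|$, so the log factor is $O(L)$. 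For the aggregated losses to match $\widehat\phi_t$, the wrapper's prediction loss must be the weighted average of $\phi$ over the awake experts. Since we normalized weights over awake experts, this equals $-\widehat\phi_t/2\ge0$. So regret $\ge \sum_{t\in S}\phi^\sigma_{g,I}(\pi_t,y_t)/2$, which gives
\[
\sum_{t\in S}\phi^\sigma_{g,I}\lesssim \sqrt{V_S L}+L.
\]

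\textbf{Step 3: bounding the variance term.} This is the main obstacle: the variance term $\sum_t|\langle\omega_t,\ell_t\rangle-\ell_{t,e}|$ must be bounded by $O(N_S^g(I))$ rather than $|S|$. The expert's own loss is supported on $\{g(x_t)\pi_{t,I}>0\}$, with magnitude at most $2g(x_t)\pi_{t,I}$. The wrapper's term $\widehat\phi_t$ is not supported there, so I would use the confidence-rated form. In this form the instantaneous regret is defined as $r_{t,e}=c_{t,e}(\langle\omega,\ell\rangle-\ell_{t,e})$ with confidence $c_{t,e}=g(x_t)\pi_{t,I}$; equivalently, the expert is ``awake with weight'' $g(x_t)\pi_{t,I}$ and we use the unscaled bias $\pm(y_t-r_I)\mp w_I$. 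Then $|r_{t,e}|\le 4g(x_t)\pi_{t,I}$, so the AdaNormalHedge bound $\sqrt{\sum_t|r_{t,e}|\,L}+L$ becomes $\sqrt{N^g_S(I)L}+L$. I would also need the wrapper's contribution $\sum_t c_{t,e}\langle\omega_t,\ell_t\rangle$ to have the right sign. So I would set up the reduction so that the algorithm's per-round loss is $\widehat\phi_t$ aggregated with confidences and weights, making feasibility give exactly $\sum_e \omega_{t,e}c_{t,e}\,\text{gain}_{t,e}\le0$; this matches the definitions of $a_{t,I},b_{t,I}$.

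\textbf{Step 4: assembling the two signs.} With $\sigma=+$ we get $\sum_{t\in S}g\pi_{t,I}(y_t-r_I)\le N^g_S(I)w_I+C(\sqrt{N^g_SL}+L)$. With $\sigma=-$ we get the reverse inequality, and together these give the absolute value bound with $C_{\rm loc}$ absorbing the constants. No concentration is needed, since $\pi_{t,I}$ and $y_t$ appear directly; the bound is deterministic given the regret guarantee. If instead the statement were meant for realized predictions, we would add $\mathcal E_{\rm samp}$.
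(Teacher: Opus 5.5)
\textbf{Overall:} your Steps 1, 2 and 4 match the paper's proof. Your Step 3 takes a genuinely different route: it changes the wrapper rather than using the first-order bound.

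\textbf{Where you agree with the paper.} You use the same expert $(\min S,g,I,\sigma)$ and read off the anytime bound at horizon $\max S$. You use $\widehat\phi_t\le 0$ to get regret $\ge \frac{1}{2G}\sum_{t\in S}\phi^\sigma_{g,I}(\pi_t,y_t)$, then combine the two signs. You also correctly note that no concentration event is needed.

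\textbf{Where you diverge.} You correctly notice that with indicator awakeness the variance term $\sum_{t\in S}|\widehat\ell_t-\ell_{t,e}|$ can be of order $|S|$. You respond by giving expert $e$ confidence $g(x_t)\pi_{t,I}$. The paper instead keeps its wrapper: confidences $\mathbf 1\{s\le t,\ I\in B_t\}$ and losses $(G-\phi)/(2G)$. It then observes that the large variance is harmless.
\begin{itemize}
\item With $r_{t,e}=\widehat\ell_t-\ell_{t,e}$ one has $|r_{t,e}|=r_{t,e}+2(-r_{t,e})_+$, hence $C_{b,e}=R_{b,e}+2\widetilde L_{b,e}$.
\item Here $\widetilde L_{b,e}=\sum_{t\in S}\bigl[(\widehat\phi_t-\phi^+_{g,I}(\pi_t,y_t))/(2G)\bigr]_+\le \frac{1}{2}N_S^g(I)$.
\item This uses $\widehat\phi_t\le 0$ a second time. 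On rounds with $g(x_t)\pi_{t,I}=0$ the increment is $-\widehat\phi_t/(2G)\ge 0$. Such rounds inflate $R_{b,e}$ and $C_{b,e}$ by the same amount and never enter $\widetilde L_{b,e}$.
\item Solving $R\le\sqrt{(R+2\widetilde L)A}$ gives $R\le\sqrt{2\widetilde L A}+A$.
\end{itemize}
So the zeroth-order bound you wrote in Step 2 already yields $O(\sqrt{N_S^g(I)L}+L)$, with no change to the algorithm.

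\textbf{Status of your alternative.} It does go through, but two points need stating.
\begin{itemize}
\item It proves the bound for a different wrapper. Its $R_{t-1,e},C_{t-1,e}$, and hence its weights, differ from the paper's. As an argument about Algorithm~\ref{alg:mc-dynamic-bins} as specified, your Step 3 is not available, and the conversion $C_{b,e}=R_{b,e}+2\widetilde L_{b,e}$ above is the idea you are missing.
\item It needs a check you gloss over. The confidences $g(x_t)\pi_{t,I}$ are known only after the LP, and the LP is solved using the current weights. This is fine because the AdaNormalHedge analysis is a deterministic potential argument. It requires only $|r_{t,e}|\le 1$, past-measurable $w(R_{t-1,e},C_{t-1,e})$, and $\sum_e q_e\,w(R_{t-1,e},C_{t-1,e})\,r_{t,e}\le 0$. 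If the LP uses the unconfidenced weights $\omega_{t,e}\propto q_e w(R_{t-1,e},C_{t-1,e})$ over awake experts, feasibility gives what is needed. With the confidence-weighted average loss it makes the wrapper's loss nonnegative. Most transparently, take $r_{t,e}\propto\phi^\sigma_{g,I}(\pi_t,y_t)$ itself, and feasibility becomes exactly the potential condition.
\end{itemize}

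\textbf{What each route buys.} Yours gives an approachability-style argument in which $C_{b,e}=O(N_S^g(I))$ holds by construction and a zeroth-order bound suffices. The paper's cites a standard sleeping-experts guarantee as a black box and handles the variance issue with one line of algebra.
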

\begin{proof}[Proof Sketch]
We bound the cumulative one-sided weighted bias by relating it to the wrapper's
regret against the sleeping expert associated with $\phi^+_{g,I}$ and start time
$\min S$. Since the
feasibility condition (\ref{eq:weighted-feas}) ensures the aggregate one-step bias
satisfies $\widehat \phi_t \le 0$, regret against this expert upper-bounds the
cumulative positive bias term $\sum_{t\in S}\phi^+_{g,I}(\pi_t,y_t)$.
The first-order AdaNormalHedge guarantee scales with the expert's positive
relative loss, and in this construction that quantity is at most a constant
multiple of the total group-weighted play $N_S^g(I)$. Therefore the positive
one-sided bias is at most $O(\sqrt{N_S^g(I)L}+L)$. Adding back the
width slack in the definition of $\phi^+_{g,I}$ gives
\[
\sum_{t\in S}g(x_t)\pi_{t,I}(y_t-r_I)
\le
N_S^g(I)w_I+O(\sqrt{N_S^g(I)L}+L).
\]
The negative direction is identical using the expert associated with
$\phi^-_{g,I}$. See Appendix~\ref{app:biascontrolviaregret} for details.
\end{proof}

\begin{lemma}[Depth-wise calibration accounting]
\label{lem:depth-calibration}
Let $\cV_d:=\cV_T\cap\cD_d$ be the set of depth-$d$ intervals that are active in at least one
prediction round, and let $m_d:=|\cV_d|$. Assume $L\le T$. On $\cE_{\rm samp}$, for every group
$g\in\cG$ and every depth $d$,
\[
\sum_{I\in\cV_d}
\left|
\sum_{t:p_t=r_I}g(x_t)(y_t-r_I)
\right|
\le
C\min\left\{\sqrt{TLm_d},\frac{Lm_d}{w_d}\right\}.
\]
\end{lemma}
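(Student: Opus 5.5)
For each interval $I\in\cV_d$ I will decompose its realized bias into an expected part and a sampling part:
\[
\sum_{t:p_t=r_I}g(x_t)(y_t-r_I)=\sum_{t\in A(I)}g(x_t)\pi_{t,I}(y_t-r_I)+\sum_{t=1}^T(\mathbf 1[p_t=r_I]-\pi_{t,I})g(x_t)(y_t-r_I).
\]
This uses two facts. First, $\pi_{t,I}=0$ outside $A(I)$. Second, distinct active intervals have distinct midpoints, so $p_t=r_I$ exactly when $I$ was the sampled interval. I will check that no two intervals in $\cV_T$ sharing a midpoint are active in the same round; since the $B_t$ are partitions, this holds.

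On $\cE_{\rm samp}$ the second sum is at most $C_{\rm samp}(\sqrt{N^g_T(I)L}+L)$. For the first sum I apply Lemma~\ref{lem:bias-control-via-regret} with the block $S=A(I)$, which is contiguous. Using $N^g\le N(I)$, each $I\in\cV_d$ then contributes at most
\[
N(I)w_d+C'\bigl(\sqrt{N(I)L}+L\bigr).
\]

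Next I control $N(I)$ through the split rule. If $d<d_{\max}$, then $I$ either never split, so $N(I)<L/w_d^2$, or it split at the first round where the threshold was crossed, so $N(I)<L/w_d^2+1$. If $d=d_{\max}$, then $w_d\le 2/\sqrt T$, so $L/w_d^2\ge TL/4\ge T/4$, and trivially $N(I)\le T\le 4L/w_d^2$. In every case $N(I)\le 4L/w_d^2+1\lesssim L/w_d^2$, because $w_d\le 1\le L$.

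The remaining work is to sum over $I\in\cV_d$ in two ways.

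\textbf{Second bound.} The per-interval contribution is at most $N(I)w_d+C'(\sqrt{N(I)L}+L)$. With $N(I)\lesssim L/w_d^2$, we get $N(I)w_d\lesssim L/w_d$ and $\sqrt{N(I)L}\lesssim L/w_d$, and also $L\le L/w_d$. Summing over the $m_d$ intervals gives $O(Lm_d/w_d)$.

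\textbf{First bound.} Here I use $\sum_{I\in\cD_d}N(I)\le\sum_t\sum_{I\in B_t}\pi_{t,I}\le T$, since depth-$d$ intervals are disjoint. By Cauchy--Schwarz, $\sum_I\sqrt{N(I)L}\le\sqrt{TLm_d}$. The discretization term needs a different estimate: $N(I)w_d=\sqrt{N(I)}\cdot\sqrt{N(I)}w_d\lesssim\sqrt{N(I)}\sqrt L$, using $\sqrt{N(I)}w_d\lesssim\sqrt L$. So it is also bounded by $\sqrt{TLm_d}$ after Cauchy--Schwarz. That leaves the additive term $m_dL$.

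\textbf{Main obstacle.} The additive term $m_dL$ must be absorbed into $\min\{\sqrt{TLm_d},Lm_d/w_d\}$. It is trivially below $Lm_d/w_d$. The difficulty is to show it is below $\sqrt{TLm_d}$, which is equivalent to $m_dL\le T$. I plan to obtain this from the tree structure. For $d\ge1$, every depth-$d$ interval that is ever active has a parent at depth $d-1$ that split, and each split parent received mass at least $L/w_{d-1}^2\ge L$. Parents are disjoint at depth $d-1$, so $(m_d/2)L\le\sum_{I'\in\cD_{d-1}}N(I')\le T$, giving $m_dL\le 2T$. For $d=0$, $m_0=1$ and the assumption $L\le T$ suffices.

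With $m_dL\lesssim T$ we get $m_dL\le\sqrt{m_dL}\sqrt{T}\lesssim\sqrt{TLm_d}$. Combining the two summations gives the stated $\min$, with a constant depending only on $C_{\rm samp}$ and $C_{\rm loc}$.
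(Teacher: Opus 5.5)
Your proposal is correct and follows the paper's proof essentially step for step. You split the realized bias into the expected part, controlled by Lemma~\ref{lem:bias-control-via-regret} on $A(I)$, plus the sampling term on $\cE_{\rm samp}$; you use the split rule to get $N(I)w_d^2\lesssim L$, which absorbs the discretization term; then you apply Cauchy--Schwarz for the first branch and the per-interval bound $L/w_d$ for the second. The only difference is cosmetic: you absorb the additive $Lm_d$ term with the cruder count $m_dL\le 2T$ (each split parent carries mass at least $L$), while the paper cites the cap $m_d\le CTw_d^2/L$, which comes from the same parent-mass argument.
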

\begin{proof}[Proof Sketch]
Apply Lemma~\ref{lem:bias-control-via-regret} to each interval $I\in\cV_d$ on its active lifetime
and combine it with the sampling event. The split rule implies
$N(I)w_d^2\le CL$ for every interval that ever appears: a non-max-depth interval
can overshoot the split threshold by at most one round of mass, while a max-depth
interval has $N(I)\le T$ and $T w_d^2=O(1)$. Hence the
deterministic discretization term is absorbed by the square-root term. Summing
$\sqrt{N_T^g(I)L}$ over $I\in\cV_d$ gives the first bound by Cauchy--Schwarz and the second bound
using $N_T^g(I)\le N(I)\le CL/w_d^2$. See Appendix~\ref{app:depthcalibration}.
\end{proof}


\subsection{Controlling Splits From Approximation Structure}

We now connect the approximation costs to the adaptive tree. The key point is that if an
interval midpoint is far from all score values on a block, then the interval can be played often
only when a significant amount of residual approximation error is assigned to it.

\begin{lemma}[Intervals far from the score are rarely played]
\label{lem:high-bias-rarely-played}
Condition on $\cE_{\text{conc}}$. Fix a depth $d$, write $w=w_d$, let $S$ be a contiguous block,
and let $f$ be $(K,B,\tfrac14)$-threshold representable on $S$. Let
$C_f(S):=\{f(x_t):t\in S\}$. For a depth-$d$ interval $I$ with midpoint $r_I$, define
\[
\delta_I:=\operatorname{dist}(r_I,C_f(S)),
\qquad
\mathsf D_{S,I}:=
\sum_{t\in S\cap A(I)}
\pi_{t,I}\bigl(|\mu_t-f(x_t)|-w\bigr)_+.
\]
There is a universal constant $C_0$ such that, whenever $\delta_I>C_0Bw$,
\[
N_S(I)
\le
C\left[
\frac{B^2L}{(\delta_I-C_0Bw)^2}
+
\frac{B\mathsf D_{S,I}}{\delta_I-C_0Bw}
\right].
\]
The trivial bound $N_S(I)\le |S|$ always holds.
\end{lemma}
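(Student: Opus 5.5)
Since $\delta_I>0$, the midpoint $r_I$ lies strictly between two consecutive values of the score, or beyond all of them. Fix $r:=r_I$ and use the threshold representation $h_r=\sum_g\alpha_{r,g}g$ from Definition~\ref{def:threshold-rep}. The plan is to test $I$ against the linear combination of groups $h_r$. We apply Lemma~\ref{lem:bias-control-via-regret} to each group $g$ on the contiguous block $S':=S\cap A(I)$ (contiguous because $A(I)$ is) and weight by $\alpha_{r,g}$. Because $N_{S'}^g(I)\le N_{S'}(I)=N_S(I)=:N$ and $\sum_g|\alpha_{r,g}|\le B$, this gives
\[
\Bigl|\sum_{t\in S'}h_r(x_t)\pi_{t,I}(y_t-r)\Bigr|\le BNw+C_{\rm loc}B(\sqrt{NL}+L).
\]
Applying $\cE_{\rm conc}$ groupwise on $S'$ in the same way lets us replace $y_t$ by $\mu_t$ at an additional cost $O(B(\sqrt{NL}+L))$.

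For the lower bound, examine each round $t\in S'$. If $f(x_t)\ge r$, then $f(x_t)\ge r+\delta_I$ and $h_r(x_t)\ge 3/4$. If $f(x_t)<r$, then $f(x_t)\le r-\delta_I$ and $h_r(x_t)\le -3/4$. In either case the sign of $h_r(x_t)$ agrees with the sign of $f(x_t)-r$, so
\[
h_r(x_t)(\mu_t-r)\ge h_r(x_t)(f(x_t)-r)-|h_r(x_t)|\,|\mu_t-f(x_t)|\ge \tfrac34\delta_I-B|\mu_t-f(x_t)|,
\]
using $|h_r|\le B$. The error term splits as $|\mu_t-f(x_t)|\le w+(|\mu_t-f(x_t)|-w)_+$. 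Multiplying by $\pi_{t,I}$ and summing over $S'$ gives
\[
\sum_{t\in S'}\pi_{t,I}h_r(x_t)(\mu_t-r)\ge \tfrac34\delta_I N-BwN-B\,\mathsf D_{S,I}.
\]

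Comparing the two bounds gives
\[
\Bigl(\tfrac34\delta_I-2Bw\Bigr)N\le B\,\mathsf D_{S,I}+C'B(\sqrt{NL}+L).
\]
Choose $C_0$ large enough (for example $C_0=8/3$ plus slack) so that $\tfrac34\delta_I-2Bw\ge \tfrac34(\delta_I-C_0Bw)=:\tfrac34\Delta>0$. This leaves a quadratic inequality in $\sqrt N$ of the form $\Delta N\lesssim B\sqrt{L}\sqrt N+BL+B\mathsf D$. Solving it, via $\sqrt N\le(a+\sqrt{a^2+4\Delta c})/(2\Delta)$, gives $N\lesssim B^2L/\Delta^2+(BL+B\mathsf D)/\Delta$. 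The middle term $BL/\Delta$ is absorbed into the first: $\Delta\le 1$ and $B\ge 3/4$ by Remark~\ref{rem:threshold-cost-lower-bound}, so $BL/\Delta\le \tfrac43 B^2L/\Delta^2$. The trivial bound $N\le|S|$ holds because $\pi_{t,I}\le1$.

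I expect the main obstacle to be bookkeeping rather than any single idea. The width slack $N^g w$ from Lemma~\ref{lem:bias-control-via-regret} gets multiplied by $|\alpha_{r,g}|$, which produces the $Bw$ shift; it must be absorbed into $C_0Bw$ and not into $\delta_I$ alone. A second check is that the concentration and regret bounds are applied on $S\cap A(I)$ and not on $S$; this is harmless because $\pi_{t,I}=0$ outside $A(I)$.
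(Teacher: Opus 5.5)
Your proposal is correct and follows essentially the same route as the paper. You lower-bound $\sum_{t\in S\cap A(I)}\pi_{t,I}h_{r_I}(x_t)(\mu_t-r_I)$ using the margin condition and $|h_{r_I}|\le B$, and upper-bound it by applying Lemma~\ref{lem:bias-control-via-regret} and $\cE_{\rm conc}$ groupwise with weights $\alpha_{r_I,g}$ on the contiguous block $S\cap A(I)$. You then solve the resulting quadratic in $\sqrt{N_S(I)}$ and absorb the $BL/\Delta$ term via $\Delta\le 1$ and $B\ge 3/4$; the only cosmetic difference is that you solve the quadratic explicitly where the paper uses Young's inequality.
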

\begin{proof}[Proof Sketch]
Let $h_{r_I}=\sum_g\alpha_g g$ be the threshold representation of $f$ at threshold $r_I$. The
margin condition gives
$h_{r_I}(x_t)(f(x_t)-r_I)\ge (3/4)|f(x_t)-r_I|$, while
$|h_{r_I}(x_t)|\le B$. Therefore
\[
h_{r_I}(x_t)(\mu_t-r_I)
\ge
(3/4)\delta_I-Bw-B\bigl(|\mu_t-f(x_t)|-w\bigr)_+.
\]
After summing with weights $\pi_{t,I}$ over $S\cap A(I)$, this lower bounds the expected signed bias by a linear term in $N_S(I)$ minus $B\mathsf D_{S,I}$. The algorithmic upper bound follows by expanding $h_{r_I}$ in the group basis and applying Lemma~\ref{lem:bias-control-via-regret} and $\cE_{\rm conc}$ to each group. Combining the two bounds and solving the resulting quadratic gives the claim. See Appendix~\ref{app:highbiasrarelyplayed}.
\end{proof}

We next convert control of the total play on a single block into a bound on how much that block can
contribute toward splits at depth $d$.
\begin{lemma}[Split count on one block]\label{lm:amortizedsplit}
Condition on $\cE_{\text{conc}}$. Fix depth $d$, write $w=w_d$, and let $S$ be a contiguous
block. If $f$ is $(K,B,\tfrac14)$-threshold representable on $S$, then
\[
\Xi_d(S)
:=
\sum_{I\in\cD_d}
\min\left\{1,\frac{N_S(I)w_d^2}{L}\right\}
\le
C\left(
BK+\sqrt{\frac{BK\,\mathsf R_{S,d}(f)\,w}{L}}
\right).
\]
\end{lemma}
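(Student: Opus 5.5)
The plan is to split the depth-$d$ intervals into a near band and far intervals, using the score values $C_f(S)$ (at most $K$ of them) as centers. Fix a radius parameter $\rho\ge 2C_0Bw$, to be optimized later. Call $I\in\cD_d$ \emph{near} if $\delta_I\le\rho$ and \emph{far} otherwise.

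Near intervals are charged $1$ each. Each of the $K$ score values has at most $O(\rho/w+1)$ depth-$d$ midpoints within distance $\rho$, since depth-$d$ midpoints are spaced by $w$. So near intervals contribute at most $O(K(1+\rho/w))$ to $\Xi_d(S)$.

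Far intervals are handled by Lemma~\ref{lem:high-bias-rarely-played}. When $\delta_I>\rho\ge 2C_0Bw$ we have $\delta_I-C_0Bw\ge\delta_I/2$, so
\[
\frac{N_S(I)w^2}{L}\le C\left[\frac{B^2w^2}{\delta_I^2}+\frac{Bw^2\mathsf D_{S,I}}{L\delta_I}\right].
\]
The first term is summed over far intervals grouped by distance. For each of the $K$ centers, the midpoints at distance about $kw$ (with $k\ge\rho/w$) number $O(1)$. This gives $\sum_{\text{far}}B^2w^2/\delta_I^2\lesssim KB^2\sum_{k\ge\rho/w}k^{-2}\lesssim KB^2w/\rho$. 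The second term is bounded by $\delta_I>\rho$ together with
\[
\sum_I\mathsf D_{S,I}\le\sum_{t\in S}\Bigl(\sum_I\pi_{t,I}\Bigr)\bigl(|\mu_t-f(x_t)|-w\bigr)_+\le\mathsf R_{S,d}(f),
\]
which uses $\sum_I\pi_{t,I}\le1$ over depth-$d$ intervals. Hence far intervals contribute at most $C\bigl(KB^2w/\rho+Bw^2\mathsf R_{S,d}(f)/(L\rho)\bigr)$. I use the $\min\{1,\cdot\}$ only for the near band.

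The total to optimize is
\[
K+\frac{K\rho}{w}+\frac{KB^2w}{\rho}+\frac{Bw^2\mathsf R}{L\rho},\qquad \rho\ge 2C_0Bw.
\]
Taking $\rho=\max\{2C_0Bw,\ w\sqrt{B\mathsf R/(KL)}\}$, I check the cases.
\begin{itemize}
\item In the first case, $K\rho/w=O(BK)$ and $KB^2w/\rho=O(BK)$. Also $Bw^2\mathsf R/(L\rho)\le w\sqrt{BK\mathsf R/L}\cdot\sqrt{B\mathsf R/(KL)}/(2C_0B)$, which is still $O(BK)$ because $\sqrt{B\mathsf R/(KL)}\le 2C_0B$ in this case.
\item In the second case, $K\rho/w=\sqrt{BK\mathsf R/L}$ and $Bw^2\mathsf R/(L\rho)=\sqrt{BK\mathsf R/L}$, while $KB^2w/\rho\le O(BK)$.
\end{itemize}
In both cases the total is $O(BK+\sqrt{BK\mathsf R/L})$.

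This does not match the target. The target has $\sqrt{BK\mathsf R\,w/L}$, carrying an extra factor $\sqrt{w}$. So the scaling needs recomputation, and this mismatch is the step I expect to be the main obstacle. The likely fix is to use more carefully that $\mathsf D_{S,I}$ enters $N_S(I)$ with weight $B/\delta_I$. Then $\frac{w^2}{L}\cdot\frac{B\mathsf D}{\delta}$ summed gives $Bw^2\mathsf R/(L\rho)$, and balancing against $K\rho/w$ gives $\rho=w\sqrt{Bw\mathsf R/(KL)}\cdot w^{-1/2}$…

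I would redo this balance carefully. Possibly the correct near-band count is $K\rho/w$ while the residual term is $Bw^2\mathsf R/(L\rho)$, giving $\rho^2\asymp Bw^3\mathsf R/(KL)$ and a product of $\sqrt{BK\mathsf R\,w/L}$. I would also check the edge cases: $\mathsf R=0$, and $\rho$ exceeding $1$, where there are no far intervals and the total is at most $2^d$ intervals.
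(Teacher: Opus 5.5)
Your decomposition is the paper's. Each near interval (within distance $\rho$ of a score value) is charged $1$, for $O(K(1+\rho/w))$ in total. Far intervals are bounded via Lemma~\ref{lem:high-bias-rarely-played} using $\delta_I-C_0Bw\ge\delta_I/2$, and the $B^2w^2/\delta_I^2$ tail sums to $O(KB^2w/\rho)$. The residual part is controlled by $\sum_{I\in\cD_d}\mathsf D_{S,I}\le\mathsf R_{S,d}(f)$. Everything up to the total $K+K\rho/w+KB^2w/\rho+Bw^2\mathsf R/(L\rho)$ is correct, and your $\rho$ plays the role of the paper's $Qw$.

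The one concrete error is in the balancing step, so as written the argument does not yet prove the statement. With $\rho=w\sqrt{B\mathsf R/(KL)}$ you claimed $Bw^2\mathsf R/(L\rho)=\sqrt{BK\mathsf R/L}$, but in fact
\[
\frac{Bw^2\mathsf R}{L\rho}=w\sqrt{\frac{BK\mathsf R}{L}}.
\]
So the two terms are not balanced, and the near-band term $K\rho/w=\sqrt{BK\mathsf R/L}$ is too large by a factor $w^{-1/2}$. What you actually prove is only $O(BK+\sqrt{BK\mathsf R/L})$. The expression $w\sqrt{Bw\mathsf R/(KL)}\cdot w^{-1/2}$ in your ellipsis is again the same $\rho$.

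Your last sentence has the right balance. Take
\[
\rho=\max\left\{2C_0Bw,\ w^{3/2}\sqrt{\frac{B\mathsf R}{KL}}\right\},
\]
which up to constants is the paper's $Q=C_1B+\sqrt{B\mathsf R w/(KL)}$ with $\rho=Qw$.
\begin{itemize}
\item If the second entry is the larger, then $K\rho/w=Bw^2\mathsf R/(L\rho)=\sqrt{BK\mathsf R w/L}$ and $KB^2w/\rho\le KB/(2C_0)$.
\item If the first entry is the larger, then $K\rho/w=2C_0BK$ and $KB^2w/\rho\le KB/(2C_0)$. The case condition $\sqrt{wB\mathsf R/(KL)}\le 2C_0B$ gives $Bw^2\mathsf R/(L\rho)=w\mathsf R/(2C_0L)\le 2C_0BK$.
\end{itemize}
The leftover $K$ from the near count is absorbed using $B\ge 3/4$ (Remark~\ref{rem:threshold-cost-lower-bound}). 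With this choice of $\rho$ carried through the case analysis, your argument proves the lemma and coincides with the paper's proof.
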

\begin{proof}[Proof Sketch]
We group depth-$d$ intervals according to their distance from the nearest realized
score value. Intervals within distance $O(Bw_d)$ are few, contributing at
most $O(BK)$ to $\Xi_d(S)$. For farther intervals, Lemma~\ref{lem:high-bias-rarely-played}
gives a bound with two terms: a threshold-cost term that decays quadratically in
the distance, and a residual term depending on the residual mass assigned to that
interval. Summing over distance ranges and optimizing the cutoff between nearby
and far intervals gives the claimed bound. See Appendix~\ref{app:splitcount}.
\end{proof}

Let $\cA_d$ denote the set of depth-$d$ intervals that split by time $T$.

\begin{lemma}[Active intervals from split bounds]\label{lem:active-profile}
Condition on $\cE_{\rm conc}$ and let $m_d=|\cV_d|$. Then
\[
m_0=1,
\qquad
m_d
\le
C\min\left\{
2^d,\frac{T w_d^2}{L},\Psi_{d-1}^{\rm res}(\mu_{1:T};\cG)
\right\}
\quad(d\ge1).
\]
\end{lemma}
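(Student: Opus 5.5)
The approach is to exploit the tree structure. Every depth-$d$ interval that is ever active ($d\ge1$) is the child of a depth-$(d-1)$ interval that split, so $m_d=2|\cA_{d-1}|$. It therefore suffices to bound $|\cA_{d-1}|$ by each of the three quantities in the minimum. The root gives $m_0=1$ immediately. The bound $m_d\le 2^d$ is trivial because $|\cD_d|=2^d$.

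For the bound $m_d\le C\,Tw_d^2/L$, I will use mass conservation. Each interval in $\cA_{d-1}$ has accumulated $N(I)\ge L/w_{d-1}^2$ before splitting. Total play at each round sums to one across active intervals, and each interval's counter only accumulates while it is active. Hence $\sum_{I\in\cD_{d-1}}N(I)\le T$, which gives
\[
|\cA_{d-1}|\le \frac{Tw_{d-1}^2}{L}=\frac{4Tw_d^2}{L},
\]
and so $m_d\le 8Tw_d^2/L$.

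For the $\Psi^{\rm res}_{d-1}$ bound, fix any admissible partition $\cP_{d-1}$ with triples $(f_S,K_S,B_S)$. The key claim is
\[
|\cA_{d-1}|\le \sum_{S\in\cP_{d-1}}\Xi_{d-1}(S).
\]
I will argue this by subadditivity of $x\mapsto\min\{1,x\}$. For each $I\in\cA_{d-1}$, we have $\sum_{S}N_S(I)=N(I)\ge L/w_{d-1}^2$, so
\[
1=\min\Bigl\{1,\tfrac{N(I)w_{d-1}^2}{L}\Bigr\}\le\sum_S\min\Bigl\{1,\tfrac{N_S(I)w_{d-1}^2}{L}\Bigr\}.
\]
Summing over $I\in\cA_{d-1}$ and extending the sum to all of $\cD_{d-1}$ (the summands are nonnegative) gives the claim. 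Applying Lemma~\ref{lm:amortizedsplit} to each block at depth $d-1$ then bounds each $\Xi_{d-1}(S)$ by $C\bigl(B_SK_S+\sqrt{B_SK_S\mathsf R_{S,d-1}(f_S)w_{d-1}/L}\bigr)$. Taking the infimum over partitions and triples yields $|\cA_{d-1}|\le C\Psi^{\rm res}_{d-1}$. If no admissible representation exists, $\Psi^{\rm res}_{d-1}=+\infty$ and the bound is vacuous.

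I do not expect a serious obstacle, since the heavy lifting is in Lemma~\ref{lm:amortizedsplit}. The points needing care are the following:
\begin{itemize}
\item[(i)] Lemma~\ref{lm:amortizedsplit} must hold simultaneously for all blocks on $\cE_{\rm conc}$. It does, because $\cE_{\rm conc}$ is uniform over contiguous blocks.
\item[(ii)] $N_S(I)$ counts only play during $I$'s active lifetime. This holds by the convention $\pi_{t,I}=0$ outside $A(I)$, so the blockwise masses genuinely sum to $N(I)$.
\item[(iii)] Depth $d_{\max}$ intervals never split. This is harmless, since we bound $m_d$ only through splits at depth $d-1<d_{\max}$.
\end{itemize}
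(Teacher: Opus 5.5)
Your proposal is correct and follows the paper's proof essentially step for step. It uses parent counting via the children of split depth-$(d-1)$ intervals, the grid and total-mass caps giving $2^d$ and $8Tw_d^2/L$, and subadditivity of $x\mapsto\min\{1,x\}$ to spread each split parent's mass over the blocks of an arbitrary admissible partition, then invokes Lemma~\ref{lm:amortizedsplit} and takes the infimum. One small correction: write $m_d\le 2|\cA_{d-1}|$ rather than equality, since an interval that splits at the end of round $T$ has children that never become active within $[T]$; only the inequality is needed.
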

\begin{proof}[Proof Sketch]
Every active depth-$d$ interval is generated by a depth-$(d-1)$ parent that split,
so $m_d\le 2|\mathcal A_{d-1}|$. The deterministic caps $m_d\le 2^d$ and
$m_d\le C T w_d^2/L$ follow from the dyadic grid and the total play mass. For the
instance-dependent bound, split mass is additive over any contiguous temporal
partition, and Lemma~\ref{lm:amortizedsplit} bounds the resulting blockwise split
costs. Taking the infimum over partitions and score representations gives the
bound in terms of $\Psi_{d-1}^{\rm res}$. See Appendix~\ref{app:activeprofile}.
\end{proof}

\begin{lemma}[Dyadic summation]
\label{lem:dyadic-summation}
Let $w_d=2^{-d}$, and let $M\ge1$ and $A\ge0$. Suppose $m_0\le C$ and, for every $d\ge1$,
\[
m_d
\le
C\min\left\{
2^d,\frac{T w_d^2}{L},M+A\sqrt{\frac{w_d}{L}}
\right\}.
\]
Then
\[
\sum_{d=0}^{d_{\max}}
\min\left\{
\sqrt{TLm_d},\frac{Lm_d}{w_d}
\right\}
\le
C\left(
\sqrt{TLM}
+
T^{1/3}L^{1/3}A^{2/3}
\right).
\]
\end{lemma}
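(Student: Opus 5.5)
The plan is to split the hypothesis on $m_d$ into an ``$M$-part'' and an ``$A$-part'' and to sum each part separately over depths. Each part gives a geometric series, so its sum is of the order of its largest term. The largest term sits at the crossover depth where the coarse-scale bound $Lm_d/w_d$ and the fine-scale bound $\sqrt{TLm_d}$ (evaluated with $m_d\le Tw_d^2/L$) balance.

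\emph{Step 1 (reduction).} Let $F_d(m):=\min\{\sqrt{TLm},Lm/w_d\}$. This function is nondecreasing in $m$ and satisfies $F_d(2m)\le 2F_d(m)$. Hence $F_d(m_1+m_2)\le F_d(2\max\{m_1,m_2\})\le 2\bigl(F_d(m_1)+F_d(m_2)\bigr)$. Also, for nonnegative $a,M,X$ we have $\min\{a,M+X\}\le \min\{a,M\}+\min\{a,X\}$, which follows by checking the cases $a<M$ and $a\ge M$. Set $a_d:=\min\{2^d,Tw_d^2/L\}$, $m^{(1)}_d:=C\min\{a_d,M\}$ and $m^{(2)}_d:=C\min\{a_d,A\sqrt{w_d/L}\}$. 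Then $m_d\le m^{(1)}_d+m^{(2)}_d$, and it suffices to bound $\sum_d F_d(m^{(1)}_d)$ and $\sum_d F_d(m^{(2)}_d)$. The $d=0$ term is $F_0(C)\le CL$. Since $L\le\sqrt{TLM}$ whenever $L\le T$ and $M\ge1$, it is absorbed; the case $L>T$ is trivial since then $Tw_d^2/L<1$ caps everything.

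\emph{Step 2 ($M$-part).} Define $d_1$ by $2^{-d_1}\asymp\sqrt{LM/T}$; this is the depth at which $Tw_d^2/L=M$. For $d\le d_1$, use $F_d(m)\le Lm/w_d\le CLM2^d$. The geometric sum is $O(LM2^{d_1})=O(\sqrt{TLM})$. For $d>d_1$, use $m\le Tw_d^2/L$, so $F_d(m)\le\sqrt{TLm}\le \sqrt C\,Tw_d$. This sums to $O(T2^{-d_1})=O(\sqrt{TLM})$. If $d_1\le 0$, only the second regime occurs and the bound $O(T)\le O(\sqrt{TLM})$ still holds.

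\emph{Step 3 ($A$-part).} For small $d$, use $F_d(m)\le Lm/w_d\le CA\sqrt{L}\,2^{d/2}$, which increases geometrically in $d$. For large $d$, use $F_d(m)\le \sqrt C\,Tw_d=\sqrt C\,T2^{-d}$, which decreases geometrically. Both are valid at every depth, so I take the first for $d\le d_2$ and the second for $d>d_2$. Here $d_2$ is chosen so that $A\sqrt L\,2^{3d_2/2}\asymp T$, i.e.\ $2^{d_2}\asymp (T/(A\sqrt L))^{2/3}$. Both geometric sums are then dominated by their terms at $d_2$, namely $T2^{-d_2}\asymp T^{1/3}A^{2/3}L^{1/3}$. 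The degenerate case is $A=0$, or $d_2$ outside $[0,d_{\max}]$. In that case one regime is empty and the remaining sum is bounded by its endpoint term. That term is at most $O(T^{1/3}A^{2/3}L^{1/3})$ if $d_2>d_{\max}$, and is absorbed into $\sqrt{TL}$ (using $T2^{-d}\le T$ and $A\sqrt L\ge T$) when $d_2<0$.

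The only step needing care is Step 1. The naive inequality $\min\{x_1+x_2,y_1+y_2\}\le\min\{x_1,y_1\}+\min\{x_2,y_2\}$ is false, which is why I route through monotonicity and $F_d(2m)\le 2F_d(m)$. The rest is bookkeeping of two geometric series and their crossover depths.
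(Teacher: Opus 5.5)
Your proposal is correct and is essentially the paper's proof: you use the same split of $m_d$ into an $M$-part and an $A$-part and the same two geometric-series crossover arguments at $w\asymp\sqrt{LM/T}$ and $w\asymp(A\sqrt L/T)^{2/3}$; the only difference is that you get subadditivity of $F_d$ (up to a factor $2$) from monotonicity and $F_d(2m)\le 2F_d(m)$, whereas the paper gets exact subadditivity from concavity and $F_d(0)=0$. One small slip: when $d_2<0$, the leftover $O(T)$ sum is absorbed into $T^{1/3}L^{1/3}A^{2/3}$ (since $A\sqrt L\ge T$ gives $T\le T^{1/3}(A\sqrt L)^{2/3}$), not into $\sqrt{TL}$, which would require $T\le L$.
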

\begin{proof}[Proof Sketch]
The deterministic caps give the worst-case contribution
$O(T^{2/3}L^{1/3})$. For the instance-dependent contribution, split
$M+A\sqrt{w_d/L}$ into an $M$ part and an $A$ part. The $M$ part sums to
$O(\sqrt{TLM})$ by a dyadic crossover between $Tw_d$ and $LM/w_d$. The $A$ part
sums to $O(T^{1/3}L^{1/3}A^{2/3})$ by the crossover
$w_*=(A\sqrt L/T)^{2/3}$. See Appendix~\ref{app:dyadicsum} for the details.
\end{proof}

\subsection{Proof of the Main Theorem}

\begin{proof}[Proof of Theorem~\ref{thm:main-mc}]
If $L>T$, then the trivial bound $\MCerr_{\cG}(T)\le T$ is covered by the
worst-case term $T^{2/3}L^{1/3}$; it is also covered by the depth-wise branch
because the depth-$0$ term is at least $T$. Hence assume $L\le T$ and condition on
$\cE_{\rm global}$.

Each realized prediction value is the midpoint of a unique interval that is active
at some time. Group these ever-active intervals by their depth. For any fixed
group $g\in\cG$, 
\[
\sum_{v}
\left|\sum_{t:p_t=v}g(x_t)(y_t-v)\right|
=
\sum_{d=0}^{d_{\max}}\sum_{I\in\cV_d}
\left|\sum_{t:p_t=r_I}g(x_t)(y_t-r_I)\right|.
\]
Applying Lemma~\ref{lem:depth-calibration} at each depth and then maximizing over
$g$ gives
\[
\MCerr_{\cG}(T)
\le
C\sum_{d=0}^{d_{\max}}
\min\left\{\sqrt{TLm_d},\frac{Lm_d}{w_d}\right\}.
\]
Lemma~\ref{lem:active-profile} gives $m_d\le C\Theta_d$ for every depth $d$.
Absorbing constants into $C$ proves the depth-wise bound in the theorem. The
deterministic parts of $\Theta_d$ also give the worst-case branch. Indeed, for
$d\ge1$ we use only
\[
\Theta_d\le 2^d=\frac1{w_d},
\qquad
\Theta_d\le \frac{T w_d^2}{L}.
\]
Thus the $d$th summand is at most
\[
\min\left\{
\sqrt{\frac{TL}{w_d}},
T w_d
\right\}.
\]
Let $w_*:=(L/T)^{1/3}$, where the two terms are equal. For depths with
$w_d\ge w_*$, the terms $\sqrt{TL/w_d}$ form a geometrically increasing sequence
as $d$ increases, so their total contribution is at most a constant times their
value at $w_*$. For depths with $w_d<w_*$, the terms $T w_d$ form a geometrically
decreasing tail, so their total contribution is at most a constant times
$T w_*$. Since
\[
\sqrt{\frac{TL}{w_*}}=T w_*=T^{2/3}L^{1/3},
\]
and the depth-$0$ term is $O(\sqrt{TL})\le O(T^{2/3}L^{1/3})$ when $L\le T$, we
obtain
\[
\sum_{d=0}^{d_{\max}}
\min\left\{\sqrt{TL\Theta_d},\frac{L\Theta_d}{w_d}\right\}
\le
C T^{2/3}L^{1/3}
\]
as claimed.

For the segmented consequence, fix an admissible partition $\cP$ and triples
$(f,K,B)=\{(f_S,K_S,B_S):S\in\cP\}$. For every $d\ge1$, since
$\mathsf R_{S,d-1}(f_S)\le \mathsf R_S(f_S)$,
\[
\Psi_{d-1}^{\rm res}(\mu_{1:T};\cG)
\le
\sum_{S\in\cP}
\left[
B_SK_S+\sqrt{\frac{B_SK_S\,\mathsf R_S(f_S)\,w_{d-1}}{L}}
\right]
\le
\mathsf M(\cP,f)+\sqrt2\,\mathsf A(\cP,f)\sqrt{\frac{w_d}{L}},
\]
where the last step uses $w_{d-1}=2w_d$. 
Combining this with
Lemma~\ref{lem:active-profile},
the active interval counts satisfy the hypothesis
of Lemma~\ref{lem:dyadic-summation} with $M=\mathsf M(\cP,f)$ and $A$ equal to
an absolute-constant multiple of $\mathsf A(\cP,f)$. Applying that lemma gives
the segmented bound. The worst-case cap is the first branch proved above.
\end{proof}

\begin{proof}[Proof of Corollary~\ref{cor:thresholdsimple}]
Use the depth-wise part of Theorem~\ref{thm:main-mc}. For each depth $d$, every
exact threshold-simple partition witnessing $M_d^{\rm thr}$ has zero
scale-truncated residual, hence
$\Psi_d^{\rm res}(\mu_{1:T};\cG)\le M_d^{\rm thr}(\mu_{1:T};\cG)$.
Therefore, for $d\ge1$, $\Theta_d\le M_{d-1}^{\rm thr}$, while $\Theta_0=1$.
The depth-wise sum is bounded by
\[
\sqrt{TL}
+
\sum_{d=1}^{d_{\max}}\sqrt{TL\,M_{d-1}^{\rm thr}}
\le
\widetilde O\!\left(
\sqrt{TL\left(1+\Gamma_T^{\rm thr}(\mu_{1:T};\cG)\right)}
\right),
\]
where the last step uses Cauchy--Schwarz over the $O(\log T)$ depths. Taking the
minimum with the worst-case branch proves the first claim.

For the special case, the one-block partition at each depth gives
$M_d^{\rm thr}\le B_dK_d\le BK$. Hence
$\Gamma_T^{\rm thr}\le O(BK\log T)$, and the logarithmic factor is hidden in
$\widetilde O(\cdot)$.
\end{proof}

\begin{proof}[Proof of Corollary~\ref{cor:marginal-l1}]
Let the stationary segments be $S_1,\dots,S_J$, and write $q_t=q^{(j)}$ on
$S_j$. Let $\cP=\{S_1,\dots,S_J\}$. On each segment $S_j$, use the constant score
$f_{S_j}\equiv q^{(j)}$. In the
singleton-context setting, each such score is $(1,1,\tfrac14)$-threshold
representable and has zero residual. Thus $\mathsf M(\cP,f)=1+J$ and
$\mathsf A(\cP,f)=0$. The segmented part of Theorem~\ref{thm:main-mc} gives
\[
\Cal(T)
\le
\widetilde O\left(\min\left\{T^{2/3},\sqrt{JT}\right\}\right).
\]
\end{proof}

\begin{proof}[Proof of Corollary~\ref{cor:marginal-special}]
Let $c^*\in\arg\min_{c\in[0,1]}\sum_{t=1}^T|q_t-c|$. Use the one-block partition
$\cP=\{[T]\}$ and the constant score $f\equiv c^*$. In the singleton-context
setting this score is $(1,1,\tfrac14)$-threshold representable. Therefore
$\mathsf M(\cP,f)=O(1)$ and
$\mathsf A(\cP,f)=\sqrt{C_{\rm stat}(q)}$. The segmented part of
Theorem~\ref{thm:main-mc} gives
\[
\Cal(T)
\le
\widetilde O\left(
\sqrt T+T^{1/3}C_{\rm stat}(q)^{1/3}
\right)
=
\widetilde O\left(
\sqrt T+(T C_{\rm stat}(q))^{1/3}
\right).
\]
\end{proof}

\section{Tightness of the Threshold-Complexity Dependence}
\label{sec:tightness}

We now show that the exact threshold-complexity corollary is tight, up to polylogarithmic factors,
along the whole curve
$\min\{\sqrt{T\Gamma_T^{\rm thr}},T^{2/3}\}$. The construction is a parameterized version
of the Walsh lower-bound instance of \citet[Section~4]{collina2026optimal}. In their instance, a  grid discretization parameter 
$m$ controls the threshold complexity: on an ordered grid of size $m$, every group family
has $\Gamma_T^{\rm thr}\ge \widetilde\Omega(m)$, while the Walsh family has
$\Gamma_T^{\rm thr}=\widetilde\Theta(m)$. The lower-bound proof of
\citet{collina2026optimal} then gives multicalibration error
$\widetilde\Omega(\sqrt{mT})$ for every $m\le T^{1/3}$.

\subsection{The Ordered-Grid Problem Instance.}
To establish the lower bound, we construct an instance where the contexts are restricted to a
uniformly spaced one-dimensional grid. Fix a power of two $m$ satisfying
$2\le m\le T^{1/3}$. 
The contexts $x_1, \dots, x_m$ are uniformly spaced in the interval $[1/4, 3/4]$:
\begin{equation}
x_i := \frac{1}{4} + \frac{i-1}{2(m-1)}, \quad i \in \{1, \dots, m\}.
\end{equation}
The learner faces these contexts in a periodic, round-robin fashion, i.e., $x_t := x_{1 + ((t-1) \pmod m)}$. The outcomes $y_t$ are generated as:
\begin{equation}
y_t = x_t + \frac{\xi_t}{4}, \quad \xi_t \in \{\pm 1\} \text{ (independent Rademacher noise)}.
\end{equation}
Under this construction, the predictable mean is simply the context itself, $\mu_t = x_t$, and the optimal regression function in the class is the identity $f^\star(x) = x$.

\subsection{Fundamental Limits of Threshold Complexity}
\begin{lemma}[Generic fine-scale threshold lower bound]
\label{lem:generic-threshold-lb}
For the ordered-grid instance above and any group family $\cG$,
\[
\Gamma_T^{\rm thr}(\mu_{1:T};\cG)\ge c'\,m\log T =\widetilde\Omega(m),
\]
for some constant $c'>0$.
\end{lemma}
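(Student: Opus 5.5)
We show that at every sufficiently fine depth $d$, any admissible partition in the definition of $M_d^{\rm thr}$ must pay $\Omega(m)$, no matter which group family $\cG$ is used. Summing over the $\Omega(\log T)$ such depths then gives the bound.

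\textbf{Step 1 (which depths count).} Adjacent grid contexts are spaced $\Delta:=1/(2(m-1))$ apart. Since $m\le T^{1/3}$, we have $\Delta\gtrsim T^{-1/3}$. Meanwhile $w_{d_{\max}}\approx T^{-1/2}$. So every depth $d$ with $w_d\le \Delta/4$ is fine relative to the grid. This range runs from roughly $d\approx\log_2 m+3$ up to $d_{\max}$. Because $\log_2 m\le \tfrac13\log_2 T$, it contains at least $(\tfrac12-\tfrac13)\log_2 T-O(1)=\Omega(\log T)$ depths. We may assume $T$ is large; for small $T$ the claim only requires adjusting $c'$.

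\textbf{Step 2 (lower bound at a fine depth).} Fix such a $d$ and an admissible partition $\cP_d$ with triples $(f_S,K_S,B_S)$ satisfying $\mathsf R_{S,d}(f_S)=0$. The zero-residual condition says $|f_S(x_t)-x_t|\le w_d\le\Delta/4$ for every $t\in S$. Two distinct grid points differ by at least $\Delta>2w_d$, so $f_S$ takes distinct values on distinct grid points appearing in $S$. Hence $K_S\ge |\{x_t:t\in S\}|$.

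Remark~\ref{rem:threshold-cost-lower-bound} gives $B_S\ge 3/4$. Since contexts arrive round-robin, a block of length $\ell$ contains at least $\min\{\ell,m\}$ distinct grid points. Therefore
\[
B_SK_S\ge \tfrac34\min\{|S|,m\}.
\]
Summing over blocks: if some block has length at least $m$, it alone contributes $\ge \tfrac34 m$. Otherwise every block is shorter than $m$, so the total is $\ge \tfrac34\sum_S|S|=\tfrac34T\ge \tfrac34 m$. In either case $M_d^{\rm thr}\ge \tfrac34 m$.

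\textbf{Step 3 (sum over depths).} Summing $M_d^{\rm thr}\ge\tfrac34 m$ over the $\Omega(\log T)$ fine depths from Step 1 gives $\Gamma_T^{\rm thr}\ge c'm\log T$.

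The main obstacle is the bookkeeping in Step 1: showing that enough depths lie in the fine range given $m\le T^{1/3}$ and $d_{\max}=\lfloor\tfrac12\log_2T\rfloor+1$. One also has to check the edge cases where an infimum over an empty admissible set is $+\infty$; these only help the lower bound.

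Note that the argument never uses the structure of $\cG$. The distinct-values constraint together with $B\ge3/4$ suffices, which is exactly why the bound holds for every group family.
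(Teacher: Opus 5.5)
Your proposal is correct and follows the paper's proof essentially step for step. Zero residual at depths where $w_d$ is below a constant fraction of the grid spacing forces $K_S\ge|\{x_t:t\in S\}|$, Remark~\ref{rem:threshold-cost-lower-bound} gives $B_S\ge\tfrac34$, and summing over the $\Omega(\log T)$ fine depths between $\log_2 m+O(1)$ and $d_{\max}$ finishes. The only cosmetic difference is that the paper gets $\sum_S|\{x_t:t\in S\}|\ge m$ directly from the fact that every grid point appears in some block, whereas your case split on block lengths reaches the same $\tfrac34 m$ through the round-robin structure.
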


\begin{lemma}[Threshold complexity of the Walsh family]\label{lem:walsh-complexity}
There exists a binary group family $\mathcal{G}_{T,m}$ of size $O(\log^3 m)$ such that for
the ordered-grid instance, the threshold complexity satisfies:
\begin{equation}
\Gamma_T^{\rm thr}(\mu_{1:T}; \mathcal{G}_{T,m}) = \widetilde{\Theta}(m).   
\end{equation}
\end{lemma}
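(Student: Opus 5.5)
We need both an upper bound and a lower bound. The lower bound $\Gamma_T^{\rm thr}\ge \widetilde\Omega(m)$ holds for every group family by Lemma~\ref{lem:generic-threshold-lb}, so the remaining work is the matching upper bound $\Gamma_T^{\rm thr}(\mu_{1:T};\cG_{T,m})=\widetilde O(m)$ for a suitably chosen family of size $O(\log^3 m)$.

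\textbf{Choosing the family.} I take $\cG_{T,m}$ to be the Walsh-type family of \citet[Section~4]{collina2026optimal}, restricted to the grid. Index the $m$ contexts by binary strings of length $\log_2 m$. For each bit position $j$, the family contains the coordinate indicator $\mathbf 1\{\text{bit}_j(x)=1\}$. It also contains indicators of agreement of pairs or triples of bits, which is what keeps the family size polylogarithmic; I would check their exact list, which has $O(\log^3 m)$ members. The property I actually need is weaker: every prefix threshold $\{x\ge x_i\}$ on the grid should be expressible as a signed combination of these groups with $\ell_1$ cost $B=\mathrm{polylog}(m)$.

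The construction I would try first follows the binary expansion of $i$. The condition $x\ge x_i$ is a disjunction over positions $j$ where $i$ has a $0$: ``the bits of $x$ agree with $i$ above $j$, and bit $j$ of $x$ is $1$,'' together with the equality case. These clauses are disjoint, so the indicator is a sum of $O(\log m)$ clause indicators. Each clause is a conjunction of prefix bits, and I would need it to be available in the family or synthesized cheaply. This is the main obstacle. A conjunction of $k$ bits generally has exponential $\ell_1$ cost in the parity basis. To avoid this, I would include prefix-conjunction groups directly, one per (position, prefix pattern consistent with $i$)---but there are $m$ prefixes, so that count is too big. The fallback is to accept an approximate representation. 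Definition~\ref{def:threshold-rep} only requires the margin $\pm(1-\eta)$, not exact equality. So I would try to show the Walsh groups yield an $h_r$ of cost $\mathrm{polylog}(m)$ with uniform error at most $1/4$, mirroring the lower-bound construction in \citet{collina2026optimal}, where thresholds are approximated by a low-cost combination whose error is controlled hierarchically.

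\textbf{Bounding $\Gamma_T^{\rm thr}$.} With representations in hand, at each depth $d$ I use the single block $[T]$. I take the score $f_d(x)$ to be $x$ rounded to the grid of width $w_d$, so that $|\mu_t-f_d(x_t)|\le w_d$ and the truncated residual vanishes. This score takes $K_d\le\min\{m,2^d+1\}$ values. Its thresholds are thresholds of $x$ on the grid, each with cost $B=\mathrm{polylog}(m)$ by the previous step. Hence $M_d^{\rm thr}\le B K_d$. Summing over the $d_{\max}+1=O(\log T)$ depths gives $\Gamma_T^{\rm thr}\le O(m\,\mathrm{polylog}(m)\log T)=\widetilde O(m)$. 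Combined with Lemma~\ref{lem:generic-threshold-lb}, this gives $\widetilde\Theta(m)$.
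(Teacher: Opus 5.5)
Your outer argument matches the paper's. The lower bound comes from Lemma~\ref{lem:generic-threshold-lb}. For the upper bound you take a single block per depth and a monotone quantization of $x$ with zero truncated residual and $K_d\le\min\{m,2^d+1\}$; its thresholds are suffixes of the ordered grid, and $\sum_d B K_d=\widetilde O(m)$.

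\textbf{The gap.} The one ingredient with real content is missing: a \emph{fixed} family of $O(\log^3 m)$ groups in which every grid suffix $\{i\ge k\}$ has a margin-$\tfrac34$ representation of cost $\mathrm{polylog}(m)$. You leave this as something you would try to show, and the pieces you do commit to point the wrong way.
\begin{itemize}
\item The family you describe (bit indicators plus pair/triple agreements) is spanned by parities of degree at most $3$, and I believe no such family can work. Take $k$ with alternating bits. After the substitution $z=i\oplus k$, $\mathbf 1\{i\ge k\}$ is essentially the ODD-MAX-BIT function on $\log_2 m$ bits. By Beigel's bound this requires weight $m^{\Omega(1)}$ in any constant-degree sign representation. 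A margin-$\tfrac34$ representation of cost $B$ over $N$ characters rounds to an integer-weight sign representation of weight $O(NB)$. So with $N=O(\log^3 m)$, the cost would be $m^{\Omega(1)}$ up to polylogarithmic factors.
\item Your diagnosis of the obstacle is also mistaken. A conjunction of $j$ literals has Walsh coefficients $\pm 2^{-j}$ on its $2^j$ sub-parities, so its Walsh $\ell_1$ norm is exactly $1$. What is large is its support, not its cost.
\end{itemize}

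\textbf{The missing idea.} That observation is the key. Combined with your disjoint decision-list decomposition, it shows every suffix indicator has Walsh $\ell_1$ norm at most $\log_2 m+1$. So over the full Walsh system (each character written as $g^{+}_\ell-g^{-}_\ell$), all thresholds are represented exactly at cost $O(\log m)$, but with $\Theta(m)$ groups. Your instinct to use the margin slack is right: it is what permits subsampling down to $O(\log^3 m)$ groups.

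Index bits from $0$ (least significant) to $n-1$, with $n=\log_2 m$.
\begin{enumerate}
\item For nonempty $U$, the coefficient of $\chi_U$ in \emph{any} suffix indicator is at most $2\cdot 2^{-(n-\min U)}$. This envelope does not depend on $k$ and has total mass $n$.
\item Draw $N$ characters i.i.d.\ from the normalized envelope and importance-reweight. For each $k$ this gives an unbiased estimator of the non-constant part, with summands bounded by $n$.
\item Hoeffding plus a union bound over the $m$ thresholds and $m$ grid points gives uniform error at most $\tfrac18$ once $N=O(n^3)=O(\log^3 m)$, with $\ell_1$ cost $O(\log m)$.
\item The map $\tilde f\mapsto 2\tilde f-1$ then yields margin $\tfrac34$.
\end{enumerate}
This is the subsampled-Walsh lemma of \citet{collina2026optimal} that the paper's proof imports. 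With that family and lemma in place of your degree-$3$ guess, the rest of your argument goes through unchanged.
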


\subsection{Tightness of Multicalibration Error}
The next theorem is a parameterized restatement of the
Walsh lower bound of \citet{collina2026optimal}. We do not reprove it from scratch; we only track the dependencies in its proof on the grid size $m$, whereas \citet{collina2026optimal} state their theorem with $m$ optimized for the worst-case bound.

\begin{theorem}[Parameterized Walsh-family lower bound]
\label{thm:collina-imported}
There exist universal constants $c,C>0$ such that, for all sufficiently large $T$ and every
power of two $m$ with $2\le m\le T^{1/3}$, every (possibly randomized) online forecaster on
the ordered-grid instance above and the Walsh family $\cG_{T,m}$ satisfies
\[
\mathbb E[\MCerr_{\cG_{T,m}}(T)]\ge c\,\frac{\sqrt{mT}}{\log^C(T+1)}.
\]
Moreover, by Lemma~\ref{lem:walsh-complexity}, this same instance has
\[
\Gamma_T^{\rm thr}(\mu_{1:T};\cG_{T,m})=\widetilde\Theta(m),
\]
so the lower bound can equivalently be written as
\[
\mathbb E[\MCerr_{\cG_{T,m}}(T)]
\ge
\widetilde\Omega\!\left(
\sqrt{T\,\Gamma_T^{\rm thr}(\mu_{1:T};\cG_{T,m})}
\right).
\]
Consequently, up to powers of two and logarithmic factors, the bound in
Corollary~\ref{cor:thresholdsimple} is tight throughout the regime
$\Gamma_T^{\rm thr}\le T^{1/3}$. Taking $m\asymp T^{1/3}$ gives the worst-case lower
bound $\widetilde\Omega(T^{2/3})$, which is the cap in Theorem~\ref{thm:main-mc}.
Equivalently, for every threshold-complexity budget $\Gamma_\star\ge 2$, there are
instances with $\Gamma_T^{\rm thr}\le \widetilde O(\Gamma_\star)$ and expected error at least
\[
\widetilde\Omega\!\left(\min\{\sqrt{T\Gamma_\star},T^{2/3}\}\right).
\]
\end{theorem}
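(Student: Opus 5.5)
The plan is to import the Walsh-family lower bound argument of \citet[Section~4]{collina2026optimal} and re-run it with the grid size $m$ kept as a free parameter, instead of fixing $m\asymp T^{1/3}$. I would first restate their construction in the notation of the ordered-grid instance: contexts visited round-robin on the $m$-point grid, $\mu_t=x_t$, Rademacher noise of magnitude $1/4$, and the family $\cG_{T,m}$ of $O(\log^3 m)$ binary groups built from Walsh characters over the $\log_2 m$ bits of the context index, possibly intersected with a few coarse prediction-independent indicators.

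The core step is to re-derive their lower bound while tracking $m$. Each context is seen $T/m$ times. Their argument shows that any forecaster faces a dichotomy. Either it predicts with resolution finer than $\asymp 1/m$, so that it effectively separates the $m$ grid means. Or it does not, in which case some Walsh group correlates with the bias and the error is of order $T/m$ per distinguishing level. In the first branch, distinguishing the means forces prediction values to be spread out, so each level set $\{t:p_t=v\}$ carries unpredictable noise. Via a Walsh/Fourier decomposition, some group in $\cG_{T,m}$ then picks up the accumulated noise on about $m$ essentially independent level sets of size $T/m$ each. This gives $\sum_v|\sum g(x_t)\xi_t/4|\gtrsim m\sqrt{T/m}=\sqrt{mT}$, up to the logarithmic loss from averaging over $O(\log^3 m)$ groups.

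The constraint $m\le T^{1/3}$ enters where the coarse branch must be at least as costly as the fine one: $T/m\ge\sqrt{mT}$ holds exactly when $m\le T^{1/3}$. So the minimum over the two branches is $\widetilde\Omega(\sqrt{mT})$. I would check each inequality in their proof for where $m=T^{1/3}$ was substituted. I expect this bookkeeping to be the main obstacle, in particular verifying that their anti-concentration and Fourier-averaging constants do not secretly depend on the balanced choice of $m$ and only lose $\log^C(T+1)$ factors.

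For the reformulations, Lemma~\ref{lem:walsh-complexity} gives $\Gamma_T^{\rm thr}=\widetilde\Theta(m)$, so $\sqrt{mT}=\widetilde\Omega(\sqrt{T\Gamma_T^{\rm thr}})$. Tightness of Corollary~\ref{cor:thresholdsimple} then follows on the range $\Gamma_T^{\rm thr}\le T^{1/3}$, where the upper bound is $\widetilde O(\sqrt{T\Gamma_T^{\rm thr}})$. For an arbitrary budget $\Gamma_\star\ge2$, take $m$ to be the largest power of two with $m\le\min\{\Gamma_\star,T^{1/3}\}$. This loses at most a factor of $2$ and gives $\Gamma_T^{\rm thr}\le\widetilde O(\Gamma_\star)$ together with expected error $\widetilde\Omega(\min\{\sqrt{T\Gamma_\star},T^{2/3}\})$. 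In particular, $m\asymp T^{1/3}$ recovers the $\widetilde\Omega(T^{2/3})$ cap.
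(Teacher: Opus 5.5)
Your plan is essentially the paper's proof. The paper likewise imports the \citet{collina2026optimal} Walsh argument with $m$ left free, concretely via three of their inequalities: $\ell_1$-truthfulness, diverse predictions, and adaptive noise-bucketing. It then splits on whether $\mathbb E[\MCerr_{\cG_{T,m}}(T)]$ lies below roughly $T/(m\log T)$ and uses $m\le T^{1/3}$ to close the large-error branch. That is exactly where your $T/m\ge\sqrt{mT}$ observation enters; the paper phrases it as $T^{2/3}\ge\sqrt{mT}$. The reformulations via Lemma~\ref{lem:walsh-complexity} and the choice of the largest power of two $m\le\min\{\Gamma_\star,T^{1/3}\}$ also coincide with the paper's.

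Your description of the noise step is off, though it does not matter since you are importing the argument. The noise is picked up by the constant group through adaptive bucketing, which loses a $\log(T+1)$ factor because level sets are chosen adaptively. It is not obtained from a Walsh/Fourier decomposition over essentially independent level sets. The bias term in that noise bound is absorbed using truthfulness.
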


Proofs of Lemmas~\ref{lem:generic-threshold-lb},~\ref{lem:walsh-complexity}, and
Theorem~\ref{thm:collina-imported} appear in
Appendix~\ref{appsec:tightness-proofs}. Conversely, plugging
Lemma~\ref{lem:walsh-complexity} into
Corollary~\ref{cor:thresholdsimple}, and converting the high-probability
bound to expectation using the failure-event argument from Theorem~\ref{thm:main-mc}, gives
the matching upper bound
\[
\mathbb E[\MCerr_{\cG_{T,m}}(T)]
\le
\widetilde O\!\left(\sqrt{mT}\right).
\]
Therefore the upper and lower bounds match, up to polylogarithmic factors, along the full
curve obtained by varying $m$ up to the critical scale $T^{1/3}$, and the largest such $m$
matches the worst-case bound.
This is consistent with the residual bound in Theorem~\ref{thm:main-mc}: the Walsh instances have exact low-residual
scores, and the hardness lies in the threshold cost of representing their thresholds rather than in
residual drift.

\subsection*{Acknowledgments}
We gratefully acknowledge support from the Simons Collaboration on Algorithmic Fairness and the NSF EnCoRE Tripods institute.

\bibliographystyle{plainnat}
\bibliography{reference}

\appendix

\section{Concentration Inequality}
\subsection{Two-stage Freedman Inequality (Lemma~\ref{lem:freedman})
}
\begin{lemma}[Two-stage Freedman inequality]
\label{lem:freedman}
Let $(\cH_{t-1},\cF_t)_{t=1}^n$ be a two-stage filtration with
$\cH_{t-1}\subseteq \cF_t\subseteq \cH_t$ for every $t$, and let $\Delta_t$ be $\cF_t$-measurable random
variables satisfying
\[
\mathbb E[\Delta_t\mid \cH_{t-1}]=0,
\qquad
|\Delta_t|\le 1
\quad\text{a.s.}
\]
for every $t$. Then for every $x,v>0$,
\[
\Pr\!\left(
\left|\sum_{t=1}^n \Delta_t\right|\ge x
\text{ and }
\sum_{t=1}^n \mathbb E[\Delta_t^2\mid \cH_{t-1}] \le v
\right)
\le
2\exp\!\left(-\frac{x^2}{2(v+x/3)}\right).
\]
\end{lemma}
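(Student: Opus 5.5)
The plan is the standard exponential-supermartingale argument for Freedman's inequality, adapted to the interleaved filtration. Write $M_k:=\sum_{t=1}^k\Delta_t$ and $V_k:=\sum_{t=1}^k\mathbb E[\Delta_t^2\mid\cH_{t-1}]$. Since $\cH_{t-1}\subseteq\cF_t\subseteq\cH_t$, each $\Delta_t$ is $\cH_t$-measurable and each conditional variance $\mathbb E[\Delta_t^2\mid\cH_{t-1}]$ is $\cH_{t-1}$-measurable. Hence $(M_k)$ is a martingale with respect to $(\cH_k)$ and $(V_k)$ is predictable with respect to it. This reduces the two-stage statement to the classical one-filtration Freedman inequality. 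We reprove that inequality briefly rather than just cite it, so that the constants come out as stated.

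\textbf{Step 1 (one-step mgf bound).} Fix $\lambda>0$ and set $\psi(\lambda):=e^\lambda-1-\lambda$. The function $u\mapsto(e^{u}-1-u)/u^2$ is nondecreasing. Since $|\Delta_t|\le1$, this gives $e^{\lambda\Delta_t}\le 1+\lambda\Delta_t+\psi(\lambda)\Delta_t^2$. Taking $\mathbb E[\cdot\mid\cH_{t-1}]$ and using the zero conditional mean yields
\[
\mathbb E[e^{\lambda\Delta_t}\mid\cH_{t-1}]\le 1+\psi(\lambda)\mathbb E[\Delta_t^2\mid\cH_{t-1}]\le \exp\bigl(\psi(\lambda)\mathbb E[\Delta_t^2\mid\cH_{t-1}]\bigr).
\]
Consequently $Z_k:=\exp(\lambda M_k-\psi(\lambda)V_k)$ satisfies $\mathbb E[Z_k\mid\cH_{k-1}]\le Z_{k-1}$, using that $Z_{k-1}$ and the $k$th variance increment are $\cH_{k-1}$-measurable. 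So $\mathbb E Z_n\le1$.

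\textbf{Step 2 (Chernoff on the joint event).} On the event $\{M_n\ge x,\ V_n\le v\}$ we have $Z_n\ge\exp(\lambda x-\psi(\lambda)v)$. Markov's inequality then gives probability at most $\exp(-\lambda x+\psi(\lambda)v)$. Choosing $\lambda=\log(1+x/v)$ gives the Bennett bound $\exp(-v\,h(x/v))$ with $h(u)=(1+u)\log(1+u)-u$. The elementary inequality $h(u)\ge u^2/(2(1+u/3))$ converts this to $\exp(-x^2/(2(v+x/3)))$.

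\textbf{Step 3 (two-sided bound).} Apply the same argument to $-\Delta_t$, which satisfies the same hypotheses, and take a union bound over the two tails. This produces the factor $2$.

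The only delicate point is the measurability bookkeeping in Step 1. We need $Z_{k-1}$ to be $\cH_{k-1}$-measurable so that it can be pulled out of the conditional expectation. That holds because $\Delta_{k-1}$ is $\cF_{k-1}\subseteq\cH_{k-1}$-measurable and the variance terms up to $k-1$ are $\cH_{k-2}\subseteq\cH_{k-1}$-measurable. We also need the convention $\cH_T=\cF_T$ so that the last term is covered. Everything else is the textbook argument.
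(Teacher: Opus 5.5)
Your proof is correct and follows the same reduction as the paper. You pass to a single filtration in which $\sum_t \mathbb E[\Delta_t^2\mid\cH_{t-1}]$ is the predictable quadratic variation, then apply the one-sided Freedman bound to $\pm M$ with a union bound. The differences are only in bookkeeping: you use $(\cH_k)$ directly, which is valid since $\cF_k\subseteq\cH_k$ and makes the paper's interleaved filtration $\widetilde{\cF}_{2t-1}=\cH_{t-1}$, $\widetilde{\cF}_{2t}=\cF_t$ with dummy zero increments unnecessary, and you reprove Freedman via the exponential supermartingale and Bennett's bound $h(u)\ge u^2/(2(1+u/3))$ rather than citing it.
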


\begin{proof}
We reduce the two-stage case to the standard Freedman inequality by constructing an interleaved filtration. Set $\widetilde{\cF}_0:=\cF_0$, and define $(\widetilde{\cF}_k)_{k=1}^{2n}$ as follows:
\begin{equation}
\widetilde{\cF}_{2t-1}:=\cH_{t-1},
\qquad
\widetilde{\cF}_{2t}:=\cF_t,
\qquad
t=1,\dots,n.    
\end{equation}
Correspondingly, define the sequence of increments $(D_k)_{k=1}^{2n}$ by $D_{2t-1} := 0$ and $D_{2t} := \Delta_t$. Let $M_k := \sum_{s=1}^k D_s$ be the associated partial sums. By construction, $D_k$ is $\widetilde{\cF}_k$-measurable and satisfies:
\begin{equation}
\mathbb{E}[D_k \mid \widetilde{\cF}_{k-1}] = 
\begin{cases} 
\mathbb{E}[0 \mid \cF_{t-1}] = 0 & \text{if } k=2t-1, \\
\mathbb{E}[\Delta_t \mid \cH_{t-1}] = 0 & \text{if } k=2t.
\end{cases}
\end{equation}
Then $(M_k,\widetilde{\cF}_k)$ is a martingale, its increments are bounded by $1$ in absolute value,
and its predictable quadratic variation at time $2n$ is
\begin{equation}
\langle M \rangle_{2n} = \sum_{k=1}^{2n} \mathbb{E}[D_k^2 \mid \widetilde{\cF}_{k-1}] = \sum_{t=1}^n \mathbb{E}[\Delta_t^2 \mid \cH_{t-1}].
\end{equation}
Apply the standard one-sided Freedman inequality to $M_k$ and to $-M_k$, then union bound the two
tails; see \citet[Theorem~1.1]{10.1214/ECP.v16-1624}.
\end{proof}
\subsection{The Global Concentration Event}\label{app:concentration}
\subsubsection{Sampling Error 
(Lemma~\ref{lem:group-sampling})
}
\begin{lemma}[Sampling error]
\label{lem:group-sampling}
There exists an event $\cE_{\rm samp}$ of probability at least $1-\frac{1}{4T}$ such that, on
$\cE_{\rm samp}$, for every $g\in\bar{\cG}$ and every interval $I\in\cD$,
\begin{equation}
\left|\sum_{t=1}^T\bigl(\mathbf 1[p_t=r_I]-\pi_{t,I}\bigr)g(x_t)(y_t-r_I)\right|
\le C_{\rm samp}\bigl(\sqrt{N_T^g(I)L}+L\bigr).    
\end{equation}
\end{lemma}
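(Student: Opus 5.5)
For fixed $g\in\bar{\cG}$ and $I\in\cD$, I would apply the two-stage Freedman inequality (Lemma~\ref{lem:freedman}) to
\[
\Delta_t:=\tfrac12\bigl(\mathbf 1[p_t=r_I]-\pi_{t,I}\bigr)g(x_t)(y_t-r_I),
\]
and then take a union bound over pairs $(g,I)$ and over a dyadic grid of variance levels.

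\emph{Martingale property.} $\Delta_t$ is $\cF_t$-measurable. The quantities $g(x_t)$ and $\pi_{t,I}$ are $\cH_{t-1}$-measurable. Conditional on $\cH_{t-1}$, the indicator $\mathbf 1[p_t=r_I]$ has mean $\pi_{t,I}$ and is independent of $y_t$, because the private coin is fresh. Hence
\[
\mathbb E[\Delta_t\mid\cH_{t-1}]=\tfrac12\,g(x_t)\,\mathbb E[y_t-r_I\mid\cH_{t-1}]\cdot\mathbb E\bigl[\mathbf 1[p_t=r_I]-\pi_{t,I}\mid\cH_{t-1}\bigr]=0.
\]
We also have $|\Delta_t|\le 1$, since both factors lie in $[-1,1]$ (the factor $\tfrac12$ is only extra room).

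\emph{Variance.} Using $g(x_t)^2\le g(x_t)$ and $|y_t-r_I|\le1$,
\[
\mathbb E[\Delta_t^2\mid\cH_{t-1}]\le g(x_t)\pi_{t,I}(1-\pi_{t,I})\le g(x_t)\pi_{t,I}.
\]
Summing over $t$ gives predictable quadratic variation at most $N_T^g(I)$.

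\emph{Handling the random variance.} $N_T^g(I)$ is random, so I would peel over the levels $v_k=2^k$ for $k=0,\dots,\lceil\log_2 T\rceil$. For each $k$, apply Lemma~\ref{lem:freedman} with $v=v_k$ and $x_k=c(\sqrt{v_kL}+L)$. Choosing $c$ large enough makes the failure probability
\[
2\exp\bigl(-x_k^2/(2(v_k+x_k/3))\bigr)\le 2\exp(-c'L)=2(eT|\bar{\cG}|)^{-c'}.
\]
On the intersection of the good events, let $k$ be the smallest index with $N_T^g(I)\le v_k$ (use $k=0$ if $N_T^g(I)\le1$). Then $v_k\le 2\max\{N_T^g(I),1\}$, and so
\[
\Bigl|\sum_t\Delta_t\Bigr|\le c\bigl(\sqrt{2N_T^g(I)L}+\sqrt{2L}+L\bigr)\le C\bigl(\sqrt{N_T^g(I)L}+L\bigr),
\]
using $L\ge1$.

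\emph{Union bound.} The number of intervals is $|\cD|\le 2^{d_{\max}+1}=O(\sqrt T)$, and there are $|\bar{\cG}|$ groups and $O(\log T)$ levels. The total failure probability is therefore
\[
O\bigl(\sqrt T\,|\bar{\cG}|\log T\,(eT|\bar{\cG}|)^{-c'}\bigr),
\]
which is at most $1/(4T)$ once $c'\ge 3$, say.

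The only delicate point is the random variance in Freedman's inequality, which the peeling handles. There is no need to worry that $I$ is active only on a random lifetime: $\pi_{t,I}=0$ off $A(I)$, so those terms vanish automatically and the sum may be taken over all of $[T]$.
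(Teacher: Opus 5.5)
Your proposal is correct and follows essentially the same route as the paper: Freedman's inequality for the two-stage filtration with conditional variance bounded by $N_T^g(I)$, peeling over dyadic variance levels up to $\lceil\log_2 T\rceil$, and a union bound over the $O(|\bar{\cG}|\sqrt{T})$ pairs $(g,I)$. The differences are cosmetic: your factor $\tfrac12$ is harmless, and your thresholds $c(\sqrt{v_kL}+L)$ replace the paper's $\sqrt{2^{m+1}\Lambda}+\tfrac23\Lambda$ with $\Lambda=O(L)$. The exponent $c'$ needs to be a somewhat larger absolute constant than $3$ for the final $1/(4T)$ count to hold at the smallest $T$ (e.g. $T=1$), which costs nothing since $c$ is free.
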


\begin{proof}
Fix a group $g \in \bar{\cG}$ and an interval $I \in \cD$. To bound the sampling error, we analyze the concentration of the sequence
\begin{equation}
    Z_t^g(I) := \bigl(\mathbf 1[p_t=r_I]-\pi_{t,I}\bigr)g(x_t)(y_t-r_I).
\end{equation}
The proof is based on an application of Freedman's inequality with the peeling technique to handle the random conditional variance.

\textbf{Step 1: Martingale properties and variance bounds.}
Let $M_T^g(I) := \sum_{t=1}^T Z_t^g(I)$. By the protocol, $Z_t^g(I)$ is $\cF_t$-measurable and satisfies the martingale property $\mathbb E[Z_t^g(I) \mid \cH_{t-1}] = 0$. The increments are bounded by $|Z_t^g(I)| \le 1$.The conditional variance is bounded as follows:
\begin{equation}
    \mathbb E[(Z_t^g(I))^2 \mid \cH_{t-1}] \le g(x_t) \pi_{t,I},
\end{equation}
which holds because $g(x_t) \in \{0,1\}$, $(y_t-r_I)^2 \le 1$, and the conditional variance of the indicator $\mathbf{1}[p_t=r_I]$ is $\pi_{t,I}(1-\pi_{t,I}) \le \pi_{t,I}$. Consequently, the predictable quadratic variation satisfies:
\begin{equation}
\begin{aligned}
    V_T^g(I) &:= \sum_{t=1}^T \mathbb E[(Z_t^g(I))^2 \mid \cH_{t-1}] \\
    & = \sum_{t=1}^T \mathbb{E}\left[ \bigl(\mathbf 1[p_t=r_I]-\pi_{t,I}\bigr)^2 g(x_t)^2(y_t-r_I)^2 \mid \cH_{t-1}\right]\\
    & \leq \sum_{t=1}^T \mathbb{E}\left[ \pi_{t,I}(1-\pi_{t,I}) g(x_t) \mid \cH_{t-1}\right]  \\
    &\le \sum_{t=1}^T g(x_t) \pi_{t,I} = N_T^g(I),
\end{aligned}
\end{equation}
where the first inequality is because $g(x_t)^2=g(x_t)$ and $(y_t-r_I)^2\le 1$.

\textbf{Step 2: Peeling over variance buckets.}
To handle the random nature of $N_T^g(I)$, we employ a peeling technique. Let $m_{\max} := \lceil \log_2 T \rceil$. For $m=0, 1, \dots, m_{\max}$, define the buckets:
\begin{equation}
\mathcal U_m := \{2^{m-1}<N_T^g(I)\le 2^m\},    
\end{equation}
with the convention $\mathcal U_0=\{N_T^g(I)\le 1\}$.
On $\mathcal U_m$ we have $V_T^g(I)\le 2^m$, so Lemma~\ref{lem:freedman} gives, for every $x\ge 0$,
\begin{equation}
\Pr\!\left(|M_T^g(I)|\ge x \text{ and } \mathcal U_m\right) \le 2\exp\!\left(-\frac{x^2}{2(2^m+x/3)}\right).    
\end{equation}

\textbf{Step 3: Union bound over buckets and $(g,I)$ pairs.}
Fix $\delta \in (0,1)$. For each bucket $m$, setting the threshold $x_m := \sqrt{2^{m+1} \Lambda} + \frac{2}{3} \Lambda$ where $\Lambda := \log(2(m_{\max}+1)/\delta)$ ensures the tail probability is at most $\delta / (m_{\max}+1)$. Since  $\sqrt{2^{m+1} \Lambda} \leq \sqrt{2\cdot 2^m \Lambda} \leq 2\sqrt{(N_T^g(I) \vee 1)\Lambda}$.
Summing over $m=0, \dots, m_{\max}$, we obtain that with probability at least $1-\delta$:
\begin{equation}
    |M_T^g(I)| \le 2\sqrt{(N_T^g(I) \vee 1) \Lambda} + \frac{2}{3} \Lambda.
\end{equation}
Using the inequality $\sqrt{N_T^g(I) \vee 1} \le \sqrt{N_T^g(I)} + 1$, and simplifying constants, we have:
\begin{equation}
    |M_T^g(I)| \le 4\left( \sqrt{N_T^g(I) \Lambda} + \Lambda \right).
\end{equation}
Finally, we perform a union bound over all $|\bar{\cG}| \cdot |\cD| \le 4|\bar{\cG}|\sqrt{T}$ pairs of $(g, I)$. Setting $\delta := (16 |\bar{\cG}| T^{3/2})^{-1}$ yields an event $\cE_{\rm samp}$ with $\Pr(\cE_{\rm samp}) \ge 1 - \frac{1}{4T}$. Since $\Lambda = O(\log(T |\bar{\cG}|)) = O(L)$, we conclude that on $\cE_{\rm samp}$:
\begin{equation}
    |M_T^g(I)| \le C \left( \sqrt{N_T^g(I) L} + L \right)
\end{equation}
simultaneously for all $g \in \bar{\cG}$ and $I \in \cD$, for a suitable absolute constant $C$.

\end{proof}

\subsubsection{Grouped Martingale Concentration}
\begin{lemma}[Grouped martingale concentration]
\label{lem:group-conc}
There exists an event $\cE_{\rm conc}$ of probability at least $1-\frac{1}{4T}$ such that, on
$\cE_{\rm conc}$, for every $g\in\bar{\cG}$, every interval $I\in\cD$, and every contiguous block
$S\subseteq[T]$,
\begin{equation}
\left| \sum_{t\in S} g(x_t)\pi_{t,I}(y_t-\mu_t)\right|
\le C_{\rm conc}\bigl(\sqrt{N_S^g(I)L}+L\bigr).    
\end{equation}
\end{lemma}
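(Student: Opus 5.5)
The plan mirrors the proof of Lemma~\ref{lem:group-sampling}: apply the two-stage Freedman inequality (Lemma~\ref{lem:freedman}) with peeling over the random conditional variance, followed by a union bound. The new ingredient is that we must also union bound over all contiguous blocks $S=[s,e]$, of which there are at most $T^2$. Fix $g\in\bar{\cG}$, $I\in\cD$ and a block $S=[s,e]$. Define
\[
Z_t:=\mathbf 1[t\in S]\,g(x_t)\pi_{t,I}(y_t-\mu_t).
\]
Since $g(x_t)$, $\pi_{t,I}$ and $\mu_t$ are all $\cH_{t-1}$-measurable, and $\mu_t=\mathbb E[y_t\mid\cH_{t-1}]$, we have $\mathbb E[Z_t\mid\cH_{t-1}]=0$. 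Moreover $Z_t$ is $\cF_t$-measurable and $|Z_t|\le 1$. The conditional variance satisfies
\[
\mathbb E[Z_t^2\mid\cH_{t-1}]\le g(x_t)\pi_{t,I}^2\le g(x_t)\pi_{t,I},
\]
so the predictable quadratic variation is at most $N_S^g(I)$. Note that here the randomness comes from $y_t$, not from sampling $p_t$; this is exactly why the two-stage filtration is needed, with $\pi_t$ fixed in $\cH_{t-1}$.

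\textbf{Peeling.} $N_S^g(I)$ is random, so we peel over the buckets $\mathcal U_m=\{2^{m-1}<N_S^g(I)\le 2^m\}$ for $m=0,\dots,\lceil\log_2T\rceil$, with $\mathcal U_0=\{N_S^g(I)\le 1\}$. On $\mathcal U_m$, Lemma~\ref{lem:freedman} with $v=2^m$ and threshold $x_m=\sqrt{2^{m+1}\Lambda}+\tfrac23\Lambda$ gives failure probability at most $\delta/(m_{\max}+1)$. On the good event this yields
\[
\Bigl|\sum_{t\in S}Z_t\Bigr|\le 4\bigl(\sqrt{N_S^g(I)\Lambda}+\Lambda\bigr),
\]
using $\sqrt{N\vee1}\le\sqrt N+1$, exactly as before.

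\textbf{Union bound.} The total count of triples $(g,I,S)$ is at most $|\bar{\cG}|\cdot 4\sqrt T\cdot T^2$. Setting $\delta=(16|\bar{\cG}|T^{7/2})^{-1}$ gives overall failure probability at most $1/(4T)$. Then $\Lambda=\log(2(m_{\max}+1)/\delta)=O(\log(T|\bar{\cG}|))=O(L)$, since $L=\log(eT|\bar{\cG}|)$ and the polynomial powers of $T$ only change constants. Absorbing constants gives $C_{\rm conc}$.

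\textbf{Main obstacle.} The only real subtlety is measurability. One must check that $\pi_{t,I}$ is $\cH_{t-1}$-measurable even though the active partition $B_t$ depends on past splits. It is, because splits are determined by $\cF_{t-1}$. Also, defining $Z_t$ with the deterministic indicator $\mathbf 1[t\in S]$ keeps $\{Z_t\}$ a martingale difference sequence. Everything else is routine bookkeeping of the enlarged union bound, which costs only constant factors inside the logarithm.
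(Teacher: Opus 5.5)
Your proposal is correct and follows the paper's proof essentially step for step. It uses the same martingale difference sequence $g(x_t)\pi_{t,I}(y_t-\mu_t)\mathbf 1[t\in S]$ with predictable variation at most $N_S^g(I)$, the same peeling thresholds $x_m=\sqrt{2^{m+1}\Lambda}+\tfrac23\Lambda$, and the same union bound over at most $4|\bar{\cG}|T^{5/2}$ triples with $\delta=(16|\bar{\cG}|T^{7/2})^{-1}$. Your measurability remark is harmless but more than needed, because $\pi_t$ belongs to $\cH_{t-1}$ by definition.
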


\begin{proof}
Fix a triple $(g, I, S)$ where $g \in \bar{\cG}$, $I \in \cD$, and $S \subseteq [T]$ is a contiguous block. The analysis proceeds in three steps, mirroring the structure of Lemma~\ref{lem:group-sampling}.

\textbf{Step 1: Martingale properties and variance bounds.}
Consider the process 
\begin{equation}
    X_t^g(I,S) := g(x_t)\pi_{t,I}(y_t-\mu_t)\mathbf{1}[t \in S].
\end{equation}
This is a two-stage martingale difference sequence with respect to $(\cH_{t-1}, \cF_t)_t$. Since $\mu_t = \mathbb{E}[y_t \mid \cH_{t-1}]$ and the weight $g(x_t)\pi_{t,I}\mathbf{1}[t \in S]$ is $\cH_{t-1}$-measurable, we have:
\begin{equation}
    \mathbb{E}[X_t^g(I,S) \mid \cH_{t-1}] = g(x_t)\pi_{t,I}\mathbf{1}[t \in S] \cdot \mathbb{E}[y_t - \mu_t \mid \cH_{t-1}] = 0.
\end{equation}
The increments are bounded by $|X_t^g(I,S)| \le 1$. The predictable quadratic variation of the sum $M_S^g(I) := \sum_{t=1}^T X_t^g(I,S)$ satisfies:
\begin{equation}
    V_S^g(I) := \sum_{t=1}^T \mathbb{E}[(X_t^g(I,S))^2 \mid \cH_{t-1}] \le \sum_{t \in S} g(x_t)\pi_{t,I} = N_S^g(I), 
\end{equation}
where we used the facts that $(y_t-\mu_t)^2 \le 1$, $g(x_t) \in \{0,1\}$, and $\pi_{t,I} \in [0,1]$.

\textbf{Step 2: Peeling over variance buckets.}
To handle the random nature of $N_S^g(I)$, we employ a peeling argument. Let $m_{\max} := \lceil \log_2 T \rceil$. For each $m=0, 1, \dots, m_{\max}$, define the variance bucket:
\begin{equation}
    \mathcal{U}_m := \{2^{m-1} < N_S^g(I) \le 2^m\}, \quad \text{with } \mathcal{U}_0 := \{N_S^g(I) \le 1\}.
\end{equation}
On the event $\mathcal{U}_m$, the quadratic variation $V_S^g(I)$ is bounded by $2^m$. Applying the two-stage Freedman inequality (Lemma~\ref{lem:freedman}), for any $x \ge 0$:
\begin{equation}
    \Pr\left( |M_S^g(I)| \ge x \text{ and } \mathcal{U}_m \right) \le 2\exp\left(-\frac{x^2}{2(2^m + x/3)}\right).
\end{equation}

\textbf{Step 3: Union bound over buckets and (g, I) pairs.}
Fix $\delta \in (0,1)$ and let $\Lambda := \log(2(m_{\max}+1)/\delta)$. Setting the threshold $x_m := \sqrt{2^{m+1}\Lambda} + \frac{2}{3}\Lambda$ and summing over all $m_{\max}+1$ buckets, we find that with probability at least $1-\delta$:
\begin{equation}
    |M_S^g(I)| \le 4 \left( \sqrt{N_S^g(I)\Lambda} + \Lambda \right).
\end{equation}
There are at most $|\bar{\cG}| \cdot |\cD| \cdot T^2 \le 4|\bar{\cG}|T^{5/2}$ such triples $(g, I, S)$. Setting $\delta := (16|\bar{\cG}|T^{7/2})^{-1}$ and performing a union bound over all triples, we obtain the event $\cE_{\rm conc}$ with probability $\Pr(\cE_{\rm conc}) \ge 1 - 1/(4T)$.

Finally, as $m_{\max} = O(\log T)$ and $\log(1/\delta) = O(\log(T|\bar{\cG}|))$, we have $\Lambda = O(L)$. Absorbing the absolute constants into $C_{\rm conc}$ yields the claimed bound.
\end{proof}

\section{Algorithmic Properties}\label{appsec:algorithm_properties}

\subsection{Weighted Feasibility (Lemma~\ref{lem:weighted-feasibility})
}\label{pf:lem:weighted}
\begin{proof}[Proof of Lemma~\ref{lem:weighted-feasibility}]
We reformulate the problem as a zero-sum game between a learner (choosing $\pi \in \Delta(B)$) and an adversary (choosing $y \in [0,1]$). Let the payoff function be defined as
\begin{equation}
F(\pi, y) := \sum_{I \in B} \pi_I \left( a_I(y - r_I) - b_I w_I \right).
\end{equation}
For any fixed $y \in [0,1]$, let $I_y \in B$ denote the unique interval containing $y$. By the definition of the grid, we have $|y - r_{I_y}| \le w_{I_y}/2$. Using the assumption $|a_{I_y}| \le b_{I_y}$, we observe
\begin{equation}
\begin{aligned}
\min_{I \in B} \left( a_I(y - r_I) - b_I w_I \right) &\le a_{I_y}(y - r_{I_y}) - b_{I_y} w_{I_y} \\
&\le |a_{I_y}| \cdot \frac{w_{I_y}}{2} - b_{I_y} w_{I_y} \\
&\le b_{I_y} \left( \frac{w_{I_y}}{2} - w_{I_y} \right) \\
& = -\frac{b_{I_y} w_{I_y}}{2} \le 0.
\end{aligned}
\end{equation}
The function $F$ is thus bilinear, and the strategy spaces $\Delta(B)$ and $[0,1]$ are compact convex sets. Thus, applying Sion's minimax theorem, we can swap the order of optimization:
\begin{equation}
\min_{\pi \in \Delta(B)} \max_{y \in [0,1]} F(\pi, y) = \max_{y \in [0,1]} \min_{\pi \in \Delta(B)} F(\pi, y) =\max_{y \in [0,1]} \min_{I \in B} \left( a_I(y - r_I) - b_I w_I \right) \leq 0
\end{equation}
Consequently, there exists a fixed $\pi \in \Delta(B)$ such that $F(\pi, y) \le 0$ for all $y \in [0,1]$. 
\end{proof}

\subsection{AdaNormalHedge Guarantee}
We now analyze the sleeping-experts wrapper from Section~\ref{sec:setup}. Concretely, we instantiate
it with the confidence-rated AdaNormalHedge algorithm of \citet{luo2015achieving}. 

\textbf{Expert construction.} 
Let $\cJ := \{(I, +), (I, -) : I \in \cD\}$ be the set of coordinates representing the positive and negative parts of the multicalibration constraints, with $M := |\cJ|$ as the total number of coordinates. Since the interval partition satisfies $|\cD| = \sum_{d=0}^{d_{\max}} 2^d \le 4\sqrt{T}$, the number of coordinates is bounded by $M \le 8\sqrt{T}$.

For every start time $s\in[T]$, group $g\in\bar{\cG}$, and coordinate $j\in\cJ$, create a
sleeping expert $e=(s,g,j)$ with prior
\begin{equation}
q_e:=\frac{1}{|\bar{\cG}|MH_{T,2}}\frac{1}{s^2},
\qquad
H_{T,2}:=\sum_{u=1}^T \frac{1}{u^2}.    
\end{equation}
Then $\sum_e q_e=1$.

\textbf{Loss for experts.}
To apply AdaNormalHedge, we map our calibration constraints into the range $[0, 1]$. Let $G = 3/2$ be a uniform bound on $|\phi_{g,I}^{\pm}(\pi_t, y_t)|$. For an expert $e = (s, g, (I, +))$, the normalized loss on an awake round is defined as:
\begin{equation}
\ell_{t,e} := \frac{G - \phi^+_{g,I}(\pi_t, y_t)}{2G}.
\end{equation}
For $e=(s,g,(I,-))$ we define the loss analogously with $\phi^-_{g,I}$ in place of
$\phi^+_{g,I}$. Let
\[
I_{t,e}:=\mathbf{1}\{s\le t,\ I\in B_t\}
\]
denote the sleeping-specialist confidence for expert $e$; in this paper these confidences are
indicators, while the factors $g(x_t)\pi_{t,I}$ enter through the expert loss itself.

\textbf{Loss for the wrapper.}
The wrapper instantiated with AdaNormalHedge maintains a set of weights $\omega_{t,e}$ over the experts $e$ that are awake at round $t$ (i.e., $I_{t,e}=1$). The wrapper loss is the weighted average $\widehat{\ell}_t := \sum_{e: I_{t,e}=1} \omega_{t,e}\ell_{t,e}$. By our normalization, this corresponds to the predicted violation $\widehat{\phi}_t=\sum_{I \in B_t} \pi_{t, I}\left(a_{t, I}\left(y_t-r_I\right)-b_{t, I} w_I\right)$ from the potential-based update:
\begin{equation}
\widehat{\ell}_t = \frac{G - \widehat{\phi}_t}{2G}.
\end{equation}
To quantify the performance relative to a specific expert $e$, we define the positive relative-loss
term up to time $b$ as:
\begin{equation}
\widetilde{L}_{b,e} := \sum_{t=1}^b I_{t,e}[\ell_{t,e} - \widehat{\ell}_t]_+.
\end{equation}
This quantity measures, up to time $b$, how much worse expert $e$ can be than the wrapper on the rounds when $e$ is awake.

\textbf{Regret guarantee.}
We now state the first-order performance guarantee of the AdaNormalHedge-based wrapper. The following bound is the confidence-rated/sleeping-specialist version of the first-order AdaNormalHedge guarantee.
\begin{proposition}[First-order confidence-rated AdaNormalHedge bound]
\label{prop:grouped-anh}
There exists an absolute constant $C_{\rm anh}>0$ such that, simultaneously for every expert $e$ and every horizon $b\le T$,
\begin{equation}
\sum_{t=1}^b I_{t,e}(\widehat \ell_t-\ell_{t,e}) \le C_{\rm anh}\left(\sqrt{\widetilde L_{b,e}L}+L\right),    
\end{equation}
where $L=\log(eT|\bar{\cG}|)$.
\end{proposition}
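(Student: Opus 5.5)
The plan is to derive Proposition~\ref{prop:grouped-anh} from the confidence-rated (sleeping-specialist) regret theorem of \citet{luo2015achieving} for AdaNormalHedge with a non-uniform prior, and then turn its ``total absolute instantaneous regret'' form into the first-order form $\widetilde L_{b,e}$ by a self-bounding argument.

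\textbf{Step 1: apply Luo--Schapire.} Fix an expert $e$ and write $r_{t,e}:=I_{t,e}(\widehat\ell_t-\ell_{t,e})\in[-1,1]$. Here $\ell_{t,e},\widehat\ell_t\in[0,1]$ because $|\phi^\pm_{g,I}|\le G$ and $\widehat\ell_t$ is the $\omega_t$-average. The confidence-rated AdaNormalHedge theorem states the following, simultaneously for all $e$ and all $b$ (the algorithm is anytime and parameter-free). Let $R_{b,e}:=\sum_{t\le b}r_{t,e}$ and $C_{b,e}:=\sum_{t\le b}|r_{t,e}|$. Then
\[
R_{b,e}\le \sqrt{3C_{b,e}\bigl(\ln(1/q_e)+\ln(B_0+\ln(1+C_{b,e}))\bigr)}
\]
for an absolute constant $B_0$. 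This requires that the wrapper's played distribution be the normalized potential-derivative weights restricted to awake experts. That is exactly how $\omega_{t,e}$ and $\widehat\ell_t$ are defined, and the identity $\widehat\ell_t=(G-\widehat\phi_t)/(2G)$ is only a relabeling. I will verify one point carefully: the theorem is stated with confidences in $[0,1]$ and arbitrary priors summing to one, and here $\sum_e q_e=1$.

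\textbf{Step 2: bound the logarithmic term.} Since $s\le T$ and $M\le 8\sqrt T$, we have $H_{T,2}\le 2$. Hence
\[
\ln(1/q_e)=\ln\bigl(|\bar\cG|MH_{T,2}s^2\bigr)\le \ln\bigl(16|\bar\cG|T^{5/2}\bigr)=O(L).
\]
Also $C_{b,e}\le T$, so $\ln(B_0+\ln(1+C_{b,e}))=O(\ln\ln T)=O(L)$. Call the resulting bound $\Lambda\le c_1L$.

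\textbf{Step 3: convert to first order.} Use the pointwise identity $|r|=r+2[-r]_+$. Since $I_{t,e}[-(\widehat\ell_t-\ell_{t,e})]_+=I_{t,e}[\ell_{t,e}-\widehat\ell_t]_+$, this gives $C_{b,e}=R_{b,e}+2\widetilde L_{b,e}$. If $R_{b,e}\le0$ there is nothing to prove. Otherwise,
\[
R_{b,e}\le\sqrt{3\Lambda(R_{b,e}+2\widetilde L_{b,e})}.
\]
Solving the quadratic in $\sqrt{R_{b,e}}$ (or using $\sqrt{x+y}\le\sqrt x+\sqrt y$ together with $\sqrt{3\Lambda R}\le R/2+3\Lambda/2$) gives
\[
R_{b,e}\le 2\sqrt{6\Lambda\widetilde L_{b,e}}+3\Lambda .
\]
With $\Lambda=O(L)$ this is the claim for a suitable $C_{\rm anh}$.

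I expect the main obstacle to be Step 1: checking that the exact published statement covers this instantiation. Specifically, the regret holds uniformly in $b$ without a doubling trick; the sleeping indicators $I_{t,e}$ may depend on the past (awake sets are determined by splits made through round $t-1$, which is allowed since the analysis is deterministic against adaptive losses); and the iterated-log constant does not hide a dependence on the number of experts. If the published bound instead has $\ln(1/q_e)$ replaced by a $\mathrm{KL}$-type term, I would specialize it to a point-mass comparator, which recovers the same form.
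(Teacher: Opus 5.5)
Your proposal is correct and follows the paper's route. You invoke the confidence-rated AdaNormalHedge bound $R_{b,e}\le\sqrt{C_{b,e}A_{b,e}}$ of Luo--Schapire, use $C_{b,e}=R_{b,e}+2\widetilde L_{b,e}$ to solve the quadratic, and bound the logarithmic cost by $O(L)$ through the prior $q_e\propto 1/(|\bar\cG|Ms^2)$. The only discrepancy is cosmetic: the published cost term is $3(\ln(1/q_e)+\ln B_b+\ln(1+\ln N))$, with $B_b\le \tfrac52+\tfrac32\ln(1+T)$ and $N\le 8|\bar\cG|T^{3/2}$ the number of experts. It therefore does depend on $N$ through an iterated logarithm, but that term is still $O(L)$, so your conclusion is unaffected.
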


\begin{proof}
We combine the confidence-rated AdaNormalHedge bound of \citet[Theorem~3]{luo2015achieving}
with their first-order conversion in \citet[Theorem~2]{luo2015achieving}. The total number of
grouped sleeping experts is
\[
N=|\bar{\cG}|MT\le 8|\bar{\cG}|T^{3/2}.
\]
For expert $e=(s,g,j)$, define the instantaneous regret
\[
r_{t,e}:=I_{t,e}(\widehat\ell_t-\ell_{t,e}),\qquad
R_{b,e}:=\sum_{t=1}^b r_{t,e},
\qquad
C_{b,e}:=\sum_{t=1}^b |r_{t,e}|.
\]
The confidence-rated AdaNormalHedge guarantee gives
\[
R_{b,e}\le \sqrt{C_{b,e}A_{b,e}},
\]
where $A_{b,e}$ captures the complexity cost of expert $e$. Moreover,
\[
C_{b,e}
=
R_{b,e}+2\widetilde L_{b,e}.
\]
If $R_{b,e}<0$, the claimed upper bound is trivial. Otherwise, solving
$R_{b,e}\le \sqrt{(R_{b,e}+2\widetilde L_{b,e})A_{b,e}}$ gives
\[
R_{b,e}\le \sqrt{2\widetilde L_{b,e}A_{b,e}}+A_{b,e},
\]
which is the first-order form used below. According to \citet{luo2015achieving}, this cost is:
\begin{equation}
    A_{b,e}:=3\left(\log\frac{1}{q_e}+\log B_b+\log(1+\log N)\right),
\end{equation}
where $B_b\leq \frac{5}{2} + \frac{3}{2}\ln(1+T) =O(\log T)$. Thus, our prior yields
\begin{equation}
\log\frac{1}{q_e} = \log|\bar{\cG}|+\log M+2\log s+\log H_{T,2} 
= O(\log(T|\bar{\cG}|)) = O(L),
\end{equation}
uniformly over all experts. Since also $\log B_b=O(\log\log T)$ and
$\log(1+\log N)=O(\log\log(T|\bar{\cG}|))$, we obtain $A_{b,e}\le C'L$ for a universal constant
$C'$. Substituting into the regret inequality yields the claim.
\end{proof}

\subsubsection{Bias Control via Regret (Lemma~\ref{lem:bias-control-via-regret})
}\label{app:biascontrolviaregret}
\begin{proof}[Proof of Lemma~\ref{lem:bias-control-via-regret}]
We prove the positive direction; the negative one is identical. Let
\[
S=[a,b]\subseteq A(I),
\]
and consider the expert
\[
e=(a,g,(I,+)).
\]
Up to horizon $b$, this expert is awake exactly on the rounds in $S$. The wrapper loss satisfies
\[
\widehat \ell_t=\frac{G-\widehat \phi_t}{2G}.
\]
Using the wrapper identity from Section~\ref{sec:setup},
\[
\widehat \phi_t
=
\sum_{I'\in B_t}
\pi_{t,I'}\bigl(a_{t,I'}(y_t-r_{I'})-b_{t,I'}w_{I'}\bigr)
\le 0
\]
by the weighted-feasibility condition (\ref{eq:weighted-feas}). Therefore the regret of the wrapper
against $e$ upper bounds the cumulative positive one-sided weighted bias:
\[
\sum_{t=1}^b I_{t,e}(\widehat \ell_t-\ell_{t,e})
=
\sum_{t\in S} \frac{\phi^+_{g,I}(\pi_t,y_t)-\widehat \phi_t}{2G}
\ge
\frac{1}{2G}\sum_{t\in S}\phi^+_{g,I}(\pi_t,y_t).
\]
Proposition~\ref{prop:grouped-anh} yields
\[
\sum_{t\in S}\phi^+_{g,I}(\pi_t,y_t)
\le
C\left(\sqrt{\widetilde L_{b,e}L}+L\right).
\]
Finally, recall $N_S^g(I)=\sum_{t\in S} g(x_t)\pi_{t,I}$, and we have
\[
\widetilde L_{b,e}
\;=\;
\sum_{t\in S}\left[\frac{\widehat \phi_t-\phi^+_{g,I}(\pi_t,y_t)}{2G}\right]_+
\le
\sum_{t\in S}\frac{(-\phi^+_{g,I}(\pi_t,y_t))_+}{2G}
\le
\frac12 \sum_{t\in S} g(x_t)\pi_{t,I}
=
\frac12 N_S^g(I),
\]
because $\widehat \phi_t\le 0$ and $|\phi^+_{g,I}(\pi_t,y_t)|\le G g(x_t)\pi_{t,I}$. Rearranging gives
\[
\sum_{t\in S} g(x_t)\pi_{t,I}(y_t-r_I-w_I)
\le
C\left(\sqrt{N_S^g(I)L}+L\right).
\]
This is the point where the first-order form of Proposition~\ref{prop:grouped-anh} is essential:
although the expert is awake throughout $S$, rounds with $g(x_t)\pi_{t,I}=0$ have
$\phi^+_{g,I}(\pi_t,y_t)=0$ and do not increase $\widetilde L_{b,e}$. Thus the regret term scales
with $N_S^g(I)=\sum_{t\in S}g(x_t)\pi_{t,I}$ rather than with $|S|$ or the active lifetime of $I$.
Adding back the width term yields
\[
\sum_{t\in S} g(x_t)\pi_{t,I}(y_t-r_I)
\le
N_S^g(I)w_I+
C\left(\sqrt{N_S^g(I)L}+L\right),
\]
which is the desired positive one-sided bound. The negative direction follows from the expert
$(a,g,(I,-))$.
\end{proof}

\subsubsection{Depth-Wise Calibration Accounting (Lemma~\ref{lem:depth-calibration})
}\label{app:depthcalibration}
\begin{proof}[Proof of Lemma~\ref{lem:depth-calibration}]
Fix a group $g$ and a depth $d$, and let $C$ be the absolute universal constant whose exact values may change from line to line. For each $I\in\cV_d$, apply
Lemma~\ref{lem:bias-control-via-regret} on $A(I)$ and then use the sampling event
$\cE_{\rm samp}$:
\[
\left|
\sum_{t:p_t=r_I}g(x_t)(y_t-r_I)
\right|
\le
C\left(N_T^g(I)w_d+\sqrt{N_T^g(I)L}+L\right).
\]
By the splitting rule, every interval that ever appears satisfies
$N(I)w_d^2\le CL$, including max-depth intervals. Since $N_T^g(I)\le N(I)$, the deterministic
term is absorbed:
\[
N_T^g(I)w_d\le C\sqrt{N_T^g(I)L}.
\]
Thus
\[
\left|
\sum_{t:p_t=r_I}g(x_t)(y_t-r_I)
\right|
\le
C\left(\sqrt{N_T^g(I)L}+L\right).
\]
Summing over $I\in\cV_d$ and using Cauchy--Schwarz gives
\[
\sum_{I\in\cV_d}\sqrt{N_T^g(I)L}
\le
\sqrt{Lm_d\sum_{I\in\cV_d}N_T^g(I)}
\le
\sqrt{TLm_d},
\]
because at each round the total mass placed on depth-$d$ active intervals is at most one. If $L>T$, the
main theorem uses the trivial worst-case bound; the depth-wise bound is only invoked in the
case $L\le T$. When
$L\le T$, the deterministic active-count cap $m_d\le C T w_d^2/L$ for $d\ge1$ (see more details about this in Appendix~\ref{app:activeprofile} proof of Lemma~\ref{lem:depth-calibration}) and $m_0=1$ imply
\[
Lm_d = \sqrt{TLm_d} \cdot \sqrt{\frac{Lm_d}{T}} \leq \sqrt{TLm_d} \cdot \sqrt{Cw_d^2} \le C\sqrt{TLm_d},
\]
so the additive term is absorbed into the first branch. 
For the second bound, use $N_T^g(I)\le CL/w_d^2$ for every $I\in\cV_d$:
\[
\sum_{I\in\cV_d}\sqrt{N_T^g(I)L}
\le
\sum_{I\in\cV_d} \frac{CL}{w_d}
=
\frac{CLm_d}{w_d}.
\]
Together with $Lm_d\le Lm_d/w_d$, this proves the second branch.
\end{proof}

\section{Bounding Splits From Approximation Structure}
\subsection{Bins Far From Score Values Are Rarely Played (Lemma~\ref{lem:high-bias-rarely-played})
}\label{app:highbiasrarelyplayed}
\begin{proof}[Proof of Lemma~\ref{lem:high-bias-rarely-played}]
Fix $I$ and assume $\delta_I>C_0Bw$; otherwise only the trivial bound is claimed. Let
$S_I:=S\cap A(I)$. 
Since $\pi_{t,I}=0$ outside $A(I)$, we have $N_S(I)=N_{S_I}(I)$. If $S_I=\emptyset$, there is
nothing to prove.

Let $h_{r_I}=\sum_{g\in\bar\cG}\alpha_g g$ be the threshold representation of $f$ at threshold
$r_I$, so $\sum_g|\alpha_g|\le B$. For every $t\in S$,
\[
h_{r_I}(x_t)(f(x_t)-r_I)\ge \frac34 |f(x_t)-r_I|\ge \frac34\delta_I.
\]
Also $|h_{r_I}(x_t)|\le B$ because the groups are binary-valued. Therefore, with
\[
\rho_t:=\bigl(|\mu_t-f(x_t)|-w\bigr)_+,
\]
we have
\[
\begin{aligned}
h_{r_I}(x_t)(\mu_t-r_I)
&=
h_{r_I}(x_t)(f(x_t)-r_I)
+h_{r_I}(x_t)(\mu_t-f(x_t))\\
&\ge
\frac34\delta_I-B|\mu_t-f(x_t)|\\
&\ge
\frac34\delta_I-Bw-B\rho_t.
\end{aligned}
\]
Summing over $S_I$ gives the lower bound
\begin{equation}\label{eq:res-lower}
A_\mu
:=
\sum_{t\in S_I}\pi_{t,I}h_{r_I}(x_t)(\mu_t-r_I)
\ge
\left(\frac34\delta_I-Bw\right)N_S(I)-B\mathsf D_{S,I}.
\end{equation}

For the upper bound,
\begin{equation}\label{eq:decompau}
A_\mu = \sum_{t\in S_I}\pi_{t,I}h_{r_I}(x_t)(y_t-r_I) - \sum_{t\in S_I}\pi_{t,I}h_{r_I}(x_t)(y_t-\mu_t).   
\end{equation}
Invoking Lemma~\ref{lem:bias-control-via-regret} and then using
$N_{S_I}^g(I)\le N_S(I)$ gives
\begin{equation}\label{eq:algbias}
\begin{aligned}
\left|\sum_{t\in S_I}\pi_{t,I}h_{r_I}(x_t)(y_t-r_I)\right|
&\leq \sum_{g \in \bar{\cG}}\left|\alpha_g\right|\left|\sum_{t \in S_I} g\left(x_t\right) \pi_{t, I}\left(y_t-r_I\right)\right|\\
&\leq \sum_g\left|\alpha_g\right| N_{S_I}^g(I) w+C_{\mathrm{loc}} \sum_g\left|\alpha_g\right| \sqrt{N_{S_I}^g(I) L}+B C_{\mathrm{loc}} L \\
& \leq B N_S(I) w+C_{\mathrm{loc}} B \sqrt{N_S(I) L}+C_{\mathrm{loc}} B L,
\end{aligned}
\end{equation}
where the square-root term uses
$\sum_g|\alpha_g|\sqrt{N_{S_I}^g(I)}\le B\sqrt{N_S(I)}$. Similarly, by
$\cE_{\rm conc}$,
\begin{equation}\label{eq:martnoise}
\begin{aligned}
    \left|
    \sum_{t\in S_I}\pi_{t,I}h_{r_I}(x_t)(y_t-\mu_t)
    \right|
    &\leq \sum_{g \in \bar{\cG}}\left|\alpha_g\right|\left|\sum_{t \in S_I} g\left(x_t\right) \pi_{t, I}\left(y_t-\mu_t\right)\right|\\
    &\leq \sum_g\left|\alpha_g\right| C_{\text{conc}}\left(\sqrt{N_{S_I}^g(I) L}+L\right) \\
    &\leq C_{\text{conc}} B \sqrt{N_S(I) L}+C_{\text{conc}} B L.
\end{aligned}
\end{equation}
Substituting (\ref{eq:martnoise}) and (\ref{eq:algbias}) into (\ref{eq:decompau}) gives
\[|A_\mu|\le BN_S(I)w + CB\bigl(\sqrt{N_S(I)L}+L\bigr),
\]
where $C$ is an absolute universal constant whose exact values may change from line to line.

Combining this upper bound with (\ref{eq:res-lower}) and choosing $C_0$ large enough to absorb all
$BwN_S(I)$ terms gives, for $\Delta_I:=\delta_I-C_0Bw>0$,
\[
\Delta_I N_S(I)
\le
CB\left(\mathsf D_{S,I}+\sqrt{N_S(I)L}+L\right).
\]
Let $z:=\sqrt{N_S(I)}$. Then
\[
\Delta_I z^2\le CB\mathsf D_{S,I}+CB\sqrt{L}\,z+CBL.
\]
Young's inequality gives
$CB\sqrt L\,z\le \frac12\Delta_Iz^2+CB^2L/\Delta_I$, hence
\[
\frac12\Delta_I z^2
\le
CB\mathsf D_{S,I}+C\frac{B^2L}{\Delta_I}+CBL,
\]
and therefore
\[
N_S(I)=z^2
\le
C\left[
\frac{B^2L}{\Delta_I^2}
+
\frac{B\mathsf D_{S,I}}{\Delta_I}
+
\frac{BL}{\Delta_I}
\right].
\]
Since $\Delta_I\le1$ and $B\ge3/4$ for nonempty blocks, the last term is absorbed into
$CB^2L/\Delta_I^2$. This proves the claim.
\end{proof}

\subsection{Split Count on One Block (Lemma~\ref{lm:amortizedsplit})
}\label{app:splitcount}
\begin{proof}[Proof of Lemma~\ref{lm:amortizedsplit}]
Let $C_f(S)=\{c_1,\dots,c_K\}$ be the realized score values on $S$, and write
$R:=\mathsf R_{S,d}(f)$. Assign each depth-$d$ interval $I$ to a nearest score
value $c_{k(I)}$, and set $\delta_I:=|r_I-c_{k(I)}|$. Let $C_0$ be the constant
from Lemma~\ref{lem:high-bias-rarely-played}. Choose
\[
Q:=C_1B+\sqrt{\frac{BRw}{KL}},
\]
where $C_1$ is a sufficiently large universal constant, so that $Q\ge2C_0B$.

The near region consists of intervals with $\delta_I\le Qw$. Since depth-$d$
midpoints are spaced by $w$, each score value has $O(Q)$ nearby intervals. Hence
the near-region contribution to $\Xi_d(S)$ is $O(KQ)$.

For the far region, define distance ranges
\[
\mathcal S_{k,\ell}:=\{I:k(I)=k,\ \ell w<|r_I-c_k|\le(\ell+1)w\},
\qquad \ell\ge \lceil Q\rceil.
\]
Each range contains $O(1)$ intervals. Since $Q\ge2C_0B$, every interval in these
ranges has
\[
\Delta_I:=\delta_I-C_0Bw\ge c\ell w
\]
for a universal constant $c>0$. Lemma~\ref{lem:high-bias-rarely-played} implies
\[
\min\left\{1,\frac{N_S(I)w^2}{L}\right\}
\le
C\left[
\frac{B^2}{\ell^2}
+
\frac{B\mathsf D_{S,I}w}{\ell L}
\right].
\]
Summing the first term over all far ranges gives
\[
K\sum_{\ell\ge \lceil Q\rceil}O\!\left(\frac{B^2}{\ell^2}\right)
\le
C\frac{KB^2}{Q}
\le
CBK,
\]
using $Q\ge C_1B$. For the residual term, use $\ell\ge Q$ and
\[
\sum_{I\in\cD_d}\mathsf D_{S,I}
=
\sum_{I\in\cD_d}\sum_{t\in S\cap A(I)}
\pi_{t,I}\bigl(|\mu_t-f(x_t)|-w\bigr)_+
\le
R,
\]
because at a fixed round the total mass assigned to active depth-$d$ intervals is
at most one. Thus the far residual contribution is at most $CBRw/(QL)$.

Combining the near and far terms,
\[
\Xi_d(S)
\le
C\left[
KQ+\frac{BRw}{QL}
\right].
\]
The chosen value of $Q$ gives
\[
KQ\le CKB+C\sqrt{\frac{BKRw}{L}},
\qquad
\frac{BRw}{QL}\le C\sqrt{\frac{BKRw}{L}}.
\]
This proves the lemma.
\end{proof}

\subsection{Active Intervals From Split Bounds (Lemma~\ref{lem:active-profile})
}\label{app:activeprofile}
\begin{proof}[Proof of Lemma~\ref{lem:active-profile}]
The root is the only active depth-$0$ interval, so $m_0=1$.

For $d\ge1$, the grid has only $2^d$ depth-$d$ intervals, hence $m_d\le2^d$.
For the resource bound, each active depth-$d$ interval is created by a split of a
depth-$(d-1)$ parent. Such a parent must have accumulated at least
$L/w_{d-1}^2$ total mass before splitting. The total mass accumulated by all
depth-$(d-1)$ intervals over the horizon is at most $T$, so the number of such
parents is at most $Tw_{d-1}^2/L$, and therefore
\[
m_d\le 2T w_{d-1}^2/L=8T w_d^2/L.
\]

It remains to prove the instance-dependent bound. Every active depth-$d$ interval
is a child of a split depth-$(d-1)$ interval, so $m_d\le2|\cA_{d-1}|$. Fix any
admissible partition $\cP_{d-1}$ and triples
$\{(f_S,K_S,B_S):S\in\cP_{d-1}\}$ for \(\Psi_{d-1}^{\rm res}\). If
$I\in\cA_{d-1}$, then
$N(I)\ge L/w_{d-1}^2$. Since $\cP_{d-1}$ partitions $[T]$,
\[
N(I)=\sum_{S\in\cP_{d-1}}N_S(I),
\]
and therefore
\[
1
\le
\sum_{S\in\cP_{d-1}}
\min\left\{1,\frac{N_S(I)w_{d-1}^2}{L}\right\}.
\]
Summing over $I\in\cA_{d-1}$ and exchanging sums,
\[
|\cA_{d-1}|
\le
\sum_{S\in\cP_{d-1}}\sum_{I\in\cD_{d-1}}
\min\left\{1,\frac{N_S(I)w_{d-1}^2}{L}\right\}.
\]
Applying Lemma~\ref{lm:amortizedsplit} to each block at depth $d-1$ gives
\[
|\cA_{d-1}|
\le
C\sum_{S\in\cP_{d-1}}
\left[
B_SK_S+
\sqrt{\frac{B_SK_S\,\mathsf R_{S,d-1}(f_S)w_{d-1}}{L}}
\right].
\]
Taking the infimum over $\cP_{d-1}$ and the scores gives
$|\cA_{d-1}|\le C\Psi_{d-1}^{\rm res}(\mu_{1:T};\cG)$, and hence
$m_d\le C\Psi_{d-1}^{\rm res}(\mu_{1:T};\cG)$. Combining the three bounds proves
the lemma.
\end{proof}

\subsection{Dyadic Summation (Lemma~\ref{lem:dyadic-summation})
}\label{app:dyadicsum}
\begin{proof}[Proof of Lemma~\ref{lem:dyadic-summation}]
It is enough to prove the bound for $d\ge1$; the root contributes $O(\sqrt{TL})$,
which is absorbed by the claimed bound since $M\ge1$.

For fixed $d$, define
\[
F_d(x):=\min\left\{\sqrt{TLx},\frac{Lx}{w_d}\right\}.
\]
The function $F_d$ is nonnegative, increasing, concave on $\mathbb R_+$, and
satisfies $F_d(0)=0$. Hence it is subadditive:
\[
F_d(x+y)\le F_d(x)+F_d(y)
\qquad\text{for all }x,y\ge0.
\]
Indeed, concavity and $F_d(0)=0$ imply $F_d(\lambda z)\ge\lambda F_d(z)$ for
$\lambda\in[0,1]$; applying this with $z=x+y$ and
$\lambda=x/(x+y)$ and $\lambda=y/(x+y)$ gives the claim.

Let $C_*$ denote the constant in the hypothesis and set
\[
u_d:=C_*\min\left\{2^d,\frac{Tw_d^2}{L},M\right\},
\qquad
v_d:=C_*\min\left\{2^d,\frac{Tw_d^2}{L},A\sqrt{\frac{w_d}{L}}\right\}.
\]
Since
\[
m_d
\le
C_*\min\left\{2^d,\frac{Tw_d^2}{L},M+A\sqrt{\frac{w_d}{L}}\right\}
\le u_d+v_d,
\]
monotonicity and subadditivity give
\[
F_d(m_d)\le F_d(u_d)+F_d(v_d).
\]
Thus it suffices to bound the two sums separately.

For the $M$ part, the grid cap can only reduce the sum, so it is enough to bound
\[
\sum_d
\min\left\{
Tw_d,\sqrt{TLM},\frac{LM}{w_d}
\right\},
\]
up to constants depending only on $C_*$. The terms $Tw_d$ and $LM/w_d$ cross at
\[
w_*=(LM/T)^{1/2}.
\]
The dyadic tails on either side are geometric, and the value at the crossover is
$\sqrt{TLM}$. Endpoint cases only make the sum smaller. Therefore the $M$ part is
$O(\sqrt{TLM})$.

If $A=0$, then the $A$ part vanishes. Otherwise, the grid cap and mass cap imply
\[
F_d(v_d)
\le
C\min\left\{
Tw_d,\ \sqrt{TA}\,L^{1/4}w_d^{1/4},\ A\sqrt L\,w_d^{-1/2}
\right\}.
\]
In particular,
\[
F_d(v_d)
\le
C\min\left\{
Tw_d,\ A\sqrt L\,w_d^{-1/2}
\right\}.
\]
These two terms cross at
\[
w_*=\left(\frac{A\sqrt L}{T}\right)^{2/3},
\]
where both are equal to $T^{1/3}L^{1/3}A^{2/3}$. The dyadic tails on both sides
of this crossover are geometric. If $w_*$ lies outside the available depth range,
the nearest endpoint gives a smaller sum. Hence the $A$ part is
$O(T^{1/3}L^{1/3}A^{2/3})$.

Combining the two sums proves the claim.
\end{proof}



\section{Proofs for Tightness of the Threshold-Complexity Bound}
\label{appsec:tightness-proofs}

\subsection{Proof of Lemma~\ref{lem:generic-threshold-lb}}
\begin{proof}[Proof of Lemma~\ref{lem:generic-threshold-lb}]
The proof demonstrates that for a sufficiently fine grid, every exact threshold-simple block must
incur a complexity proportional to the number of distinct grid points it contains.

\textbf{Step 1: Distinct grid points take distinct scores.}
Fix a depth $d$ and write $w_d = 2^{-d}$. Let $c \in (0,\frac{1}{4})$ be a small absolute constant. We focus on fine-scale depths where the grid spacing dominates the discretization error:
\begin{equation}
    w_d \le \frac{c}{m} \implies |x_{i+1} - x_i| = \frac{1}{2(m-1)} > 2 w_d.
\end{equation}
Consider any $(K_{S,d}, B_{S,d}, d)$-threshold simple block $S$ and let $X(S)$ be the set of grid indices appearing in $S$. We claim that the score function $f_{S,d}$ must assign distinct values to every distinct grid point $x_i, x_j$ for $i, j \in X(S)$.

Suppose for contradiction that $f_{S,d}(x_i) = f_{S,d}(x_j)$ for $i \neq j$. By the definition of threshold simplicity, the mean $\mu$ (which in this instance equals $x$) must satisfy $|\mu - f_{S,d}| \le w_d$. Thus,
\begin{equation}
    |x_i - x_j| \le |x_i - f_{S,d}(x_i)| + |f_{S,d}(x_j) - x_j| \le 2w_d.
\end{equation}
This contradicts our choice of $w_d$, which ensures $|x_i - x_j| > 2 w_d$. Therefore, the score function must take at least $|X(S)|$ distinct values, implying $K_{S,d} \ge |X(S)|$.

\textbf{Step 2: Lower bound threshold complexity at fine-scale depths.}
Since $B_{S,d} \ge 3/4$ for any non-empty block, the cost of any admissible partition $\mathcal{P}_d$ at this depth is:
\begin{equation}
    \sum_{S \in \mathcal{P}_d} B_{S,d} K_{S,d} \ge
\frac34 \sum_{S\in\cP_d} K_{S,d} \ge \frac{3}{4} \sum_{S \in \mathcal{P}_d} |X(S)|.
\end{equation}

Because the context sequence is round-robin and $m\le T^{1/3}\le T$, every grid point $x_i$
appears at least once during the first $T$ rounds. Since the blocks in $\cP_d$ partition $[T]$, we have $\sum_{S\in\cP_d} |X(S)|\ge m$.
This yields the lower bound for a single fine-scale depth:
\begin{equation}
M_d^{\rm thr}(\mu_{1:T}; \mathcal{G}) = \inf _{\mathcal{P}_d} \sum_{S \in \mathcal{P}_d} B_{S, d} K_{S, d} \ge \frac{3m}{4}.    
\end{equation}

\textbf{Step 3: Summation over depths.}
The condition $w_d \le c/m$ is satisfied for all depths $d \ge \log_2(m/c)$. Given that 
\[
d_{\max}=\frac12\log_2 T+O(1)
\qquad\text{and}\qquad
\log_2 m\le\frac13\log_2 T,
\]
there are $\Omega(\log T)$ depths with $w_d\le c/m$. Summing the above lower bound over those
depths gives
\[
\Gamma_T^{\rm thr}(\mu_{1:T};\cG) = \sum_{d=0}^{d_{\max}} M_d^{\rm thr}(\mu_{1:T};\cG) \ge c'\,m\log T =\widetilde\Omega(m).
\]
\end{proof}

\subsection{Proof of Lemma~\ref{lem:walsh-complexity}}
\begin{proof}[Proof of Lemma~\ref{lem:walsh-complexity}]
We construct $\mathcal{G}_{T,m}$ using the subsampled Walsh system and bound its complexity in three steps.

\textbf{Step 1: Construction of $\mathcal{G}_{T,m}$.}
We define a grid index map $\operatorname{idx}: [0,1] \to [m]$ such that $\operatorname{idx}(x_i) = i$ for each grid point. Let $\psi^{\rm Wal}_\ell: \{0, \dots, m-1\} \to \{\pm 1\}$ be the standard Walsh system. The signed Walsh features are then defined as:
\begin{equation}
    w_\ell(x) := \psi^{\rm Wal}_\ell(\operatorname{idx}(x)-1) \in \{\pm 1\}.
\end{equation}
Each feature $w_\ell$ induces two binary groups $g^{\rm Wal,+}_\ell := \frac{1+w_\ell}{2}$ and $g^{\rm Wal,-}_\ell := \frac{1-w_\ell}{2}$, allowing us to write the feature as a difference $w_\ell = g^{\rm Wal,+}_\ell - g^{\rm Wal,-}_\ell$.

The subsampled Walsh family is defined as $\mathcal{G}_{T,m} := \{\mathbf{1}\} \cup \{g^{\rm Wal,+}_\ell, g^{\rm Wal,-}_\ell : \ell \in S\}$, where $S\subseteq\{1,\ldots,m-1\}$ is a subset of indices of size $|S| = O(\log^3 m)$.

\textbf{Step 2: Logarithmic representation of discrete thresholds.}
In this proof, we use the convention $\operatorname{sgn}(u)=1$ for $u\ge0$ and
$\operatorname{sgn}(u)=-1$ for $u<0$.
The key property of $\mathcal{G}_{T,m}$ is its sparse representation of thresholds. For any threshold $r$, let $k(r):=\bigl|\{i\in[m]:x_i<r\}\bigr|$ be the number of grid points below $r$. The target sign pattern on the grid, $i \mapsto \operatorname{sgn}(f^\star(x_i) - r)$, represents a simple step function: it is $-1$ for $i \le k(r)$ and $+1$ for $i > k(r)$.

By the Walsh expansion and subsampling lemmas of \citet[Section~4]{collina2026optimal},
there exists a signed linear combination of the Walsh features:
\begin{equation}
h_r(x) = \alpha_0(r)\mathbf{1}(x) + \sum_{\ell \in S} c_\ell(r) w_\ell(x),
\end{equation}
which satisfies the following two properties.

First, for every grid point $x_i$, the approximation is within the margin required for stable decision-making:
\[
\begin{cases}
h_r(x_t) \ge \frac{3}{4} & f^*(x_t)\ge r,\\ 
h_r(x_t) \le-\frac{3}{4} & f^*(x_t)<r.\end{cases}    
\]
Second, the cost of this representation, measured by the sum of absolute coefficients, is logarithmic:
\begin{equation}
|\alpha_0(r)| \le 1 \quad \text{and} \quad \sum_{\ell \in S} |c_\ell(r)| \le C \log m,
\end{equation}
for some constant $C>0$.
This implies that every threshold of $f^\star$ has representation norm $B \le 1 + 2\sum_{\ell \in S} |c_\ell| \le C \log m$ under the family $\mathcal{G}_{T,m}$.

\textbf{Step 3: Threshold complexity on the ordered grid.}
We next upper bound the threshold complexity of this Walsh instance. Fix a depth $d$ and write
$\alpha_d := 2w_d$. We define the score function $f_d$ via a monotone quantizer:
\begin{equation}
f_d(x):=\min\left\{1,\;\alpha_d\left(\left\lfloor\frac{x}{\alpha_d}\right\rfloor+\frac12\right)\right\}.    
\end{equation}
This map takes values on a grid of spacing $\alpha_d$, is nondecreasing in $x$, and satisfies $|f_d(x)-x|\le \alpha_d/2 = w_d$ for every $x\in[0,1]$.
Thus the single-block score $f_d$ has $\mathsf R_{[T],d}(f_d)=0$.
The threshold complexity $M_d^{\rm thr}$ is then bounded by two factors:
\begin{itemize}
    \item Score count: the number of values taken by $f_d$ on the realized
contexts is at most $K_d \le \min\left\{\left\lceil \frac{1}{\alpha_d}\right\rceil+1,\;m\right\}=\min\left\{ 2^d+1,\;m\right\} \le C\min\{2^d,m\}$ values for some constant $C$.
    \item Representation cost: because $f_d$ is monotone in the ordered grid index $i$, for every threshold $r$ the set $\{i\in[m]:f_d(x_i)\ge r\}$ is a suffix of the ordered grid and its complement is a prefix. Thus, every threshold of $f_d$ is again a discrete prefix/suffix sign pattern of exactly the type handled above, so the same Walsh representation gives cost $B_d\le C\log m$.
\end{itemize}
Taking the single-block partition $\cP_d=\{[T]\}$ in the definition of $M_d^{\rm thr}$ yields
\begin{equation}
M_d^{\rm thr}(\mu_{1:T};\cG_{T,m})\le C\log m\cdot \min\{2^d,m\}.
\end{equation}
Aggregating over all depths $d = 0, \dots, d_{\max}$:
\begin{equation}
\begin{aligned}
\Gamma_T^{\rm thr}(\mu_{1:T};\cG_{T,m})
&= \sum_{d=0}^{d_{\max}} M_d^{\rm thr}(\mu_{1:T};\cG_{T,m})\\
&\le C\log m \sum_{d=0}^{d_{\max}} \min\{2^d,m\}\\
&\le C m\log m\log T\\
&= \widetilde O(m).
\end{aligned}
\end{equation}
Applying Lemma~\ref{lem:generic-threshold-lb} with $\cG=\cG_{T,m}$ gives
\[
\Gamma_T^{\rm thr}(\mu_{1:T};\cG_{T,m})=\widetilde\Theta(m).
\]
\end{proof}

\subsection{Proof of Theorem~\ref{thm:collina-imported}}
\begin{proof}[Proof of Theorem~\ref{thm:collina-imported}]
The proof is the parameterized algebra underlying the Walsh lower bound of
\citet[Section~4]{collina2026optimal}. We recall the imported ingredients, translated
into our notation. Let
\[
M:=\mathbb E[\MCerr_{\cG_{T,m}}(T)],\qquad
A:=\sum_{t=1}^T |p_t-x_t|,
\qquad
\mathsf{Pred}_T:=\{p_1,\dots,p_T\},
\qquad
N:=\sum_{v\in\mathsf{Pred}_T}\sqrt{|\{t:p_t=v\}|}.
\]
For the Walsh family on a grid of size $m$, the $\ell_1$-truthfulness lemma of
\citet[Lemma~12]{collina2026optimal}, using their subsampled-Walsh approximation
lemma~\citep[Lemma~11]{collina2026optimal}, gives
\[
\mathbb E[A]\le C_1\log(m+1)\,M.
\]
Their diverse-predictions lemma~\citep[Lemma~13]{collina2026optimal} gives
\[
\mathbb E[N]\ge \frac{T}{4\sqrt{\mathbb E[A]+T/m+1}}.
\]
Finally, their adaptive-noise-bucketing theorem and its constant-group corollary,
combined with the bias--noise decomposition in
\citet[Section~4.7]{collina2026optimal}, yield, for $L_0:=\log(T+1)$,
\[
M\ge \frac{c_2}{L_0}\,\mathbb E[N]-\mathbb E[A].
\]
These imported inequalities apply for every power-of-two $m\le T^{1/3}$; the
statement in \citet{collina2026optimal} specializes them to the optimized choice
$m\asymp T^{1/3}$.

Since $\log(m+1)\le L_0$, the first and third inequalities imply
\[
M\ge \frac{c_3}{L_0^2}\,\mathbb E[N]
\]
for a universal constant $c_3>0$. Combining this with the diversity lower bound and
the truthfulness bound yields
\[
M \ge
\frac{c_3T}{4L_0^2\sqrt{C_1L_0\,M+T/m+1}}.
\]
If $C_1L_0\,M\le T/m+1$, then, since $m\le T$,
\[
M\ge c_4\,\frac{T}{L_0^2\sqrt{T/m+1}}
\ge c_5\,\frac{\sqrt{mT}}{L_0^2}.
\]
If instead $C_1L_0\,M>T/m+1$, then
\[
M\ge \frac{c_6T}{L_0^{5/2}\sqrt{M}},
\]
and hence $M\ge c_7T^{2/3}/L_0^{5/3}$. Because $m\le T^{1/3}$, this is at least
$c_7\sqrt{mT}/L_0^{5/3}$. Thus in all cases
\[
M\ge c\,\frac{\sqrt{mT}}{\log^C(T+1)}
\]
for universal constants $c,C>0$.

The equivalence with $\sqrt{T\Gamma_T^{\rm thr}}$ follows from
Lemma~\ref{lem:walsh-complexity}. Given any target complexity budget
\(\Gamma_\star\ge2\), choose $m$ to be the largest power of two such that
$m\le\min\{\Gamma_\star,T^{1/3}\}$. Then
\[
\Gamma_T^{\rm thr}(\mu_{1:T};\cG_{T,m})\le \widetilde O(\Gamma_\star),
\]
and the lower bound above gives
\[
M
\ge
\widetilde\Omega\!\left(
\min\{\sqrt{T\Gamma_\star},T^{2/3}\}
\right).
\]
\end{proof}

\end{document}